\journal{ArXiv}
\newtheorem{example}{Example}
\newcommand{\ie}{\mbox{i.e.}}
\newcommand{\eg}{\mbox{e.g.}}
\newcommand{\cf}{\mbox{cf.}}
\newcommand{\wrt}{\mbox{w.r.t.}}
\renewcommand{\st}{\mbox{s.t.}}
\newcommand{\wolog}{\mbox{w.l.o.g.}}
\renewcommand{\ss}{\mathbb{S}}
\newcommand{\tuple}[1]{\langle #1 \rangle}
\newcommand{\defined}{\ensuremath{=_{\text{def}}}}
\newcommand{\overl}[1]{\digamma #1}
\newcommand{\extw}[1]{\langle #1 \rangle}
\newcommand{\vtl}{\vartriangleleft}
\newtheorem{definition}{Definition}
\newtheorem{proposition}{Proposition}
\newtheorem{corollary}{Corollary}
\newtheorem{remark}{Remark}
\newcommand{\Lang}{\ensuremath{\mathcal{L}}}
\newcommand{\calS}{\ensuremath{\mathcal{S}}}
\newcommand{\KB}{\ensuremath{\mathcal{K}}} 
\newcommand{\K}{\KB} 
\newcommand{\limp}{\rightarrow} 
\newcommand{\liff}{\leftrightarrow} 
\newcommand{\lbigand}{\bigwedge}
\newcommand{\limpf}{\rightarrow}
\newcommand{\sat}{\Vdash}
\newcommand{\entails}{\vDash}
\newcommand{\W}{\ensuremath{\mathcal{W}}} 
\newcommand{\km}{\ensuremath{\kappa}}
\newcommand{\kmh}{\ensuremath{\kappa_h}}
\newcommand{\kms}{\ensuremath{\kappa_S}}
\newcommand{\notf}{\neg}
\newcommand{\sentails}{\ensuremath{\leq_s}}
\newcommand{\leqsss}{\ensuremath{\leq}}
\newcommand{\lesss}{\ensuremath {<} }
\newcommand{\po}{\ensuremath{\pi}}
\newcommand{\acc}{\ensuremath{\alpha}}
\newcommand{\conc}{\ensuremath{\gamma}}
\newcommand{\pr}{\mathbb{P}}
\newcommand{\possremaind}{\Bot}
\DeclareSymbolFont{upgreek}{LGR}{cmr}{m}{n}
\DeclareMathSymbol{\Alpha}{\mathord}{upgreek}{`A}
\DeclareMathSymbol{\Beta}{\mathord}{upgreek}{`B}
\DeclareMathSymbol{\Epsilon}{\mathord}{upgreek}{`E}
\DeclareMathSymbol{\Zeta}{\mathord}{upgreek}{`Z}
\DeclareMathSymbol{\Eta}{\mathord}{upgreek}{`H}
\DeclareMathSymbol{\Iota}{\mathord}{upgreek}{`I}
\DeclareMathSymbol{\Kappa}{\mathord}{upgreek}{`K}
\DeclareMathSymbol{\Mu}{\mathord}{upgreek}{`M}
\DeclareMathSymbol{\Nu}{\mathord}{upgreek}{`N}
\DeclareMathSymbol{\Omicron}{\mathord}{upgreek}{`O}
\DeclareMathSymbol{\Rho}{\mathord}{upgreek}{`R}
\DeclareMathSymbol{\Tau}{\mathord}{upgreek}{`T}
\DeclareMathSymbol{\Chi}{\mathord}{upgreek}{`Q}
\newcommand{\remaind}{\ensuremath{\mathbf{\bot}}}
\newcommand{\lang}{\ensuremath{\mathcal{L}}}
\newcommand{\nd}{\noindent}
\newcommand{\ii}[1]{\mbox{$(#1)$}}
\newcommand{\supent}{\entails}
\newcommand{\Pentmodels}{\supent_{\pr}}
\newcommand{\Pent}{\leq_{\pr}}
\newcommand{\PentP}[1]{\leq_{{#1}}}
\newcommand{\PentPS}{\PentP{{\pr_\Sigma}}}
\newcommand{\PentL}{<_{\pr}}
\newcommand{\Pentlhd}{\lhd_{\pr}}
\newcommand{\PentPlhd}[1]{\lhd_{{#1}}}
\newcommand{\contr}{\div}
\newcommand{\ddiv}{\mathbin{{\div}\mkern-10mu{\div}}}
\newcommand{\contrs}{\ddiv}
\newcommand{\contrk}{\contr_{\km}}
\newcommand{\contrks}{\contrs_{\km}}
\newif\ifmycomment
\begin{document}

\begin{frontmatter}


\title{Belief Change based on Knowledge Measures}

\author[label1]{Umberto Straccia}
\affiliation[label1]{organization={Istituto di Scienza e Tecnologie dell'Informazione, CNR},
            city={Pisa},
            country={Italy}}

\author[label1,label2]{Giovanni Casini}
\affiliation[label2]{organization={CAIR,University of Cape Town},
            city={Cape Town},
            country={South Africa}}


\begin{abstract}
\emph{Knowledge Measures} (KMs)~\cite{Straccia20} aim at quantifying the amount of knowledge/information that a knowledge base carries. On the other hand, \emph{Belief Change} (BC)~\cite{Ferme18,Gardenfors03} is the process of changing beliefs (in our case, in terms of contraction, expansion and revision) taking into account a new piece of knowledge, which possibly may be in contradiction with the current belief.

We propose a new quantitative BC framework that is based on KMs by defining belief change operators that try to minimise, from an information-theoretic point of view, the surprise that the changed belief carries. To this end, we introduce the \emph{principle of minimal surprise}.

In particular, our contributions are \ii{i}  a general information theoretic approach to KMs for which~\cite{Straccia20} is a special case; \ii{ii} KM-based BC operators that satisfy the so-called AGM postulates; and \ii{iii} a characterisation of any BC operator that satisfies the AGM postulates as a KM-based BC operator, \ie, any BC operator satisfying the AGM postulates can be encoded within our quantitative BC framework.
We also introduce quantitative measures that account for the information loss of contraction, information gain of expansion and information change of revision. We also give a succinct look into the problem of \emph{iterated revision}~\cite{DarwichePearl1997}, which deals with the application of a sequence of revision operations in our framework, and also illustrate how one may build from our KM-based contraction operator also one not satisfying the (in)famous \emph{recovery postulate}, by focusing on the so-called \emph{severe withdrawal}~\cite{Rott99} model as illustrative example.
\end{abstract}

\begin{keyword}
Belief change \sep knowledge measures \sep information theory



\end{keyword}

\end{frontmatter}

\section{Introduction} \label{intro}
\nd A \emph{knowledge base} (KB) is the main ingredient of a \emph{knowledge-based system} (KBS), whose aim is to use its KB to reason with and make decisions within a specific application domain. A KB essentially represents facts, usually represented via a formal logic, about a specific application domain.

\emph{Knowledge Measures} (KMs)~\cite{Straccia20} aim at quantifying the amount of knowledge/information/surprise that a knowledge base carries and, thus, a KBS contains. To do so, a set of axioms have been defined for classical propositional logic that are believed KMs should satisfy. In~\cite{Straccia20} various contexts have been suggested for which KMs may be useful, among them \emph{Belief Change} (BC)~\cite{Ferme18,Gardenfors03}, which is the process of changing beliefs of a KBS (the facts about the world that a KBS knows) taking into account a new piece of knowledge that possibly may be in contradiction with the current belief.  To do so, various postulates have been defined, \eg~the so-called AGM postulates, that are believed BCs operators should satisfy.

\paragraph{Contribution} We propose here a quantitative BC model that is based on KMs: roughly, we define belief change operators that try to minimise the amount of surprise of the changed belief. To this end, we introduce the \emph{principle of minimal surprise}.
In particular, our contributions are the following: \ii{i} we present a general information theoretic approach to KMs, for which~\cite{Straccia20} is a special case, that will be propaedeutic for BC; 
\ii{ii} we then propose KM-based BC operators that satisfy the AGM postulates;  \ii{iii} we show that any BC operator that satisfies the AGM postulates can be represented as a KM-based BC operator. We also introduce quantitative measures that account for the information loss of contraction, information gain of expansion and information change of revision. 
We also start investigating \emph{Iterated Belief Revision} (IBR)~\cite{DarwichePearl1997} and show that our revision process satisfies three of the four postulates of iterated belief revision~\cite{DarwichePearl1997}, and also illustrate how one may build from our KM-based contraction operator also one not satisfying the (in)famous \emph{recovery postulate}, by focusing on the so-called \emph{severe withdrawal}~\cite{Rott99} model as illustrative example.

We proceed as follows. In the next section, we introduce the background notions we will rely on. In section~\ref{km}, we will recap KMs and propose a more general information-theoretic version. In section~\ref{kmbr} we will show how one may formulate BC using KMs, 
illustrate succinctly IBR and sever withdrawal within our framework. Eventually, section~\ref{concl} summarises our contribution, succinctly addresses related work and proposes topics for future work. Some proof can be found in the appendix.

\section{Background} \label{prelim}


\nd Let $\Sigma=\{p,q,\ldots\}$ be a finite non-empty set of propositional letters (all symbols may have an optional sup-, or sub-script or apex). $\lang_\Sigma = \{\alpha, \beta,\ldots\}$ is the set of propositional formulae based on the set of Boolean operators  $\{ \land, \lor, \neg, \rightarrow, \leftrightarrow \}$ and $\Sigma$. 
A \emph{literal} (denoted $L$) is either a propositional letter or its negation and with $\bar{L}$ we denote $\neg p$ (resp.~$p$) if $L=p$ (resp.~if $L=\neg p$). 
%
%
A \emph{knowledge base} (KB) $\K = \{\phi_1, \ldots, \phi_n\}$ is a finite set of formulae $\phi_i$ and with $\lbigand \K$ we denote the formula $\lbigand_{\phi \in \K} \phi $. In the following, whenever we write $\K$, we consider $\lbigand \K$ instead, unless stated otherwise.
Given a formula $\phi$, with $\Sigma_\phi \subseteq \Sigma$ we denote the set of propositional letters occurring in $\phi$ (we may omit the reference to $\Sigma$ if no ambiguity arises).
We define the \emph{length}  of $\phi$, denoted $|\phi|$, inductively as usual: for $p \in \Sigma$, $|p|=1$, $\neg \phi = 1 + |\phi|$, $|\phi \lor \psi| = |\phi \land \psi| = |\phi \limpf \psi| = |\phi \liff \psi| = 1 + |\phi| + |\psi|$.

%
%

An \emph{interpretation}, or \emph{world},  $w$ \wrt~$\Sigma$ is a set of literals such that all propositional letters in $\Sigma$ occur exactly once in $w$. $\W_\Sigma$ is the set of all worlds \wrt~$\Sigma$.   
%
We may also denote a world $w$ as the concatenation of the literals occurring in $w$ by replacing a negative literal $\neg p$ with $\bar{p}$ (\eg~$\{\neg p, q\}$ may be denoted also as $\bar{p}q$). 
%

With $w\sat \phi$ we indicate that the world $w$ \emph{satisfies} the formula $\phi$, \ie~$w$ is a \emph{model} of $\phi$, which is inductively defined as  usual:
 $w\sat  L$ iff $L \in w$, $w \sat \neg \phi$ iff 
 $w \not \sat \phi$, $w \sat  \phi \land \psi$ iff $w \sat  \phi$ and $w \sat  \psi$, $w \sat  \phi \lor \psi$ iff $w \sat  \phi$ or $w \sat  \psi$, $w \sat  \phi \limpf \psi$ iff $w \sat  \neg \phi \lor \psi$, and eventually $w \sat  \phi \liff \psi$ iff $w \sat ( \phi \limpf \psi) \land (\psi \limpf \phi)$.  
Furthermore, $\phi$ is \emph{satisfiable} (resp.~\emph{unsatisfiable}) if it has (resp.~has no) model. 
We will also use two special symbols $\remaind$ and $\top$:  $\remaind$ is the formula without models, while every world satisfies $\top$.  We impose that $\top$ (resp.~$\remaind$) cannot occur in any other formula different than $\top$ (resp.~$\remaind$) itself.
Given formulae $\phi$ and $\psi$, let 
$[\phi]_{\Sigma}$ be the set of models of $\phi$ \wrt~$\Sigma$; 
define $\phi \entails \psi$ iff $[\phi]_{\Sigma} \subseteq [\psi]_{\Sigma}$, and $\phi$ and $\psi$ \emph{equivalent}, denoted $\phi \equiv \psi$, iff $[\phi]_{\Sigma} = [\psi]_{\Sigma}$.

\begin{remark}\label{finitesets}
In the following, \wolog, for any set of formulae $\mathcal{S} = \{\phi_1, \phi_2,\ldots \}$, we always assume that $\phi_i \not \equiv \phi_j$ for each $i \neq j$. As a consequence, we always have $|\mathcal{S}| \leq 2^{2^{|\Sigma|}}$.
    
\end{remark}

\nd Also, let \ii{i} $\phi \leq \psi$ iff  $\phi \entails \psi$; and
\ii{ii} $\phi {<}  \psi$ iff $\phi\entails\psi$ and $\psi\not\entails \phi$. 
%
%
For $w \in \W_\Sigma$, let 
\begin{equation}\label{ffa}
\overl{w} = \lbigand_{L\in w} L \ ,
\end{equation}
\nd and for $\W \subseteq \W_\Sigma$, let 
\begin{equation}\label{ffb}
\overl{\W}  =  \bigvee_{w \in \W } \overl{w} \ ,
\end{equation}
\nd were $\overl{\emptyset} = \remaind$ and  $\overl{\W_\Sigma} = \top$.
Note that
\begin{equation}\label{ffc}
\phi \equiv \overl{[\phi]_{\Sigma}} \ .
\end{equation}




\begin{remark}  \label{remmods}
    Consider an alphabet $\bar{\Sigma} \subseteq \Sigma$ 
    and a world $\bar{w} \in \W_{\bar{\Sigma}}$ over $\bar{\Sigma}$. Let $m= |\bar{\Sigma}|$, $n = |\Sigma|$ and $k = |\Sigma\setminus \bar{\Sigma}|= n-m$. Of course, there are $2^k$ ways to \emph{extend} the world $\bar{w}$ over $\bar{\Sigma}$ into a world $w$ over $\Sigma$ (for each $p\in \Sigma\setminus \bar{\Sigma}$ add either $p$ or $\neg p$ to $\bar{w}$). With 
    $\extw{\bar{w}}_{\Sigma}$ we will denote the set of such extensions, \ie
    \begin{equation} \label{extw}
        \extw{\bar{w}}_{\Sigma}  =  \{w \in \W_\Sigma \mid \bar{w} \subseteq w \} \ .
    \end{equation}
     \nd Of course, $|\extw{\bar{w}}_{\Sigma}|  = 2^{|\Sigma\setminus \bar{\Sigma}|} = 2^k$ and 
\begin{equation} \label{eqequiv}
\overl{\bar{w}} \equiv \overl{\extw{\bar{w}}_{\Sigma}} \ .
\end{equation}

\end{remark}

\begin{remark} \label{info}
\nd We anticipate (see next section) that we consider a formula $\phi$ as a formal, possibly incomplete, specification of an actual world $w$ and the only constraint we impose on $\phi$ is that $w$ is one of the models of $\phi$, without knowing who it is. Clearly, the more models $\phi$ has the less we know about the actual word $w$.

\end{remark}

\begin{example}[Running example] \label{runexA}
    Consider the following KB about `birds' over $\Sigma = \{b, p, o, f, w \}$, where $b,p,o,f$ and $w$ stand for `bird', `penguin', `ostrich', `flies' and `wings', respectively. The KB
    \[
        \KB  = \{b, 
        b \rightarrow f, 
        p \rightarrow b, 
        o \rightarrow b, \neg (p\land o) \}
    \]
\nd states that we believe that there are birds and that they fly. Also, a penguin is a bird, and an ostrich is a bird, but a penguin cannot be an ostrich and vice-versa. The set of models $ [\KB]_\Sigma$ of $\KB$ over $\Sigma$ is
\[
[\KB]_\Sigma  =  \{
bp\bar{o}fw, 
b\bar{p}ofw, 
b\bar{p}\bar{o}fw,
bp\bar{o}f\bar{w}, 
b\bar{p}of\bar{w}, 
b\bar{p}\bar{o}f\bar{w} 
\} 
\]

\nd It is reasonable to assume that, for any strict subset $\mathcal{S} \subset [\KB]_\Sigma$ (resp.~strict superset $\mathcal{S} \supset [\KB]_\Sigma$), the formula $\overl{\calS}$ encodes more (resp.~less) `information' about the actual world than $\KB$ (see the discussion about Axiom (E) in Section \ref{kmc}).

\end{example}

\section{An information-theoretic approach to knowledge measures} \label{km}

\nd In this section, we first recap succinctly the notion of \emph{knowledge measures} (KMs) according to~\cite{Straccia20} and then propose a generalisation of it based on Shannon's information theory (see, \eg~\cite{Hankerson98}).

\subsection{Knowledge measures recap} \label{kmc}
\nd To start with, given an alphabet $\Sigma$,
a \emph{substitution} $\theta$ is a set 
$\theta = \{p_1 / L_1, \ldots$, $p_{|\Sigma|} / L_{|\Sigma|}\}$ such that each propositional letter in $\Sigma$ occurs exactly once in $ \{p_1, \ldots  p_{|\Sigma|} \}$ as well as in $\{L_1, \ldots  L_{|\Sigma|} \}$. The intuition is that propositional letters may be renamed by literals.  With 
%
%
\ii{i} $w\theta$  we indicate the interpretation obtained from $w$ by replacing every occurrence of $p_i$ in $w$ with $L_i$ (with the convention that double negations are normalised\footnote{$\notf \notf p \mapsto p$.}); and \ii{ii} for a set of interpretations $\W$, with $\W\theta$ we denote the set 
$\W\theta = \{w\theta \mid w \in \W\}$. If $\W_1$ and $\W_2$ are two sets of interpretations,  we write $\W_1 \leqsss_s \W_2$ if there is a substitution $\theta$ such that $\W_1\theta \subseteq \W_2$. If the subset relation is strict, we write $\W_1 \lesss_s \W_2$. Furthermore, we say that a formula $\phi$  \emph{s-entails} a formula $\psi$, denoted $\phi \leqsss_s \psi$, 
if $[\phi]_\Sigma \leqsss_s [\psi]_\Sigma$ holds. We write $\phi \lesss_s \psi$ if $\phi \leqsss_s \psi$ and $\psi \not \leqsss_s \phi$.\footnote{We use the same relation symbol whether ranging over sets of interpretations or formulae. We will disambiguate it whenever required.} 
Note that if $\phi \models \psi$ then $\phi \sentails \psi$, but not vice-versa~\cite{Straccia20}.
We also write $\phi \equiv_s \psi$ if $\phi \leqsss_s \psi$ and $\psi \leqsss_s \phi$ hold.

\begin{example}[\cite{Straccia20}] \label{ent}
It is easily verified that  using substitution $\theta = \{p/\bar{q}, q/p \}$, $p \leqsss_s \bar{q}$, but 
$p \not \leq \bar{q}$, \ie~$p \not \models \bar{q}$. Similarly, it is easily verified that $\bar{q} \leqsss_s p$ and, thus, $p \equiv_s \bar{q}$.

\end{example}

\nd We anticipate that the intuition behind s-entailment is to account for the idea that all propositional literals carry the same `amount of information'. For instance, in Example~\ref{ent} above, $p$ and $\bar{q}$ will carry the same amount of information as $p \equiv_s \bar{q}$~\cite{Straccia20}.

Now, there are three simple principles KMs rely on (\cf~Remark~\ref{info}).

\begin{enumerate}
\item A formula $\phi$ is considered as a formal specification of the actual world, which is one of the models of $\phi$.

\item The more $\phi$ entails the more $\phi$ knows about the actual world.

\item The more models a formula has, the more uncertain we are about which is the actual world, which in turn means the less we know about the actual world.

\end{enumerate}

\nd These principles led to the following four axioms for KMs~\cite{Straccia20}.
\nd A  KM $\km$ is a function mapping formulae into 
$\mathbb{R}^+\cup\{0,\infty\}$ satisfying:
\begin{description}
\item[(T):]  $\km(\top) = 0$ and  $\km(\remaind) = \infty$;
\item[(E):]  if~$\phi \vtl_s \psi$ then~$\km(\psi) \vtl \km(\phi)$, for $\vtl \in \{\leq,<\}$. 
\item[(L):]  $\km(\phi)$ does depend on $\Sigma_\phi$ only.
\item[(M):]  if~$\phi$ is satisfiable then 
\begin{enumerate}
    \item $0 \leq \km(\phi) \leq | \Sigma_\phi|$; and 
    \item if~$|[\phi]_{\Sigma_\phi}| =  1$ then~$\km(\phi) = |\Sigma_\phi|$.
\end{enumerate}
\end{description}

\nd Let us briefly explain the above (TELM) axioms.
%

Concerning axiom (E), assume $\phi \models \psi$. Then $\phi$ has fewer models than $\psi$, which means that there is less uncertainty about which is the actual model for $\phi$ compared to $\psi$. Moreover, whatever is entailed by $\psi$ is also entailed by $\phi$. Combining the two facts means that $\phi$ represents more information about what is the actual world than $\psi$. Concerning s-entailment, \eg~for $\phi:=p$ and $\psi:=q$ neither $\phi \models \psi$ nor $\psi \models \phi$ hold. But, according to axiom (E), we have that $\km(\phi) = \km(\psi)$ instead, \ie~$p$ and $q$ carry the same amount of knowledge: that is, according to~\cite{Straccia20}, a KM is insensitive to symbol names, \ie~a symbol $p$ carries as much knowledge as another symbol $q$. 

Concerning  axiom (T), note that if $\models \phi$  then for every formula $\psi$, we have that $\psi \models \phi$ and, thus, by axiom (E) $\km(\psi) \geq \km(\phi)$ has to hold, \ie~$\km(\phi)$ has to be as small as possible, which motivates $\km(\phi)=0$.
Analogously, if $\phi$ is unsatisfiable then
for every formula $\psi$,  we  have that $\phi \models \psi$ and, thus,  by axiom (E) $\km(\phi) \geq \km(\psi)$ has to hold. That is, $\km(\phi)$ has to be as large as possible, which motivates $\km(\phi)  = \infty$.

Concerning axiom (L), this axiom says that, to what concerns KMs, we may restrict our attention to $\Sigma_\phi$, \ie~the set of all propositional letters occurring in a formula $\phi$ and, thus, symbols not occurring in $\phi$ do not contribute to represent additional information. Note, however, that such an assumption may not be considered valid \eg~under the \emph{Closed World Assumption} (CWA)~\cite{Minker82}, where the truth of a letter not occurring in a formula is always false.

Concerning axiom (M), this axiom tells us that KMs are bounded in the sense that a satisfiable formula may not represent more information than the number of symbols it relies on and the bound is reached only if the formula exactly describes the actual world.

It has been shown in~\cite{Straccia20} that a KM satisfying the axioms above is
\begin{equation} \label{kmhs}
\boxed{ 
\kmh(\phi) =  |\Sigma_\phi| - \log_2 |[\phi]_{\Sigma_\phi}| 
} 
\end{equation}
\nd where $\kmh(\phi) = \infty$, if $\phi$ is not satisfiable.

\begin{example}[Running example cont.] \label{runexB}
    Consider the KB $\KB$ in Example~\ref{runexA}. 
Then  we have 
     $\Sigma_\KB = \{b,p,o,f\}$, and 
     $[\KB]_{\Sigma_\KB}  =  \{bp\bar{o}f,b\bar{p}of, b\bar{p}\bar{o}f\}$ 
     and, thus,
\[
 \kmh(\KB) = 4 - \log_2 3 \approx 2.415 \ . 
\]


\end{example}


\subsection{An information-theoretic approach to knowledge measures} \label{kmp}

\nd We next show how we are going to generalise the above notion of KMs. Specifically, we revert to the well-known notion of \emph{information theory} (see, \eg~\cite{Hankerson98}), where the core idea is that the `informational value' of a communicated message (in our case a formula) depends on the degree to which the content of the message is \emph{surprising}. If a highly likely event occurs, \ie~the probability of a formula is high, the message carries very little information. On the other hand, if a highly unlikely event occurs, the message is much more informative.

As before, we still assume that one of the worlds is the actual world and we may be uncertain about which one it is exactly. However, now a probability distribution over the worlds is given. Of course, the assumption underlying the characterisation of KM proposed in~\cite{Straccia20} is that of a uniform probability distribution as each world is equally plausible.

Formally, consider an alphabet $\bar{\Sigma} \subseteq \Sigma$. We will assume that a probability distribution $\pr_{\bar{\Sigma}}$ over the worlds in $\W_{\bar{\Sigma}}$, $\pr_{\bar{\Sigma}} \colon \W_{\bar{\Sigma}} \mapsto [0,1]$ is given. The intended meaning of $\pr_{\bar{\Sigma}}(w)$ is to indicate how probable it is that $w$ is indeed the actual world.
A world $w$ is \emph{possible}, if it has non-zero probability, \ie~$\pr_{\bar{\Sigma}}(w) >0$.
For ease of presentation, if no ambiguity arises, we may also denote $\pr_{\bar{\Sigma}}(w)=n$ as $w^{n}$ and, thus, \eg~may simply write $w^{0.6}$ in place of $\pr_{\bar{\Sigma}}(w)=0.6$.
So, we may also write $\pr_{\bar{\Sigma}}$ as a set 
$\{w_1^{p_1}, \ldots,  w_k^{p_k}\}$, where $\pr_{\bar{\Sigma}}(w_i) = p_i$, and the convention that worlds not occurring in that set have $0$ probability.
The \emph{uniform probability distribution} is, of course, defined as $\pr^u_{\bar{\Sigma}}(\bar{w}) = 1/|\W_{\bar{\Sigma}}|$. 

Now, the probability that one of the models of $\phi$ is indeed the actual world, called the \emph{probability of a formula} $\phi$ \wrt~the alphabet $\bar{\Sigma}$, denoted $\pr_{\bar{\Sigma}}(\phi)$, is defined as 
%
\begin{eqnarray} \label{prphi}
\pr_{\bar{\Sigma}}(\phi) & = & \sum_{\bar{w}\in [\phi]_{\bar{\Sigma}} } \pr_{\bar{\Sigma}}(\bar{w}) \ .
\end{eqnarray}
\nd Note that $\pr_{\bar{\Sigma}}(\top)=1$, $\pr_{\bar{\Sigma}}(\remaind)=0$ and, if $\pr_{\bar{\Sigma}}$ is the uniform probability distribution, then $\pr_{\bar{\Sigma}}(\phi) = |[\phi]_{\bar{\Sigma}}|/|\W_{\bar{\Sigma}}|$.
We may also simply write $\phi^p$ in place of $\pr_{\bar{\Sigma}}(\phi) = p$ if clear from the context. With $[\phi]_{\bar{\Sigma}}^{+}$ we denote the \emph{set of possible models} of $\phi$, \ie~the set of models of $\phi$ that have non-zero probability: \ie
\[
[\phi]_{\bar{\Sigma}}^{+}=\{w\in \W_{\bar{\Sigma}} \mid w\models \phi \text{ and } \pr_{\bar{\Sigma}}(w) > 0\} \ .
\]

\begin{remark}\label{remprolog}
Please note that two formulae $\phi$ and $\psi$, aimed at describing information about the actual world (\ie~one of the worlds in $\W_{\Sigma}$), share the \emph{same} probability distribution $\pr_{\Sigma}$ over worlds. The latter will induce a probability $\pr_{\Sigma}(\phi)$ (resp.~$\pr_{\Sigma}(\psi)$) indicating how likely one of the models of $\phi$  (resp.~$\psi$) is indeed the actual world. This will then allow us to compare the amount of information encoded by $\phi$ and $\psi$ on equal terms, \ie~based on the same probability distribution.

Note also that a formula $\phi$  does not impose restrictions on $\pr_{\Sigma}$,  but we only expect that $\phi$ is `compatible' with the scenario depicted by $\pr_{\Sigma}$:  that is, one of the models of $\phi$ has a non-zero probability. Moreover, what is considered an impossible world according to $\pr_\Sigma$ has to be false (see $\pr$-entailment later on). 

This is different from numerous probabilistic logics proposed in the literature, 
(see \eg~\cite{Halpern17}) in which typically a probabilistic KB is a formal specification about probabilities and/or subjective probabilities of worlds. We do not consider this scenario here and leave it for future work.

\end{remark}

\nd Next, we are also going to use some additional notions. Specifically, the \emph{extension} $\pr_{\Sigma}$ of $\pr_{\bar{\Sigma}}$ to $\Sigma$ is defined in the following way: consider
$\bar{w} \in \W_{\bar{\Sigma}}$ and an extension $w \in \extw{\bar{w}}_\Sigma$ of $\bar{w}$ (see  Remark~\ref{remmods}).
Then, 
\begin{eqnarray}\label{sigmapB}
\pr_{\Sigma}(w) & =  & \frac{1}{|\extw{\bar{w}}_\Sigma|} \cdot \pr_{\bar{\Sigma}}(\bar{w}) \ 
\end{eqnarray}
\nd so that $\pr_{\bar{\Sigma}}$ is the \emph{marginal distribution}  of $\pr_{\Sigma}$,
\ie
\begin{eqnarray}\label{sigmapbisB}
\pr_{\bar{\Sigma}}(\bar{w}) & =  & \sum_{w \in \extw{\bar{w}}_\Sigma} \pr_\Sigma(w) \ .
\end{eqnarray}

\begin{proposition} \label{prsigmapB}
For a formula $\phi$, $\Sigma_\phi \subseteq \bar{\Sigma} \subseteq \Sigma$ and a probability distribution $\pr_{\bar{\Sigma}}$, we have that
$\pr_\Sigma(\phi)  =   \pr_{\bar{\Sigma}}(\phi)$,
where $\pr_\Sigma$ is the extension of $\pr_{\bar{\Sigma}}$.
\end{proposition}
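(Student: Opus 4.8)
The plan is to expand both sides and reduce the identity to the marginalisation property \eqref{sigmapbisB} that $\pr_\Sigma$ was constructed to satisfy. First I would unfold the left-hand side using the definition of the probability of a formula, equation \eqref{prphi}, now read over the full alphabet $\Sigma$:
\[
\pr_\Sigma(\phi) = \sum_{w \in [\phi]_\Sigma} \pr_\Sigma(w) \ .
\]
The key structural fact I would establish next is that the models of $\phi$ over $\Sigma$ decompose as a \emph{disjoint} union of the extension sets of its models over $\bar{\Sigma}$, namely
\[
[\phi]_\Sigma = \bigsqcup_{\bar{w} \in [\phi]_{\bar{\Sigma}}} \extw{\bar{w}}_\Sigma \ .
\]
This rests squarely on the hypothesis $\Sigma_\phi \subseteq \bar{\Sigma}$: since $\phi$ mentions only letters of $\bar{\Sigma}$, a world $w \in \W_\Sigma$ satisfies $\phi$ if and only if the unique $\bar{w} \in \W_{\bar{\Sigma}}$ with $w \in \extw{\bar{w}}_\Sigma$ (i.e.~the restriction of $w$ to $\bar{\Sigma}$) satisfies $\phi$. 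Disjointness is immediate, because every full world lies in exactly one extension set $\extw{\bar{w}}_\Sigma$.

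With this partition in hand I would reindex the single sum over $[\phi]_\Sigma$ as a double sum, first over $\bar{w} \in [\phi]_{\bar{\Sigma}}$ and then over $w \in \extw{\bar{w}}_\Sigma$, and collapse the inner sum via the marginalisation identity \eqref{sigmapbisB}:
\[
\pr_\Sigma(\phi) = \sum_{\bar{w} \in [\phi]_{\bar{\Sigma}}} \ \sum_{w \in \extw{\bar{w}}_\Sigma} \pr_\Sigma(w) = \sum_{\bar{w} \in [\phi]_{\bar{\Sigma}}} \pr_{\bar{\Sigma}}(\bar{w}) = \pr_{\bar{\Sigma}}(\phi) \ ,
\]
where the last equality is again the definition \eqref{prphi}, now applied over $\bar{\Sigma}$. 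Note that the degenerate case of an unsatisfiable $\phi$ needs no separate treatment: both indexing sets are then empty and both sides equal $0$.

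The only genuinely non-routine step is the decomposition of $[\phi]_\Sigma$; everything else is bookkeeping over finite sums. Accordingly, I expect the main obstacle to be stating cleanly the coincidence lemma underlying it, namely that satisfaction of $\phi$ depends only on the truth values of the letters in $\Sigma_\phi$, so that the choice of values for the letters in $\Sigma \setminus \bar{\Sigma}$ never affects whether $\phi$ holds. One may either invoke this as a folklore property of propositional satisfaction or prove it by a short induction on the structure of $\phi$ following the inductive clauses for $\sat$. As a shortcut, observe that the formula-level counterpart $\overl{\bar{w}} \equiv \overl{\extw{\bar{w}}_\Sigma}$ is already recorded in \eqref{eqequiv}, which can be leveraged to obtain the desired model-set partition directly.
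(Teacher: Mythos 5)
Your proposal is correct and is essentially the paper's own proof: the same chain of equalities via the definition of formula probability (Eq.~\ref{prphi}) and the marginalisation identity (Eq.~\ref{sigmapbisB}), only written from $\pr_\Sigma(\phi)$ toward $\pr_{\bar{\Sigma}}(\phi)$ rather than the reverse. The only difference is that you make explicit the disjoint decomposition $[\phi]_\Sigma = \bigcup_{\bar{w}\in[\phi]_{\bar{\Sigma}}}\extw{\bar{w}}_\Sigma$ (resting on $\Sigma_\phi\subseteq\bar{\Sigma}$), which the paper's proof uses silently in its re-indexing step.
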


\nd In particular, by Proposition~\ref{prsigmapB}, for any formula $\phi$, 
\begin{eqnarray} \label{prsphiB}
\pr_{\Sigma}(\phi) & = & \pr_{\Sigma_\phi}(\phi) \ ,
\end{eqnarray}
\nd where $\pr_\Sigma$ is the extension of $\pr_{\Sigma_\phi}$.
In summary, extension and marginalisation are two operations that allow us to `extend' or to `restrict' probability distributions over alphabets $\bar{\Sigma} \subseteq \Sigma$.



\begin{example}[Running example cont.]\label{runexC}
Consider Example~\ref{runexA}. Let us assume that we have the following probability distribution $\pr_\Sigma$ \wrt~$\Sigma = \{b,p,o,f,w\}$~\footnote{If a world is not listed, then its probability is $0$.}:
\begin{eqnarray*}
    \pr_\Sigma & \defined & 
    \{bp\bar{o}fw^{0.1}, b\bar{p}ofw^{0.1}, b\bar{p}\bar{o}fw^{0.15}, bp\bar{o}\bar{f}w^{0.15}, \\ 
    && b\bar{p}o\bar{f}w^{0.2}, b\bar{p}\bar{o}\bar{f}w^{0.2}, \bar{b}p\bar{o}fw^{0.07} ,  \bar{b}p\bar{o}f\bar{w}^{0.03}  \} \ . 
\end{eqnarray*}
\nd Then, the marginalisation of $\pr_\Sigma$ to $\Sigma_\KB$ is
  \begin{eqnarray*}
    \pr_{\Sigma_\KB} & \defined & 
    \{bp\bar{o}f^{0.1}, b\bar{p}of^{0.1}, b\bar{p}\bar{o}f^{0.15}, bp\bar{o}\bar{f}^{0.15}, \\
    &&   b\bar{p}o\bar{f}^{0.2}, b\bar{p}\bar{o}\bar{f}^{0.2}, \bar{b}p\bar{o}f^{0.1} \} \ .
  \end{eqnarray*}
\nd Therefore, \wrt~$\pr_{\Sigma_\KB}$ we have that the following probabilities of formulae:
  \[
  \begin{array}{l}
    b^{0.9}, p^{0.35} , o^{0.3}, f^{0.45}, w^{1.0}, (p\land o)^{0.0}, \\
    (b\rightarrow f)^{0.45}, (p\rightarrow b)^{0.9}, (o\rightarrow b)^{1.0}
  \end{array}
  \]
  \nd and, in particular, 
  $\pr_{\Sigma}(\KB) = \pr_{\Sigma_\KB}(\KB) = 0.35$.
  That is, the probability that one of the models of $\KB$ is indeed the actual world is $0.35$ 
  \wrt~$\pr_\Sigma$.
\end{example}

\nd As next, we extend the notion of entailment to the case in which we have a probability distribution over worlds.
To do so, we define the following supraclassical monotonic entailment relation: consider formulae $\phi$ and $\psi$ \wrt~$\Sigma$, a probability distribution $\pr$ over $\W_{\Sigma}$. Let us define the formula
\begin{equation}
  \pr0 = \overl{{\{w \mid \pr(w)=0, w \in \W_{\Sigma} \}}}   
\end{equation}
 
\nd as the disjunction of all worlds having $0$ probability. So, $\pr(\pr0)=0$ and, thus, $\pr(\neg \pr0)=1$.

Then the notion of $\phi$ \emph{$\pr$-entails} $\psi$, denoted $\phi \Pent \psi$ or also $\phi \Pentmodels \psi$, is defined as the case in which all non-zero $\pr$-probability models of $\phi$ are $\psi$ models: \ie
\begin{equation}
\phi \Pent \psi \text{ iff } \phi \land \neg \pr0 \entails \psi,    
\end{equation}
%
\nd That is, $\phi \Pent \psi$ iff $[\phi]^+_\Sigma  \subseteq [\psi]_\Sigma$. We also define \ii{i} $\phi \PentL  \psi$ iff $\phi\Pent\psi$ and $\psi \not\Pent \phi$; and \ii{ii} $\phi\equiv_{\pr} \psi$ ($\phi$ and $\psi$ are \emph{$\pr$-equivalent}) iff $\phi\Pent \psi$ and $\psi\Pent \phi$.  We say that $\phi$ is \emph{$\pr$-consistent}, 
or also \emph{$\pr$-satisfiable} iff $\phi \not \Pent \bot$.





\begin{remark} \label{nonz}
Note that $\phi \land \neg \pr0$ has a model only iff $\phi$ has a non-zero probability model, \ie~$\phi$ is $\pr$-satisfiable. Therefore,
$\phi\PentL \psi$ tell us that all non-zero $\pr$-probability models of $\phi$ are $\psi$ models and that there is a non-zero $\pr$-probability model of $\psi$ that is not a model of $\phi$. 
\end{remark}

\begin{remark}[Classical case]\label{claskm}
Classical propositional logic can be obtained as a special case if, \eg~the  uniform probability distribution $\pr^u$ is considered.  In that case, $\phi \PentP{\pr^u}\psi$  iff $\phi \entails \psi$ holds. 

However, this result does not generalise to any probability distribution $\pr$: 
while $\phi \models \alpha$ implies $\phi \Pent \alpha$, the opposite is not true. In fact, \eg, for 
$\phi\defined a$ and $\pr(a\bar{b})=0$, we have $\phi \land \neg \pr0 \defined a \land \neg (a \land \neg b) \models b$ and thus, $\phi \Pent b$, but $\phi \not \entails b$.    
\end{remark}


\nd The following Propositions can easily be shown.

\begin{proposition}\label{propextconf}
Consider formulae $\phi$ and $\psi$, $\Sigma_\phi \cup \Sigma_\psi \subseteq \bar{\Sigma} \subseteq \Sigma$,
a probability distribution $\pr$ over $\bar{\Sigma}$, and the extension $\pr_{\Sigma}$  of  $\pr$  to $\Sigma$. Then for $\lhd\in\{\leq,<\}$, $\phi \Pentlhd \psi$ iff $\phi \PentPlhd{\pr_{\Sigma}} \psi$.
\end{proposition}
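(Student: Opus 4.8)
The plan is to reduce both halves of the biconditional to a single invariance fact already available in the excerpt, namely Proposition~\ref{prsigmapB}, which guarantees that the probability of a formula is unchanged when the distribution is extended from $\bar{\Sigma}$ to $\Sigma$. The bridge is the observation that $\pr$-entailment can be rephrased as the vanishing of the probability of one auxiliary formula, so that the whole argument collapses to comparing $\pr_{\bar{\Sigma}}(\phi\land\neg\psi)$ with $\pr_{\Sigma}(\phi\land\neg\psi)$.

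First I would handle the non-strict case $\lhd={\leq}$. Recall from the text that $\phi \Pent \psi$ holds iff $[\phi]^{+}_{\bar{\Sigma}} \subseteq [\psi]_{\bar{\Sigma}}$, i.e.\ every non-zero-probability model of $\phi$ satisfies $\psi$. Equivalently, no possible model of $\phi$ satisfies $\neg\psi$; and since by \eqref{prphi} the quantity $\pr_{\bar{\Sigma}}(\phi\land\neg\psi)$ is a sum of non-negative terms, it vanishes precisely when every model of $\phi\land\neg\psi$ has zero probability. Hence
\[
\phi \Pent \psi \quad\text{iff}\quad \pr_{\bar{\Sigma}}(\phi\land\neg\psi)=0 .
\]
The same reasoning applied with the extended distribution gives $\phi \PentP{\pr_{\Sigma}} \psi$ iff $\pr_{\Sigma}(\phi\land\neg\psi)=0$. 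Because $\Sigma_{\phi\land\neg\psi}\subseteq \Sigma_\phi\cup\Sigma_\psi\subseteq\bar{\Sigma}$, Proposition~\ref{prsigmapB} yields $\pr_{\Sigma}(\phi\land\neg\psi)=\pr_{\bar{\Sigma}}(\phi\land\neg\psi)$, so the two probability-zero conditions coincide and therefore $\phi \Pent \psi$ iff $\phi \PentP{\pr_{\Sigma}} \psi$.

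The strict case $\lhd={<}$ then follows with no extra work. By definition $\phi \PentL \psi$ means $\phi \Pent \psi$ and $\psi\not\Pent\phi$, and analogously for $\PentPL{\pr_{\Sigma}}$. Applying the non-strict equivalence once to the pair $(\phi,\psi)$ and once to $(\psi,\phi)$ gives $\phi\Pent\psi \Leftrightarrow \phi\PentP{\pr_{\Sigma}}\psi$ and $\psi\Pent\phi \Leftrightarrow \psi\PentP{\pr_{\Sigma}}\phi$; negating the second and conjoining with the first yields $\phi \PentL \psi$ iff $\phi \PentPL{\pr_{\Sigma}} \psi$. This settles both values of $\lhd$.

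I do not expect a genuine obstacle here: once Proposition~\ref{prsigmapB} is in hand the argument is essentially bookkeeping. The only points demanding care are \ii{i} justifying the reformulation of $\pr$-entailment as the probability-zero statement $\pr(\phi\land\neg\psi)=0$, which hinges on non-negativity of the summands in \eqref{prphi}, and \ii{ii} checking that the signature hypothesis $\Sigma_{\phi\land\neg\psi}\subseteq\bar{\Sigma}$ of Proposition~\ref{prsigmapB} is met. If one preferred a direct, model-theoretic route avoiding probabilities of formulae, one could instead establish from the extension formula~\eqref{sigmapB} that $[\phi]^{+}_{\Sigma}=\bigcup_{\bar{w}\in[\phi]^{+}_{\bar{\Sigma}}}\extw{\bar{w}}_{\Sigma}$ (a world over $\Sigma$ is a possible model of $\phi$ iff its restriction to $\bar{\Sigma}$ is, using $\Sigma_\phi\subseteq\bar{\Sigma}$ and the positivity of the factor $1/|\extw{\bar{w}}_{\Sigma}|$), and then transfer the inclusion $[\phi]^{+}\subseteq[\psi]$ back and forth using $\Sigma_\psi\subseteq\bar{\Sigma}$; but the probability route above is shorter and reuses the already-proved Proposition~\ref{prsigmapB}.
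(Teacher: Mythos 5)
Your proof is correct, but it takes a genuinely different route from the paper's. The paper argues directly at the level of worlds: from the definition of the extension (Eq.~\ref{sigmapB}) it observes that a world $w\in\extw{\bar{w}}_{\Sigma}$ has zero $\pr_{\Sigma}$-probability exactly when $\bar{w}$ has zero $\pr$-probability, so the conditioning formulae $\phi\land\neg\pr0$ and $\phi\land\neg{\pr_{\Sigma}}0$ have corresponding model sets and the two entailment relations coincide, the strict case being dispatched by the same world-level observation (via Remark~\ref{nonz}). You instead convert entailment into a scalar statement, $\phi \Pent \psi$ iff $\pr_{\bar{\Sigma}}(\phi\land\neg\psi)=0$ (sound, by non-negativity of the summands in the definition of the probability of a formula), and then transfer it through the already-proved invariance $\pr_{\Sigma}(\phi\land\neg\psi)=\pr_{\bar{\Sigma}}(\phi\land\neg\psi)$ of Proposition~\ref{prsigmapB}, whose signature hypothesis $\Sigma_{\phi\land\neg\psi}\subseteq\bar{\Sigma}$ you correctly check. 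What your route buys: it reuses Proposition~\ref{prsigmapB} rather than re-deriving the correspondence of possible worlds, and it handles the strict case more rigorously than the paper does --- since $\PentL$ is \emph{defined} as $\Pent$ together with failure of the converse, the strict equivalence is a purely formal consequence of two applications of the non-strict one, whereas the paper's justification of that step is quite terse. What the paper's route buys: it is self-contained and makes explicit the correspondence $[\phi]^{+}_{\Sigma}=\bigcup_{\bar{w}\in[\phi]^{+}_{\bar{\Sigma}}}\extw{\bar{w}}_{\Sigma}$ between possible models, which is precisely the alternative you sketch in your closing remark. One shared, harmless blemish: forming $\neg\psi$ presupposes $\psi\notin\{\top,\remaind\}$ under the paper's syntactic conventions; those corner cases (where both entailments hold or fail trivially) are silently ignored by both proofs.
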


\nd The following Proposition will be used extensively later and can easily be shown.

\begin{proposition} \label{probent}
    Consider formulae $\phi$ and $\psi$ \wrt~$\Sigma$, and a probability distribution $\pr$ over $\Sigma$.
    If $\phi \Pentlhd \psi$ then $\pr(\phi) \lhd \pr(\psi)$, for  $\lhd\in\{\leq,<\}$.
\end{proposition}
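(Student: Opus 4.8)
The plan is to reduce everything to the set-theoretic characterisation of $\pr$-entailment already recorded in the excerpt, namely that $\phi \Pent \psi$ holds iff $[\phi]^{+}_{\Sigma} \subseteq [\psi]_{\Sigma}$, together with the elementary fact that the probability mass of a formula is carried entirely by its possible models. First I would observe that, since every world in $[\pr0]_{\Sigma}$ has zero probability, the defining sum in~(\ref{prphi}) may be restricted to possible models: $\pr(\phi) = \sum_{w \in [\phi]_{\Sigma}} \pr(w) = \sum_{w \in [\phi]^{+}_{\Sigma}} \pr(w)$, and likewise for $\psi$. This is the only place where the distinction between $[\cdot]_{\Sigma}$ and $[\cdot]^{+}_{\Sigma}$ matters, and it is what makes $\pr$-entailment the right notion to compare against $\pr(\cdot)$.

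For the case $\lhd = {\leq}$, I would assume $\phi \Pent \psi$, i.e. $[\phi]^{+}_{\Sigma} \subseteq [\psi]_{\Sigma}$. Every world in $[\phi]^{+}_{\Sigma}$ has non-zero probability and is a model of $\psi$, hence belongs to $[\psi]^{+}_{\Sigma}$; thus $[\phi]^{+}_{\Sigma} \subseteq [\psi]^{+}_{\Sigma}$. Summing the non-negative quantities $\pr(w)$ over the smaller index set then gives $\pr(\phi) = \sum_{w \in [\phi]^{+}_{\Sigma}} \pr(w) \leq \sum_{w \in [\psi]^{+}_{\Sigma}} \pr(w) = \pr(\psi)$, as required.

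For the case $\lhd = {<}$, I would assume $\phi \PentL \psi$, that is $\phi \Pent \psi$ together with $\psi \not\Pent \phi$. The first conjunct yields $[\phi]^{+}_{\Sigma} \subseteq [\psi]^{+}_{\Sigma}$ exactly as above. The second conjunct, $\psi \not\Pent \phi$, means $[\psi]^{+}_{\Sigma} \not\subseteq [\phi]_{\Sigma}$, so I can fix a witness world $w^{*} \in [\psi]^{+}_{\Sigma}$ with $w^{*} \notin [\phi]_{\Sigma}$; in particular $\pr(w^{*}) > 0$ and, since $[\phi]^{+}_{\Sigma} \subseteq [\phi]_{\Sigma}$, also $w^{*} \notin [\phi]^{+}_{\Sigma}$. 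Because $[\phi]^{+}_{\Sigma} \cup \{w^{*}\} \subseteq [\psi]^{+}_{\Sigma}$ is a disjoint union and all probabilities are non-negative, I obtain $\pr(\psi) \geq \pr(\phi) + \pr(w^{*}) > \pr(\phi)$, which is the strict inequality.

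I do not expect a genuine obstacle here: the argument is purely a matter of summing non-negative terms over nested sets of possible models. The only point demanding care is the bookkeeping in the strict case — ensuring that the witness $w^{*}$ supplied by $\psi \not\Pent \phi$ is simultaneously a possible model of $\psi$ (so it carries strictly positive mass) and not even a classical model of $\phi$ (so it is genuinely absent from the sum computing $\pr(\phi)$). Keeping the distinction between $[\cdot]_{\Sigma}$ and $[\cdot]^{+}_{\Sigma}$ straight throughout is precisely what guarantees the strictness.
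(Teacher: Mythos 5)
Your proof is correct and takes essentially the same approach as the paper's: the paper handles the non-strict case syntactically via $\phi \land \neg \pr0$ (whose models are exactly $[\phi]^{+}_{\Sigma}$, so $\pr(\phi) = \pr(\phi \land \neg \pr0) \leq \pr(\psi)$), and for the strict case appeals to Remark~\ref{nonz}, which supplies precisely the positive-probability witness $w^{*} \in [\psi]^{+}_{\Sigma} \setminus [\phi]_{\Sigma}$ that you construct explicitly. Your version merely spells out the disjoint-union bookkeeping that the paper leaves implicit.
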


\nd Moreover, the inverse of Proposition \ref{probent} does not hold.

\begin{example}[Running example cont.] \label{runexD}
    Consider Example~\ref{runexC} and $\pr=\pr_{\Sigma_\KB}$. 
It is easily verified that 
$\KB \PentL f \land (\neg b \rightarrow (p \lor o \lor f))$.
%
Moreover, note that  $\pr_{\Sigma_\KB}((\neg b \rightarrow (p \lor o \lor f))) = 1$ as 
$\bar{b}\bar{p}\bar{o}\bar{f}^{0.0}$ holds. It follows that
$\pr_{\Sigma_\KB}(f \land (\neg b \rightarrow (p \lor o \lor f))) = 1 > 0.35 = \pr_{\Sigma_\KB}(\KB)$ in accordance with Proposition~\ref{probent}.

Moreover, $\pr_{\Sigma_\KB}(b \land p \land \neg o \land f) = 0.1 < 0.15 = \pr_{\Sigma_\KB}(b \land \neg p \land \neg o \land f)$, but
$b \land p \land \neg o \land f  \not \PentL b \land \neg p \land \neg o \land f$ and, thus, the converse of Proposition~\ref{probent} does not hold.
%
\end{example}

\nd We conclude this part by defining the notion of independent formulae: namely, consider two formulae $\phi$ and $\psi$, and a probability distribution $\pr$ over $\Sigma$, then
we say that $\phi$ and $\psi$ are \emph{$\pr$-independent} iff 
\[
\pr(\phi\land\psi)  =  \pr(\phi) \cdot \pr(\psi) \ .
\]

\begin{example}[Running example cont.] \label{runexE}
Consider Example~\ref{runexC}. Then $b$ and $w$ are $\pr_\Sigma$-independent, while $b$ and $f$ are not.
\end{example}

\nd We are ready now to present the generalisation for KMs: given a probability distribution $\pr$ over $\Sigma$, a KM $\km$ is a total function mapping formulae over $\Sigma$ into  $\mathbb{R}^+\cup\{0,\infty\}$ satisfying the following three axioms:

\begin{description}

    \item[{\bf (KM1)}]  $\km(\top)=0$ and $\km(\remaind) = \infty$;

    \item[{\bf (KM2)}] $\km$ is monotone non-increasing, \ie~if $\pr(\phi) \lhd \pr(\psi)$ then  $\km(\psi) \lhd \km(\phi)$, for $\lhd\in\{\leq,<\}$;

    \item[{\bf (KM3)}] $\km$ is additive, \ie~if $\phi$ and $\psi$ are  $\pr$-independent, then $\km(\phi\land\psi)=\km(\phi)+\km(\psi)$.
    
\end{description}

\nd Let us shortly explain the above axioms. Of course, axiom (KM1) corresponds to axiom (T), which we already commented. 
Axiom (KM3) encodes a sort of model independence: in this case, we may join $\phi$ and $\psi$ without any loss of information \wrt~the sum of both. Eventually, concerning (KM2), at first note the following: 
if $\phi \Pentlhd \psi$, then by Proposition~\ref{probent}, 
$\pr(\phi) \lhd \pr(\psi)$ and, thus, by (KM2), $\km(\psi) \lhd  \km(\phi)$. So, this case resembles axiom (E). However, as $\pr(\phi)\lhd \pr(\psi)$ does not imply $\phi\Pentlhd \psi$, axiom (KM2) is weaker than (E), and it is more appropriate to the present framework since now we deal also with non-uniform probability distributions.

From what we have said above, the following is immediate.
\begin{proposition} \label{prKM}
Consider formulae $\phi$ and $\psi$ \wrt~$\Sigma$, a probability distribution $\pr$ over $\Sigma$, if $\phi\Pentlhd \psi$ then $\km(\psi) \lhd  \km(\phi)$, for $\lhd\in\{\leq,<\}$.
\end{proposition}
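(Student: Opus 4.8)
The plan is to prove this by composing the two facts that immediately precede the statement, since $\pr$-entailment is already known to be monotone with respect to probability (Proposition~\ref{probent}) and the knowledge measure is, by axiom, antitone with respect to probability (KM2). No new construction is needed: the claim is a one-line corollary obtained by chaining an implication about probabilities with the defining monotonicity axiom of $\km$.

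Concretely, I would fix $\lhd\in\{\leq,<\}$ and assume $\phi\Pentlhd\psi$. First I would invoke Proposition~\ref{probent}, which applies verbatim (same formulae $\phi,\psi$, same alphabet $\Sigma$, same distribution $\pr$), to conclude $\pr(\phi)\lhd\pr(\psi)$, keeping the relation symbol $\lhd$ fixed throughout. Then I would apply axiom (KM2), which is phrased precisely as: $\pr(\phi)\lhd\pr(\psi)$ implies $\km(\psi)\lhd\km(\phi)$. Chaining these two implications yields $\km(\psi)\lhd\km(\phi)$, which is exactly the assertion. Because both Proposition~\ref{probent} and axiom (KM2) are stated uniformly over $\lhd\in\{\leq,<\}$, the single argument simultaneously settles the non-strict and the strict case, with no need to split into subcases.

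The only point requiring any attention is that the strict and non-strict readings be matched consistently: the $\lhd$ in the hypothesis $\phi\Pentlhd\psi$, in the intermediate probability inequality $\pr(\phi)\lhd\pr(\psi)$, and in the final $\km(\psi)\lhd\km(\phi)$ must all denote the \emph{same} relation, so that one does not, \eg, derive a merely non-strict probability inequality from a strict $\pr$-entailment and then wrongly claim a strict knowledge inequality. Since both cited results preserve $\lhd$ exactly, this alignment is automatic. There is no genuine obstacle here; this is precisely why the surrounding text introduces the result as ``immediate,'' the substantive work having already been discharged in establishing Proposition~\ref{probent} and in adopting axiom (KM2).
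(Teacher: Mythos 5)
Your proposal is correct and is precisely the paper's own argument: the text immediately preceding the proposition chains Proposition~\ref{probent} (preserving $\lhd$) with axiom (KM2), which is exactly why the paper introduces the result as ``immediate'' and gives no separate proof in the appendix. Your added remark about keeping the strict/non-strict relation $\lhd$ aligned across both steps is a fair point of care but, as you note, is automatic since both cited facts are stated uniformly over $\lhd\in\{\leq,<\}$.
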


\begin{example} (No surprise)\label{renosur}
Consider $\pr$ over $\Sigma=\{a,b\}$ with $\pr(ab) = 1$ (all other probabilities are $0$). Then
$\top \models_\pr a\land b$ and, thus, by (KM1) and (KM2) we get $0 =\km(\top) \geq \km(a \land b) \geq 0$, \ie~$\km(a \land b)=0$. That is, under $\pr$, $a \land b$ is no surprise and, thus, does not provide us with any information. On the other hand, according to~\cite{Straccia20}, under uniform probability distributions, we get $\kmh(a \land b) = 2$.
\end{example}

\nd As known from information theory~\cite{Hankerson98},
a function satisfying (KM1)-(KM3) can be defined in the following way:
given a probability distribution $\pr$ over $\Sigma$, the \emph{Shannon KM}, or simply the \emph{S-KM} of a formula $\phi$, denoted $\kms(\phi)$, is defined as 
\begin{equation} \label{khconc}
\boxed{ \kms(\phi) =  - \log_2 \pr(\phi)   } \ ,
\end{equation}
\nd where we postulate $\log_2 0 = -\infty$. Note that the probability distribution $\pr$ is a parameter of $\kms$, which we omit to ease the presentation. In case of ambiguity, we will write $\kms(\phi, \pr)$ instead. Noteworthy, the function $\kms$ is the unique solution satisfying (KM1) - (KM3), up to a multiplying constant, which is a direct consequence of the uniqueness result of the \emph{entropy} function proved by Shannon in~\cite{Shannon48}.
That is,
\begin{proposition} \label{kmsprop}
 A KM $\km$ satisfying (KM1) - (KM3) is of the form 
 $\frac{1}{\log_2 b} \cdot \kms$,
 \nd for a constant $b>1$, 
 \ie~for a formula $\phi$ \wrt~$\Sigma$ and a probability distribution $\pr$ over $\Sigma$,
 $\km(\phi) = -\log_b \pr(\phi) = \frac{1}{\log_2 b} \cdot \kms (\phi)$.
 Also, $\kms$ satisfies (KM1) - (KM3).
\end{proposition}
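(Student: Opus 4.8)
The plan is to prove the two assertions separately: first the (easier) verification that $\kms$ itself satisfies (KM1)--(KM3), and then the (harder) uniqueness up to a positive multiplicative constant. For the verification I would simply unwind $\kms(\phi) = -\log_2\pr(\phi)$. Axiom (KM1) follows from $\pr(\top)=1$ and $\pr(\remaind)=0$ together with the convention $\log_2 0 = -\infty$, giving $\kms(\top)=0$ and $\kms(\remaind)=\infty$. Axiom (KM2) is immediate from strict monotonicity of $t\mapsto-\log_2 t$ on $(0,1]$: $\pr(\phi)\lhd\pr(\psi)$ yields $\kms(\psi)\lhd\kms(\phi)$ for $\lhd\in\{\leq,<\}$. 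Axiom (KM3) is exactly the turning of a product into a sum: if $\phi$ and $\psi$ are $\pr$-independent then $\pr(\phi\land\psi)=\pr(\phi)\cdot\pr(\psi)$, whence $\kms(\phi\land\psi)=-\log_2(\pr(\phi)\pr(\psi))=\kms(\phi)+\kms(\psi)$.

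For uniqueness, the first key step is to observe that any $\km$ satisfying (KM2) depends on $\phi$ only through $\pr(\phi)$: if $\pr(\phi)=\pr(\psi)$, applying (KM2) with $\leq$ in both directions gives $\km(\phi)\leq\km(\psi)$ and $\km(\psi)\leq\km(\phi)$, hence $\km(\phi)=\km(\psi)$. So there is a well-defined $g$ on the range of $\pr$ with $\km(\phi)=g(\pr(\phi))$. Translating the axioms through $g$, (KM1) gives $g(1)=0$ and $g(0)=\infty$; (KM2) says $g$ is non-increasing and strictly decreasing where its arguments differ; and (KM3) gives the logarithmic functional equation $g(xy)=g(x)+g(y)$ whenever $x,y$ are probabilities of $\pr$-independent formulae.

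The second key step is to solve this equation. Setting $x=2^{-u}$ and $h(u)=g(2^{-u})$ turns $g(xy)=g(x)+g(y)$ into the additive Cauchy equation $h(u+v)=h(u)+h(v)$; together with monotonicity its only solutions are linear, $h(u)=cu$, so $g(x)=-c\log_2 x$ on $(0,1]$. The condition $g(1)=0$ is then automatic, and since $g$ must take values in $\mathbb{R}^+\cup\{0\}$ while the strict part of (KM2) rules out $g$ being constant on positive arguments, we get $c>0$; writing $c=1/\log_2 b$ with $b>1$ yields $\km(\phi)=-\log_b\pr(\phi)=\frac{1}{\log_2 b}\kms(\phi)$, with $g(0)=\infty$ matching the boundary convention. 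This is precisely the logarithmic case of Shannon's uniqueness theorem for the information content.

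The main obstacle I anticipate is not the algebra but guaranteeing that the functional equation constrains $g$ on a set rich enough to pin it down as a logarithm. With $\Sigma$ and $\pr$ fixed, the directly available arguments of $g$ form the finite set $\{\pr(\phi)\}$, and the products $\pr(\phi)\pr(\psi)$ that (KM3) controls are limited to $\pr$-independent pairs. The delicate part is therefore to realise sufficiently many probability values as independent formulae --- \eg~by passing to product distributions with independent propositional letters of arbitrary bias over an enlarged alphabet, invoking Proposition~\ref{prsigmapB} to keep probabilities (and hence $\km$-values) stable under extension --- so that the standard monotone solution of the Cauchy equation applies and the constant $b$ is determined. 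Treating the boundary point $x=0$, where $g=\infty$, separately then completes the argument.
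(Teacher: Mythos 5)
Your proof is correct and reaches the same conclusion as the paper's, but by a genuinely different route. Both proofs reduce $\km$ to a function of the probability value and extract the multiplicative functional equation from (KM3); the difference lies in how that equation is solved. The paper assumes the function is twice differentiable (explicitly flagging this as a simplification and deferring the general case to Shannon), differentiates to obtain the ODE $(p\, I'(p))' = 0$, and solves it, using (KM1) to kill the additive constant and (KM2) to fix the sign of the coefficient. You instead avoid any smoothness hypothesis: you first justify that $\km(\phi)$ depends only on $\pr(\phi)$ by applying (KM2) in both directions --- a step the paper performs silently when it writes $I(\pr(\phi))$ --- and then convert the multiplicative equation into the additive Cauchy equation, whose monotone solutions are linear; monotonicity is exactly what (KM2) supplies. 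Your route buys rigor where the paper's buys brevity. You are also the only one to flag the genuine subtlety that, for a fixed finite $\Sigma$ and fixed $\pr$, (KM3) constrains the function only at finitely many points and only for $\pr$-independent pairs, so the logarithm is pinned down only after enriching the supply of independently realisable probability values (your product-distribution construction over enlarged alphabets, kept coherent via Proposition~\ref{prsigmapB}); the paper's proof has the same gap --- it tacitly treats $I$ as defined and smooth on all of $[0,1]$ --- but does not acknowledge it.
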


\begin{remark}
By Eq.~\ref{prsphiB}, it suffices to consider a probability distribution over $\Sigma_\phi$ to determine $\kms(\phi)$.
\end{remark}


\nd Now, it can easily be shown that Straccia's KM $\kmh$~\cite{Straccia20} is a special case of $\kms$.

\begin{proposition} \label{kmsclassical}
For any formula $\phi$, given the uniform probability distribution $\pr^u_{\Sigma}$ over $\Sigma$,
$\kms(\phi) = \kmh(\phi)$.
\end{proposition}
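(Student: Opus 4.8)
The plan is to unfold both definitions and reduce the identity to a one-line logarithm computation, once the underlying alphabet has been normalised to $\Sigma_\phi$. The only genuine bookkeeping concerns this normalisation, so I would address it first. I would verify, using the extension formula Eq.~\ref{sigmapB}, that $\pr^u_\Sigma$ is exactly the extension of the uniform distribution $\pr^u_{\Sigma_\phi}$ on $\W_{\Sigma_\phi}$: for $\bar w \in \W_{\Sigma_\phi}$ and $w \in \extw{\bar w}_\Sigma$, the fibre $\extw{\bar w}_\Sigma$ has $2^{|\Sigma| - |\Sigma_\phi|}$ elements (Remark~\ref{remmods}), so Eq.~\ref{sigmapB} yields
\begin{equation*}
\pr_\Sigma(w) = \frac{1}{2^{|\Sigma| - |\Sigma_\phi|}} \cdot \frac{1}{2^{|\Sigma_\phi|}} = \frac{1}{2^{|\Sigma|}},
\end{equation*}
i.e.\ the uniform mass on $\W_\Sigma$. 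Hence by Eq.~\ref{prsphiB} we have $\pr^u_\Sigma(\phi) = \pr^u_{\Sigma_\phi}(\phi)$, so the whole computation may be carried out inside $\Sigma_\phi$.

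For the satisfiable case I would then invoke the identity recorded just after Eq.~\ref{prphi}, namely $\pr^u_{\Sigma_\phi}(\phi) = |[\phi]_{\Sigma_\phi}|/|\W_{\Sigma_\phi}| = |[\phi]_{\Sigma_\phi}|/2^{|\Sigma_\phi|}$, and substitute into the definition of the Shannon KM from Eq.~\ref{khconc}:
\begin{equation*}
\kms(\phi) = -\log_2 \frac{|[\phi]_{\Sigma_\phi}|}{2^{|\Sigma_\phi|}} = |\Sigma_\phi| - \log_2 |[\phi]_{\Sigma_\phi}|,
\end{equation*}
using $\log_2(a/b) = \log_2 a - \log_2 b$ and $\log_2 2^{|\Sigma_\phi|} = |\Sigma_\phi|$. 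The right-hand side is precisely $\kmh(\phi)$ as defined in Eq.~\ref{kmhs}, which settles this case.

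Finally I would dispatch the unsatisfiable case on its own: if $\phi$ has no model then $[\phi]_{\Sigma_\phi} = \emptyset$, so $\pr^u(\phi) = 0$ and $\kms(\phi) = -\log_2 0 = \infty$ by the convention postulated after Eq.~\ref{khconc}, which agrees with the stipulation $\kmh(\phi) = \infty$ for unsatisfiable $\phi$. I do not expect any real obstacle here: the argument is almost entirely definitional, and the only step demanding care is confirming in the first paragraph that restricting from $\Sigma$ to $\Sigma_\phi$ leaves the uniform distribution (and hence the value $\pr^u(\phi)$) unchanged; everything after that is elementary.
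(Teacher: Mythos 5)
Your proposal is correct and follows essentially the same route as the paper's own proof: both split into the unsatisfiable and satisfiable cases, reduce $\pr^u_{\Sigma}(\phi)$ to $\pr^u_{\Sigma_\phi}(\phi) = |[\phi]_{\Sigma_\phi}|/2^{|\Sigma_\phi|}$ via Eq.~\ref{prsphiB}, and finish with the same logarithm computation. Your explicit verification that the extension of $\pr^u_{\Sigma_\phi}$ is $\pr^u_{\Sigma}$ is just a slightly more detailed spelling of the marginalisation step the paper states in one line.
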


\nd An immediate consequence of Proposition~\ref{kmsclassical} is also:
\begin{proposition} \label{kmstelm}
 Under uniform probability distribution, and, thus, classical logic, $\kms$ satisfies the (TELM) axioms.
\end{proposition}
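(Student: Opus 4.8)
The plan is to prove this as a pure transfer argument, leveraging Proposition~\ref{kmsclassical} rather than re-deriving anything from scratch. First I would fix the uniform probability distribution $\pr^u_\Sigma$, so that $\kms$ becomes a concrete function of formulae. Proposition~\ref{kmsclassical} then supplies the pointwise identity $\kms(\phi) = \kmh(\phi)$ for every formula $\phi$. Since Straccia's measure $\kmh$ is already known to satisfy the (TELM) axioms (as recalled in Section~\ref{kmc}), the goal reduces to the observation that each of (T), (E), (L) and (M) is a statement purely about the values a knowledge measure assigns to formulae, together with combinatorial data ($\Sigma_\phi$, $[\phi]_{\Sigma_\phi}$, and the s-entailment relation) that does not depend on which measure is used. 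Two functions that agree at every formula therefore satisfy exactly the same such conditions, so $\kms$ inherits (TELM) from $\kmh$.

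Concretely, I would run through the four axioms to make the inheritance explicit. For (T), $\kms(\top) = \kmh(\top) = 0$ and $\kms(\remaind) = \kmh(\remaind) = \infty$. For (E), if $\phi$ s-entails $\psi$ then the corresponding inequality holds for $\kmh$ by hypothesis, and since $\kms(\phi) = \kmh(\phi)$ and $\kms(\psi) = \kmh(\psi)$ the same inequality holds for $\kms$. Axiom (L) transfers because $\kmh(\phi)$ depends only on $\Sigma_\phi$ and $\kms(\phi)$ equals it; and (M), covering both the bound $0 \leq \kms(\phi) \leq |\Sigma_\phi|$ and the boundary case $|[\phi]_{\Sigma_\phi}| = 1 \Rightarrow \kms(\phi) = |\Sigma_\phi|$, follows identically from the corresponding facts for $\kmh$.

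There is essentially no hard step here: the content lives entirely in Proposition~\ref{kmsclassical}, and once pointwise equality is in hand the rest is bookkeeping. The only point requiring a little care is to confirm that the (TELM) axioms are genuinely extensional in the sense above, \ie~that nothing in their statement depends implicitly on the probabilistic machinery of $\kms$ rather than on the bare function values, which a quick inspection of the axioms confirms. As a cross-check, one could bypass the transfer argument altogether and verify (TELM) directly from the closed form, using $\kms(\phi) = -\log_2 \pr^u_\Sigma(\phi) = |\Sigma_\phi| - \log_2 |[\phi]_{\Sigma_\phi}|$ under the uniform distribution; this recovers exactly the expression for $\kmh$ in Eq.~\ref{kmhs} and renders each axiom a routine computation.
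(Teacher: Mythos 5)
Your proof is correct and matches the paper's approach: the paper presents this proposition as an immediate consequence of Proposition~\ref{kmsclassical} (the pointwise identity $\kms = \kmh$ under the uniform distribution) together with the fact, recalled from~\cite{Straccia20}, that $\kmh$ satisfies (TELM). Your write-up simply makes explicit the same transfer argument the paper leaves implicit.
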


\begin{example}[Running example cont.]\label{runexF}
Consider Example~\ref{runexC}. Then
    \[
    \kms(\KB) = - \log_2 0.35 = 1.515 \ .
    \]
    \nd By referring to Example~\ref{runexD}, we also have that
$\KB \PentL f$ and, thus, $1.515 \approx \kms(\KB) > \kms(f)= - \log_2 0.45 \approx 1.152$, that is, $\KB$ has more knowledge about the actual world than $f$, as expected by Proposition~\ref{prKM}.
\end{example}


\begin{remark}\label{remotherm}
In~\cite{Straccia20} some other related measures have been introduced: namely,
$\bar{\acc}(\phi)  =  \km(\phi)/|\Sigma_\phi|$ (accuracy),
$\bar{\conc}(\phi)   =  \km(\phi)/|\phi|$ (conciseness), and
$\po(\phi)  =  \arg\max_{\{\psi \mid \psi \equiv \phi\}} \acc(\psi) \cdot \conc(\psi)$ (Pareto optimality),
 where $|\phi|$ is the length of $\phi$~\cite{Straccia20}. The first one defines how precise a KB is in describing the actual world, the second one defines how succinct a KB is \wrt~the knowledge it represents, while the last one establishes when we may not increase accuracy without decreasing conciseness (or vice-versa). 
These measures can be extended to S-KMs as well by replacing $\equiv$ with $\equiv_{\pr}$, but we will not address it here.\footnote{Note, however, that these measures do not satisfy the KM axioms by Proposition~\ref{kmsprop}. For instance, accuracy does not satisfy (KM2).}

\end{remark}

\section{KMs-based belief change} \label{kmbr}

 \nd We now show how we may apply KMs to \emph{Belief Change} (BC)~\cite{Ferme18,Gardenfors03}. We recap that BC is the research area dedicated to the formal definition of how a rational agent should manage its own beliefs while facing new pieces of information and aiming at preserving the overall logical consistency of the KB. The two main BC operations that have been investigated are \emph{revision} and \emph{contraction}: the former deals with an agent incorporating in its own KB a new piece of information, adjusting the content of the KB to avoid the risk of creating new inconsistencies; the latter analyses an agent that has to readjust the KB so that a specific piece of information is not derivable anymore, for example because it has been informed that the source is not reliable.  Such two kinds of operations have been characterised from a logical point of view by associating a set of formal postulates to each of them. The AGM framework \cite{AlchourronEtAl1985} is the most popular proposal.

Various axiomatisations have been defined, \eg~the AGM postulates, that are believed BC operators should satisfy and, to date, a multitude of different ways for performing these operations have been proposed (see, \eg~\cite{Ferme18}).  In what follows we are going to consider the AGM approach as the basic framework and formalise quantitative BC operators employing KMs.


Our aim here is to propose KM-based BC operators that satisfy the AGM postulates and eventually to show that any BC operator that satisfies the AGM postulates can be represented as a KM-based BC operator.

It is well-known that in the AGM framework, there are three types of BCs. Given a belief $\phi$ (of an agent), in
\begin{enumerate}
    \item \emph{contraction},  a sentence $\alpha$ is removed, \ie, a belief $\phi$ is replaced by another belief $\phi \contr \alpha$ that is a consequence of $\phi$ not entailing $\alpha$;

    \item \emph{expansion}, a sentence $\alpha$ is added to $\phi$ and nothing is removed, \ie~$\phi$ is replaced with $\phi+\alpha  = \phi \land \alpha$;  

\item \emph{revision}, a sentence $\alpha$ is added to $\phi$, and at the same time other sentences are removed if this is needed to ensure that the resulting belief  $\phi \star \alpha$ is consistent.
    
\end{enumerate}

 \nd A typical way to achieve a revision operator is to rely on the \emph{Levi identity} allowing to define revision in terms of contraction. Alternatively, one may rely on the \emph{Harper identity} to define contraction in terms of revision: that is (see also~\cite{Caridroit17}), 

\begin{description}
    \item[Levi identity:] $\phi \star \alpha = (\phi \contr \neg \alpha) \land \alpha$;
    \item[Harper identity:] $\phi \contr \neg \alpha = \phi \lor (\phi \star  \alpha)$.
\end{description}

\nd We will follow the former path by defining a KM-based contraction operator and then use the Levi identity to obtain a KM-based revision operator.

To start with, consider a probability distribution $\pr$ over $\Sigma$. Let us stress the fact that $\pr$ is a parameter of our setting and that different distributions may give rise to different BC operators. However, as we will show,  all of them will satisfy the AGM postulates. Moreover, please note that any belief $\phi$ and $\psi$  share the same probability distribution $\pr$ (\cf~Remark~\ref{remprolog}).
%
Additionally, for the rest of the paper, we will also consider $\pr$-satisfiable formulae only, \ie~our belief must be compatible \wrt~$\pr$.

\subsection{Contraction} \label{kmbragm}

\nd A total function $\contr\colon\Lang_\Sigma\mapsto\Lang_\Sigma$ is a \emph{contraction operation} if it satisfies the following postulates (for ease, we write $\phi^{\contr}_{\alpha}$ in place of $\phi \contr \alpha$), which are the $\pr$-entailment analogues of those defined for classical propositional logic as in~\cite{Caridroit17}:
%
%
{
\[
\begin{array}{lll}
(\contr 1)    &  \phi\Pent \phi^{\contr}_{\alpha} & \text{(inclusion)}\\
(\contr 2)    & \text{If }\phi\not\Pent \alpha\text{, then }\phi^{\contr}_{\alpha}\equiv_{\pr} \phi & \text{(vacuity)}\\
(\contr 3) & \text{If } \top \not\Pent \alpha\text{, then }\phi^{\contr}_{\alpha}\not\Pent \alpha & \text{(success)}\\
(\contr 4) & \text{If }\alpha\equiv_{\pr} \beta\text{, then }\phi^{\contr}_{\alpha}\equiv_{\pr} \phi^{\contr}_{\beta} & \text{(extensionality)}\\
(\contr 5) & \phi^{\contr}_{\alpha}\land\alpha\Pent \phi & \text{(recovery)}\\
(\contr 6) & \phi^{\contr}_{\alpha\land \beta}\Pent\phi^{\contr}_{\alpha}\lor\phi^{\contr}_{\beta} & \text{(conjunctive overlap)}\\
(\contr 7) & \text{If }\phi^{\contr}_{\alpha\land \beta}\not\Pent \alpha\text{, then }\phi^{\contr}_{\alpha}\Pent\phi^{\contr}_{\alpha\land \beta} & \text{(conjunctive inclusion)}
\end{array}
\]
}


\nd Regarding the meaning of the above postulates, we redirect the reader to some of the many publications about the AGM approach, \eg~\cite{AlchourronEtAl1985,Caridroit17,FermeHansson2011}. The reader who is familiar with the AGM literature will also notice that we have reformulated the postulates considering formulae, as in~\cite{Caridroit17}, instead of logically closed theories. 
 
Now, we define the contraction operation associated with a KM $\km$ (\wrt~$\pr$) as follows. Consider formulae $\phi$ and $\alpha$. The set of \emph{possible remainders} of $\phi$ \wrt~$\alpha$, denoted $\phi\possremaind\alpha$, 
is defined as
%
\begin{equation}\label{possremind}
\phi\possremaind\alpha = 
\left\{
\begin{array}{ll}
\{\overl{([\phi]^+})\} & \text{if } \top \Pent \alpha\\
 \{\overl{([\psi]^+)} \mid \phi\Pent\psi \text{ and } \psi\not\Pent\alpha\} & \text{otherwise .}
\end{array} \right.
\end{equation}
%
\nd Essentially, as we would like to contract $\alpha$ from $\phi$ and, thus, would like to avoid that the contraction $\pr$-entails $\alpha$, we weaken $\phi$ so that we do not $\pr$-entail $\alpha$ anymore.
Please note that the set $\phi\possremaind\alpha$ is \emph{finite}, as for sets of formulae, we assumed that they cannot contain any pair of equivalent formulae (\cf~Remark~\ref{finitesets}).
Now, from the set of possible remainders, we select those that are most specific according to $\pr$-entailment, \ie
\begin{eqnarray}\label{remind}
    \phi\remaind\alpha & = & \{\psi \in \phi\possremaind\alpha \mid 
    \not\exists \psi' \in \phi\possremaind\alpha  \text{ s.t. } \psi'\PentL \psi \} .
\end{eqnarray}
\nd This set is called the set of \emph{remainders} of $\phi$ \wrt~$\alpha$.

Note that, if $\phi \not \Pent \alpha$ or $\top \Pent \alpha$ then the unique reminder is $\pr$-equivalent to $\phi$. On the other hand, it is not difficult to see that, if $\phi \Pent \alpha$, any remainder $\psi$ a formula whose models are those obtained by adding to the non-$0$ probability models of $\phi$ a single model of $\neg \alpha$. That is,

\begin{proposition} \label{compreminder}
Consider a probability distribution $\pr$ over $\W_\Sigma$ and formulae $\phi$ and $\alpha$ such that
$\phi \Pent \alpha$ and $\top \not\Pent \alpha$. Then
$\psi \in \phi\remaind\alpha$ iff 
%
\begin{equation*}\label{eq_contr}
\psi \equiv \overl({[\phi]_\Sigma^+ \cup \{w\}}) \ ,  
\end{equation*}
\nd for some $w \in [\neg \alpha]_\Sigma^+$. Furthermore, 
$\pr(\psi) = \pr(\overl{([\phi]_\Sigma^+})) + \pr(w)$. 
On the other hand, if $\phi \not \Pent \alpha$ or $\top \Pent \alpha$ holds then 
 $\phi\remaind\alpha = \{\overl{([\phi]^+})\}$.
\end{proposition}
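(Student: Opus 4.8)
The plan is to reduce the whole statement to elementary reasoning about set inclusion among the non-zero-probability model sets. Write $\W^{+}=\{w\in\W_\Sigma \mid \pr(w)>0\}$ for the set of possible worlds. The key preliminary step is a \emph{representation lemma}: for any $M\subseteq\W^{+}$ the formula $\overl{(M)}$ has $[\overl{(M)}]_\Sigma=M$ (Eqs.~\ref{ffa}--\ref{ffc}) and, since every world of $M$ is possible, also $[\overl{(M)}]^{+}_\Sigma=M$. Combining this with the defining clause $\phi\Pent\psi$ iff $[\phi]^{+}_\Sigma\subseteq[\psi]_\Sigma$, I would first show that for $M_1,M_2\subseteq\W^{+}$ one has $\overl{(M_1)}\Pent\overl{(M_2)}$ iff $M_1\subseteq M_2$, and hence $\overl{(M_1)}\PentL\overl{(M_2)}$ iff $M_1\subsetneq M_2$. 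This turns ($\pr$-)entailment between remainder candidates into ordinary (strict) inclusion, so that the selection of the most specific remainders in Eq.~\ref{remind} becomes the selection of $\subseteq$-minimal sets.

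Next I would characterise the possible remainders under the hypotheses $\phi\Pent\alpha$ and $\top\not\Pent\alpha$, so that the second clause of Eq.~\ref{possremind} applies. A formula $\overl{([\psi]^{+})}$ is a possible remainder exactly when $\phi\Pent\psi$ and $\psi\not\Pent\alpha$. Putting $M=[\psi]^{+}_\Sigma$, the first condition becomes $[\phi]^{+}_\Sigma\subseteq M$ (the possible models of $\phi$, being possible, land in $[\psi]^{+}_\Sigma$), and the second becomes $M\not\subseteq[\alpha]_\Sigma$, i.e.\ $M\cap[\neg\alpha]^{+}_\Sigma\neq\emptyset$. Conversely every $M\subseteq\W^{+}$ meeting these two conditions is realised by $\psi=\overl{(M)}$, for which $[\psi]^{+}_\Sigma=M$ and $\overl{([\psi]^{+})}=\overl{(M)}$. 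Hence, up to $\equiv$, the set of possible remainders is exactly $\{\overl{(M)}\mid [\phi]^{+}_\Sigma\subseteq M\subseteq\W^{+},\ M\cap[\neg\alpha]^{+}_\Sigma\neq\emptyset\}$.

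Then I would identify the minimal members. Because $\phi\Pent\alpha$ gives $[\phi]^{+}_\Sigma\subseteq[\alpha]_\Sigma$, no world of $[\phi]^{+}_\Sigma$ falsifies $\alpha$, so every admissible $M$ must strictly enlarge $[\phi]^{+}_\Sigma$ by adjoining at least one world of $[\neg\alpha]^{+}_\Sigma$. The $\subseteq$-minimal such sets are therefore precisely $[\phi]^{+}_\Sigma\cup\{w\}$ with $w\in[\neg\alpha]^{+}_\Sigma$. By the representation lemma these, and only these, correspond to the elements of $\phi\remaind\alpha$, giving $\psi\in\phi\remaind\alpha$ iff $\psi\equiv\overl{([\phi]^{+}_\Sigma\cup\{w\})}$ for some $w\in[\neg\alpha]^{+}_\Sigma$. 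The probability identity is then immediate: since $w\notin[\phi]^{+}_\Sigma$, the union is disjoint and $\pr(\psi)=\pr(\overl{([\phi]^{+}_\Sigma)})+\pr(w)$.

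Finally I would dispose of the degenerate cases. If $\top\Pent\alpha$, the first clause of Eq.~\ref{possremind} makes $\phi\possremaind\alpha$ the singleton $\{\overl{([\phi]^{+})}\}$, which is trivially its own unique minimal element, so $\phi\remaind\alpha=\{\overl{([\phi]^{+})}\}$; if instead $\phi\not\Pent\alpha$ (with $\top\not\Pent\alpha$), then $[\phi]^{+}_\Sigma$ itself already satisfies $M\not\subseteq[\alpha]_\Sigma$, lies in the family above, and is contained in every other member, hence is the unique $\subseteq$-minimal set, again yielding $\phi\remaind\alpha=\{\overl{([\phi]^{+})}\}$. The only point demanding care -- the main obstacle -- is the representation lemma itself: one must check that $\overl{(\cdot)}$ restricted to subsets of $\W^{+}$ is an order isomorphism onto the relevant remainder candidates under $\Pent$, in particular that passing to possible models $[\cdot]^{+}$ does not collapse distinct candidate sets, which is exactly where the restriction $M\subseteq\W^{+}$ is used.
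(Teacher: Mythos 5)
Your proof is correct and follows essentially the same route as the paper's: both reduce $\pr$-entailment among remainder candidates to inclusion of their sets of possible models, and then identify the remainders as $[\phi]_\Sigma^{+}$ augmented by a single world of $[\neg\alpha]_\Sigma^{+}$, with the degenerate cases read off directly from the definition. Your write-up is somewhat more systematic — the explicit representation lemma and the worked-out converse direction fill in steps that the paper's proof only asserts — but the underlying argument is the same.
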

\nd Eventually, contraction is defined in the following way.

%
\begin{definition}[KM-contraction]\label{def_KMcontraction}
Given formulae $\phi$, $\alpha$ and a KM $\km$, then 
the \emph{KM-contraction operator} $\contrk$ is defined as
%
\begin{equation}\label{eq_contraction_2}
\boxed{ \phi^{\contrk}_{\alpha}  =  
\bigvee  \km\min(\phi\remaind\alpha) 
} 
\end{equation}
%
\nd where, given a set of formulae $\Gamma$
\[
\km\min(\Gamma)=\{\phi\in\Gamma\mid \forall\psi\in\Gamma, \km(\phi) \leq \km(\psi)\} \ .
\]
\end{definition}

\nd The intuition behind the above definition is that we try to minimise the surprise among the most specific possible remainders.
But, let us go more in detail about the rationale behind this formalisation. 

The definition of appropriate procedures for modelling the belief dynamics of a \emph{rational} agent cannot be attained by relying only on purely logical constraints, since they are often insufficient to determine unequivocally which beliefs have to be dropped, as illustrated by the following example (taken from \cite[p.1]{FermeHansson2018}). A KB contains the statements ``Juan was born in Puerto Carre\~no'' ($\alpha$), ``Jos\'e was born in Puerto Ayacucho'' ($\beta$), and ``Two people are compatriots if they were born in the same country'' ($\gamma$). Assume that we are informed that ``Juan and Jos\'e are compatriots'' ($\delta$); if we simply add $\delta$ to the KB, we will obtain a contradiction,\footnote{Despite close to each other, Puerto Carre\~no is in Colombia, while Puerto Ayacucho is in Venezuela.} hence we have to modify our KB to accommodate the new information still preserving consistency. 
To this end, we have various options: for example, we could eliminate either $\alpha$, $\beta$, or even $\gamma$; or, among other options, we could weaken $\alpha$ and $\beta$ into, respectively, ``Juan was born in Puerto Carre\~no or Puerto Ayacucho'' and ``Jos\'e was born in Puerto Carre\~no or in Puerto Ayacucho''. 

To define procedures that model such changes, we may impose some extra-logical \emph{rationality criteria} that allow us to choose only one option among the different possible ones. Different criteria have been suggested that can be used to determine which KBs a rational agent should prefer if faced with different options (see \cite{RottPagnucco1999} for a more detailed presentation). 
One principle that is commonly accepted is the \emph{Principle of Indifference}, which is a basic principle connected to the formal management of preferences: 

\begin{equation}\label{pindiff}
\frac{\text{\emph{The\ Principle\ of\ Indifference}}}{\text{\emph{Objects held in equal regard should be treated equally}}} \ .
\end{equation}

\nd Another popular one is the \emph{Principle of Informational Economy} \cite{Gardenfors1988}, which simply states that we should not give up our beliefs beyond necessity: information is ``precious'', and if we need to change our beliefs, we should drop as little information as possible in the process. 
\begin{equation}\label{pinfoecon}
\frac{\text{\emph{The\ Principle\ of\ Informational Economy}}}{\text{\emph{Keep loss of information to a minimum }}} \ .
\end{equation}

\nd This is generally considered the guiding principle of the AGM approach (but it has also been argued that that is not the case \cite{Rott2000}). The Principle of Informational Economy is quite general and formally it can be interpreted in different ways. For example, it can be implemented in the form of the \emph{Principle of Conservatism} \cite{Harman1986}, where the relative informational difference between two KBs is defined in terms of set-theoretic inclusion: if a logical theory  $A$ contains the logical theory $B$, then $A$ is considered more informative than $B$ and it is preferred or, equivalently, if a formula $\phi$ implies a formula $\psi$, $\phi$ is considered more informative than $\psi$ (compare also with Proposition~\ref{prKM}). 
Adhering to such a principle together with the Principle of Indifference results in the so-called \emph{full-meet contraction}, that is, to contract $\phi$ by $\alpha$ we take under consideration \emph{only} the formulas in the set of remainders (according to the Principle of Conservatism), and \emph{all} of them (according to the Principle of Indifference). That is, the full-meet contraction is the following operation

\begin{equation}\label{eq_contraction_fullmeet}
\phi^{\contr}_{\alpha}  =  
\bigvee  (\phi\remaind\alpha).
\end{equation}

\nd It is well-known \cite[Sect. 3.8]{FermeHansson2018} that full-meet contraction is not a satisfying approach to contraction, since it tends to drop too much information. 
A better behaviour is obtained using a \emph{partial-meet contraction}, that is, considering only \emph{some} of the elements of the set $\phi\remaind\alpha$, \ie
\begin{equation}\label{eq_contraction_partialmeet}
\phi^{\contr}_{\alpha}  =  
\bigvee  \gamma(\phi\remaind\alpha),
\end{equation}
\nd where $\gamma$ is a  so-called \emph{choice function}.\footnote{$\gamma$ returns a set of formulae.} Different implementations of the partial-meet contraction depend on the rationale that we consider in the definition of the choice function. 
%
In this work, we propose to use KMs as a way of modelling the expectations of a rational agent. To this end, we introduce the 
\emph{Principle of Minimal Surprise}, where ``surprise" is expressed in information-theoretic terms, \ie~the more probable an event is the less it is surprising.
\begin{equation}\label{pminsur}
\frac{\text{\emph{The\ Principle\ of\ Minimal\ Surprise}}}{\text{\emph{The\ less\ surprising\ option\ should\ be\ preferred}}} \ . 
\end{equation}

\nd This principle simply states that facing uncertainty about which is the actual world, an agent should opt for the most expected (that is, the least surprising) option. Such a principle is intuitive, and it has been also assumed in major approaches to conditional non-monotonic reasoning, such as~\cite{GardenforsMakinson1994,Spohn2012}.
%
%
Our approach follows the same line, but it uses KMs in order to create a bridge between our probabilistic framework and the ranking of the information in terms of level of surprise.
%
%
Note that in the KM theory, the informative value associated with a piece of information is directly proportional to its exceptionality: the less probable it is, the more informative it conveys (it is an immediate consequence of the axiom {\bf (KM2)}). Hence, a formula $\phi$ is strictly less surprising than a formula $\psi$ iff $\kms(\phi)<\kms(\psi)$.


Let us now go back to Definition~\ref{def_KMcontraction}. We want to contract $\phi$ by $\alpha$. To do that, we need to select some formulas in the remainder set $(\phi\remaind\alpha)$, that, relying on KMs and the Principle of Minimal Surprise, results in choosing the most expected ones. The result is the equation expressed in (\ref{eq_contraction_2}).

By Proposition \ref{compreminder} we see that this operation corresponds, from the point of view of the semantics, to selecting the most expected worlds in $[\neg \alpha]_\Sigma^+$. Moreover, if we want to be compliant also with the Principle of Indifference, we will have to consider \emph{all} the most expected worlds in $[\neg\alpha]_\Sigma^{+}$, as formalised below in Proposition \ref{compreminderB}.

Note that $\phi^{\contrk}_{\alpha}$ is deterministic in the sense that it is uniquely determined by the given probability distribution (and different probability distributions may give rise to different contractions).
%


Moreover, given a contraction operator $\contr$, we may also associate with it a quantitative function that numerically specifies how much information has been lost by contracting $\phi$ with $\alpha$. That is, the \emph{information loss of a contraction $\contr$} (\wrt~a given probability distribution $\pr$) is defined as
\begin{equation} \label{losscontra}
L^\contr_\pr(\phi, \alpha) = \km(\phi) - \km(\phi^{\contr}_{\alpha}) \ .
\end{equation}

\nd Note that $L^\contr_\pr(\phi, \alpha)\geq 0$.

\begin{remark}
    Please note that, for any AGM contraction operator $\contr$ on classical propositional logic, we may use the definition of information loss of a contraction by reverting equivalently to Straccia's formulation of KM (see Eq.~\ref{kmhs}) and, thus, in this case the loss can be quantified as 
    $L^\contr(\phi, \alpha) = \kmh(\phi) - \kmh(\phi^{\contr}_{\alpha})$.
\end{remark}



\nd Proposition~\ref{compreminder} provides us with a simple constructive way to compute a contraction. That is, given $\pr$, $\phi$ and $\alpha$, $\phi^{\contrk}_\alpha$ can be computed in the following way:
\begin{enumerate}
    \item if $[\phi]_\Sigma^{+} \not \subseteq [\alpha]_\Sigma$ or $[\top]_\Sigma^{+} \subseteq [\alpha]_\Sigma$ then return $\overl{([\phi]_\Sigma^+})$ and we are done; otherwise
    \item determine the remainder set $\phi \remaind \alpha$ as the set of formulae $\psi_i$, where for each $w_i \in [\neg \alpha]_\Sigma^{+}$, $\psi_i = \overl({[\phi]_\Sigma^+ \cup \{w_i\}})$; and
    \item eventually, compute $\km(\psi_i)$ from $\pr(\psi_i)$, determine so $\km \min(\phi \remaind \alpha )$ and, conclude by applying Eq.~\ref{eq_contraction_2}.
\end{enumerate}


\begin{example}[Running example cont.] \label{runexG}
  Assume that we would like to contract $\KB$ with $f$, \ie~let us determine 
  $\KB \contr_{\kms} f$. We already know from Example~\ref{runexD} that
  $\KB \PentPS f$ and, thus, $\KB$ can not be itself a possible reminder. Therefore, we need to weaken it. 
To do so,   at first, we build $\pr_\Sigma0$, according to Example~\ref{runexC}, and then determine the remainder set by using Proposition~\ref{compreminder}  and the models of $\KB$ identified in Example~\ref{runexA}  together with their probabilities as from Example~\ref{runexC}. 
  

Now, the non-zero probability models of $\neg f$ are those in $\pr_\Sigma$ for which $f$  is false. These can be identified by: 
\begin{eqnarray*}
    [\neg f]^{+}_\Sigma &= & \{ bp\bar{o}\bar{f}w^{0.15},  b\bar{p}o\bar{f}w^{0.2}, b\bar{p}\bar{o}\bar{f}w^{0.2}\} \ .
\end{eqnarray*} 

\nd As a consequence, we get three remainders, one for each element in $[\neg f]^{+}_\Sigma$. That is,
\begin{eqnarray*}
    \psi_1 & = & \overl({[\KB]_\Sigma \cup \{bp\bar{o}\bar{f}w\}})   \\
    \psi_2 & = & \overl({[\KB]_\Sigma \cup \{b\bar{p}o\bar{f}w\}})  \\
    \psi_3 & = &  \overl({[\KB]_\Sigma \cup \{b\bar{p}\bar{o}\bar{f}w\}}) \ .
\end{eqnarray*} 

\nd Moreover, it is easily verified that
  \begin{eqnarray*}
\pr_\Sigma(\psi_1) & = & 0.35+0.15= 0.5  \\
\pr_\Sigma(\psi_2) & = & 0.35+0.2= 0.55 \\
\pr_\Sigma(\psi_3) & = & 0.35+0.2= 0.55 
  \end{eqnarray*}

\nd and consequently, 
$\kms(\psi_1)  =  1.0$,
$\kms(\psi_2)  \approx  0.862$, and
$\kms(\psi_3)  \approx  0.862$. 
Eventually, the knowledge minimal (least surprising/most probable) ones are $\psi_2$ and $\psi_3$ and, thus, we conclude with
\begin{eqnarray}
\KB \contr_{\kms} f & \equiv & \psi_1 \lor \psi_2  \nonumber \\
& \equiv & (\KB \lor \overl{b\bar{p}o\bar{f}w} \lor \overl{b\bar{p}\bar{o}\bar{f}w}) \nonumber \\ 
& \equiv & (\KB \lor \overl{b\bar{p}\bar{f}w}) \ . \label{contrkf} 
\end{eqnarray}

\nd Therefore, the contraction weakens the initial belief in Example~\ref{runexA} by making the belief `birds not being penguins do not fly and have wings' possible as well (\wrt~the given probability distribution). The latter is also the `least surprising' (most probable) candidate among the remainders. Moreover,  note that $\pr_\Sigma(\KB \contr_{\kms} f) = 0.75$, so 
$\kms(\KB \contr_{\kms} f) \approx 0.415$, and, thus, the information loss of this contraction can be quantified as 
\begin{equation} \label{losskf}
L^{\contr_{\kms}}_{\pr_\Sigma}(\KB,f) = 1.515 - 0.415 = 1.1 \ .
\end{equation}
\end{example}

\nd From Proposition~\ref{compreminder}, and, from what we have seen in Example~\ref{runexG} above, it is then not surprising that the following proposition holds.

\begin{proposition} \label{compreminderB}
Consider a probability distribution $\pr$ over $\W$, a KM $\km$ and formulae $\phi$ and $\alpha$. 
Then
%

\begin{equation} \label{eq_contrA}
\phi^{\contrk}_{\alpha} \equiv
\left \{ \begin{array}{ll}
\overl({[\phi]^+ \cup \min_{\km}([\neg \alpha]^+})) & \text{if } \phi\Pent \alpha\\
\overl{[\phi]^+} & \text{otherwise, }
\end{array}\right .
\end{equation}

\nd where $\min_{\km}([\neg \alpha]^+)=\{w\in\W \mid w\in[\neg \alpha]^+\text{ and there is no } w'\in\W\text{ s.t. } 
w'\in[\neg \alpha]^+ \text{ and } \km(w') < \km(w)\}$.
\end{proposition}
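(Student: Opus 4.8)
The plan is to derive the claim directly from Proposition~\ref{compreminder}, which already pins down the remainder set $\phi\remaind\alpha$, and then to track what $\km\min(\cdot)$ and $\bigvee(\cdot)$ do in Definition~\ref{def_KMcontraction}. I would first split on whether $\phi\Pent\alpha$. If $\phi\not\Pent\alpha$, Proposition~\ref{compreminder} gives $\phi\remaind\alpha=\{\overl{[\phi]^+}\}$; a singleton is its own $\km$-minimum and $\bigvee\{\overl{[\phi]^+}\}\equiv\overl{[\phi]^+}$, so the ``otherwise'' branch follows at once. Inside the case $\phi\Pent\alpha$, the sub-case $\top\Pent\alpha$ is equally immediate: then $[\neg\alpha]^+=\emptyset$, so $\min_{\km}([\neg\alpha]^+)=\emptyset$ and the claimed right-hand side $\overl{([\phi]^+\cup\min_{\km}([\neg\alpha]^+))}$ collapses to $\overl{[\phi]^+}$, exactly what Proposition~\ref{compreminder} returns in this situation.

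The substantive case is $\phi\Pent\alpha$ together with $\top\not\Pent\alpha$. Here Proposition~\ref{compreminder} states that the remainders are precisely the formulae $\psi_w\equiv\overl{([\phi]^+\cup\{w\})}$ for $w\in[\neg\alpha]^+$, with $\pr(\psi_w)=\pr(\overl{[\phi]^+})+\pr(w)$. The first thing I would establish is that $\km$ is in fact a function of the probability value alone: reading axiom {\bf (KM2)} with $\lhd=\leq$ in both directions shows that $\pr(\chi)=\pr(\chi')$ forces $\km(\chi)=\km(\chi')$, and combining this with the strict part of {\bf (KM2)} yields the full equivalence $\km(\chi)\leq\km(\chi')$ iff $\pr(\chi)\geq\pr(\chi')$. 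With this, $\km(\psi_w)\leq\km(\psi_{w'})$ holds iff $\pr(\psi_w)\geq\pr(\psi_{w'})$, and since $\pr(\overl{[\phi]^+})$ is a fixed additive constant this holds iff $\pr(w)\geq\pr(w')$, i.e.\ iff $\km(w)\leq\km(w')$. This chain identifies the $\km$-minimal remainders with the remainders attached to the $\km$-minimal possible models of $\neg\alpha$, giving $\km\min(\phi\remaind\alpha)=\{\psi_w\mid w\in\min_{\km}([\neg\alpha]^+)\}$.

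It then only remains to evaluate the disjunction. Since $\overl{S}$ has exactly $S$ as its set of models and $\overl{S_1}\lor\overl{S_2}\equiv\overl{S_1\cup S_2}$ (by the definition in Eq.~\ref{ffb} and Eq.~\ref{ffc}), the disjunction $\bigvee_{w\in\min_{\km}([\neg\alpha]^+)}\overl{([\phi]^+\cup\{w\})}$ is equivalent to $\overl{([\phi]^+\cup\min_{\km}([\neg\alpha]^+))}$, which is the first branch of the claim. The step I expect to be most delicate is the inversion of {\bf (KM2)}: as stated it is only a one-way implication from the probability order to the reversed $\km$ order, so I must argue carefully that equal probabilities yield equal $\km$ and that strictly larger probability yields strictly smaller $\km$, before the identification of $\km$-minimal remainders with $\km$-minimal worlds is legitimate. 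I would also make sure that ties are preserved throughout: when several worlds of $[\neg\alpha]^+$ share the maximal probability, all the corresponding remainders survive $\km\min$ and must appear in the final disjunction, in keeping with the Principle of Indifference.
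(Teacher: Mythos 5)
Your proof is correct and follows essentially the same route as the paper's: both split on $\phi\Pent\alpha$ and then on $\top\Pent\alpha$, invoke Proposition~\ref{compreminder} to characterise the remainders as $\overl{([\phi]^+\cup\{w\})}$ with $w\in[\neg\alpha]^+$, and identify the $\km$-minimal remainders with those attached to the $\km$-minimal worlds of $[\neg\alpha]^+$ before taking the disjunction. The only difference is presentational: you isolate the inversion of {\bf (KM2)} (that $\km(\chi)\leq\km(\chi')$ iff $\pr(\chi)\geq\pr(\chi')$) as an explicit preliminary step, whereas the paper applies the same fact implicitly inside its two contradiction arguments.
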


\nd A consequence of Proposition~\ref{compreminderB} is the following proposition establishing exactly the information loss of a contraction of $\phi$ \wrt~$\alpha$  in terms of the probability of $\phi$ and the probability of having a knowledge minimal (least surprising) $\neg \alpha$ world.

\begin{corollary} \label{compreminderBB}
Consider a probability distribution $\pr$ over $\W$, a KM $\km$ and formulae $\phi$ and $\alpha$. 
Then
\begin{equation} \label{eq_contrAA}
L^\contr_\pr(\phi, \alpha) = 
\left \{ \begin{array}{ll}
\log_2 (1+ \frac{\pr(\overl\min_{\km}([\neg \alpha]^+)}{\pr(\phi)}) & \text{if } \phi\Pent \alpha\\
0 & \text{otherwise . }
\end{array}\right .
\end{equation}
\end{corollary}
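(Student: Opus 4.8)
The plan is to read the result off directly from Proposition~\ref{compreminderB} by substituting the explicit form of the S-KM, $\kms(\phi) = -\log_2 \pr(\phi)$ (Eq.~\ref{khconc}), into the definition of information loss (Eq.~\ref{losscontra}). I would split into the two cases of Proposition~\ref{compreminderB} according to whether $\phi \Pent \alpha$.

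First, in the case $\phi \not\Pent \alpha$, Proposition~\ref{compreminderB} gives $\phi^{\contrk}_{\alpha} \equiv \overl{[\phi]^+}$, which is $\pr$-equivalent to $\phi$ and hence carries the same probability; since $\kms$ depends on $\phi$ only through $\pr(\phi)$, we get $\km(\phi) = \km(\phi^{\contrk}_{\alpha})$ and the loss vanishes, yielding the second branch of Eq.~\ref{eq_contrAA}.

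Next, in the case $\phi \Pent \alpha$, Proposition~\ref{compreminderB} gives $\phi^{\contrk}_{\alpha} \equiv \overl{([\phi]^+ \cup \min_{\km}([\neg \alpha]^+))}$. The key observation is that the two sets being unioned are disjoint: every world in $[\phi]^+$ is a non-zero probability model of $\phi$, hence (because $\phi \Pent \alpha$) a model of $\alpha$, whereas every world in $\min_{\km}([\neg \alpha]^+) \subseteq [\neg \alpha]^+$ is a model of $\neg\alpha$. Disjointness lets me add the probabilities world by world (the same additivity already recorded for a single world in Proposition~\ref{compreminder}), giving $\pr(\phi^{\contrk}_{\alpha}) = \pr(\phi) + \pr(\overl{\min_{\km}([\neg \alpha]^+)})$. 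Substituting into Eq.~\ref{losscontra} and using $\log_2 x - \log_2 y = \log_2(x/y)$ then yields
\[
L^\contr_\pr(\phi,\alpha) = \log_2 \frac{\pr(\phi) + \pr(\overl{\min_{\km}([\neg \alpha]^+)})}{\pr(\phi)} = \log_2\Bigl(1 + \frac{\pr(\overl{\min_{\km}([\neg \alpha]^+)})}{\pr(\phi)}\Bigr),
\]
which is the first branch.

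I expect no genuine difficulty here; the only points needing care are bookkeeping in degenerate subcases. In particular, when $\top \Pent \alpha$ (so $[\neg\alpha]^+ = \emptyset$) one has $\min_{\km}([\neg\alpha]^+) = \emptyset$ and $\pr(\overl{\min_{\km}([\neg\alpha]^+)}) = \pr(\remaind) = 0$, so the first branch collapses to $\log_2(1+0)=0$, consistently with the intuition that contracting a $\pr$-tautology loses no information. I would also make explicit that the computation uses the base-$2$ S-KM $\kms$ so that the stated $\log_2$ is exact; for a general KM satisfying (KM1)--(KM3), Proposition~\ref{kmsprop} shows the same argument only rescales the result by the corresponding constant factor.
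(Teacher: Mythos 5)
Your proposal is correct and follows essentially the same route as the paper's proof: both case-split on $\phi \Pent \alpha$, invoke Proposition~\ref{compreminderB}, use the additivity $\pr(\phi^{\contrk}_{\alpha}) = \pr(\phi) + \pr(\overl{\min_{\km}([\neg \alpha]^+)})$, and finish with the same logarithm manipulation. Your explicit justification of the disjointness of $[\phi]^+$ and $\min_{\km}([\neg\alpha]^+)$, the check of the degenerate case $\top \Pent \alpha$, and the remark that the $\log_2$ form presupposes $\km = \kms$ (with a constant rescaling otherwise, via Proposition~\ref{kmsprop}) are welcome refinements the paper leaves implicit, but they do not change the argument.
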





\nd So, for instance, by referring to Example~\ref{runexG} and Eq.~\ref{contrkf}, we have $\pr(\KB) = 0.35$ and $\pr(\overl{b\bar{p}\bar{f}w})) = \pr(\overl{b\bar{p}o\bar{f}w})) + \pr(\overl{b\bar{p}\bar{o}\bar{f}w})) = 0.2 + 0.2 = 0.4$ and, thus,
\begin{eqnarray*}
 L^\contr_\pr(\KB, f)   & = & \log_2 (1+ \frac{0.4}{0.35}) \\
 & = & 1.1 \ ,
\end{eqnarray*}

\nd which confirms Eq.~\ref{losskf}.

The following proposition tells us that a KM-contraction operator\footnote{We recall that different probability distributions may give rise to different KM-contraction operators.} is an AGM contraction operation.

\begin{proposition}\label{prop_AGM_sound}
Consider a probability distribution $\pr$ and a KM-contraction operator $\contrk$ based on $\pr$. Then $\contrk$ is an AGM contraction operator, that is, it satisfies postulates $(\contr 1)$-$(\contr 7)$.
\end{proposition}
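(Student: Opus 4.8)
The plan is to bypass the syntactic construction of $\contrk$ altogether and argue directly from the semantic characterisation of Proposition~\ref{compreminderB}, which I take as given. Abbreviate $M=[\phi]^+$ and, for a formula $\gamma$, $N_\gamma=\min_{\km}([\neg\gamma]^+)$; then Proposition~\ref{compreminderB} reads $[\phi^{\contrk}_{\alpha}]=M\cup N_\alpha$ when $\phi\Pent\alpha$, and $[\phi^{\contrk}_{\alpha}]=M$ otherwise. Since $\phi\Pent\psi$ means exactly $[\phi]^+\subseteq[\psi]$ and $\phi\equiv_{\pr}\psi$ means $[\phi]^+=[\psi]^+$, each postulate $(\contr 1)$--$(\contr 7)$ collapses to an elementary statement about sets of possible worlds. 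Before the case analysis I would record three facts used throughout: every world occurring in $M$ or in any $N_\gamma$ is possible, so $[\phi^{\contrk}_{\alpha}]^+=[\phi^{\contrk}_{\alpha}]$; the guard $\phi\Pent\gamma$ is equivalent to $M\subseteq[\gamma]$; and $N_\gamma\subseteq[\neg\gamma]^+$ with $N_\gamma\neq\emptyset$ precisely when $\top\not\Pent\gamma$ (that is, $[\neg\gamma]^+\neq\emptyset$).

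With these in hand, postulates $(\contr 1)$--$(\contr 5)$ reduce to one-line inclusions under a single split on the guard $\phi\Pent\alpha$: inclusion from $M\subseteq M\cup N_\alpha$ and $M\subseteq M$; vacuity and extensionality from $[\phi^{\contrk}_{\alpha}]^+=M=[\phi]^+$ respectively from $N_\alpha=N_\beta$ (because $\alpha\equiv_{\pr}\beta$ forces $[\neg\alpha]^+=[\neg\beta]^+$, so the two guards also agree); success from the fact that, when $\top\not\Pent\alpha$, $N_\alpha\neq\emptyset$ and $N_\alpha\cap[\alpha]=\emptyset$ so $M\cup N_\alpha\not\subseteq[\alpha]$ (the vacuity branch giving $\phi^{\contrk}_{\alpha}\equiv_{\pr}\phi\not\Pent\alpha$); and recovery from $M\subseteq[\alpha]$, $N_\alpha\subseteq[\neg\alpha]$, which yield $[\phi^{\contrk}_{\alpha}\land\alpha]=(M\cup N_\alpha)\cap[\alpha]=M=[\phi]^+\subseteq[\phi]$ in the first branch and $[\phi^{\contrk}_{\alpha}\land\alpha]\subseteq M\subseteq[\phi]$ in the second.

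The substantive work is the two conjunctive postulates, and the crux is a small lemma about $\km$-minimal worlds of a union. Since $\km$ assigns a real value to each world and $\min_{\km}(S)$ collects the worlds of the finite set $S$ attaining the least value, one has (a) $\min_{\km}(X\cup Y)\subseteq\min_{\km}(X)\cup\min_{\km}(Y)$, and (b) if the $\km$-minimum over $X\cup Y$ is attained inside $X$ then $\min_{\km}(X)\subseteq\min_{\km}(X\cup Y)$. I apply this with $X=[\neg\alpha]^+$, $Y=[\neg\beta]^+$ and $X\cup Y=[\neg(\alpha\land\beta)]^+$. For conjunctive overlap $(\contr 6)$, split on which of $\phi\Pent\alpha$, $\phi\Pent\beta$ holds: when both hold (equivalently $\phi\Pent\alpha\land\beta$), part (a) gives $N_{\alpha\land\beta}\subseteq N_\alpha\cup N_\beta$, so $M\cup N_{\alpha\land\beta}\subseteq(M\cup N_\alpha)\cup(M\cup N_\beta)=[\phi^{\contrk}_{\alpha}\lor\phi^{\contrk}_{\beta}]$; if a guard fails then $\phi\not\Pent\alpha\land\beta$, so $[\phi^{\contrk}_{\alpha\land\beta}]=M$, which sits inside the models of the disjunction in every subcase. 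For conjunctive inclusion $(\contr 7)$, assume $\phi^{\contrk}_{\alpha\land\beta}\not\Pent\alpha$: if $\phi\not\Pent\alpha\land\beta$ the hypothesis forces $\phi\not\Pent\alpha$, so both contractions equal $\overl{M}$ and the conclusion is trivial; if $\phi\Pent\alpha\land\beta$ then $M\subseteq[\alpha]$, so the hypothesis places some world of $N_{\alpha\land\beta}$ outside $[\alpha]$, i.e. the minimum over $[\neg(\alpha\land\beta)]^+$ is attained inside $[\neg\alpha]^+$; part (b) then gives $N_\alpha\subseteq N_{\alpha\land\beta}$, whence $[\phi^{\contrk}_{\alpha}]=M\cup N_\alpha\subseteq M\cup N_{\alpha\land\beta}=[\phi^{\contrk}_{\alpha\land\beta}]$, which is exactly $\phi^{\contrk}_{\alpha}\Pent\phi^{\contrk}_{\alpha\land\beta}$.

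The main obstacle is thus isolated entirely in parts (a)--(b) of the minimality lemma and in matching the guard subcases of $(\contr 6)$--$(\contr 7)$ to the two branches of Proposition~\ref{compreminderB}; once the lemma is in place, everything else is bookkeeping on model sets. A useful sanity check is the observation that, by (KM2), $\km(w)<\km(w')$ iff $\pr(w)>\pr(w')$, so $\min_{\km}$ is just selection of maximal-probability worlds and $\contrk$ depends only on $\pr$; in particular the verification never needs the specific form of $\kms$.
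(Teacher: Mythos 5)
Your proof is correct, but it takes a genuinely different route from the paper's. The paper argues directly from the syntactic definition $\phi^{\contrk}_{\alpha}=\bigvee\km\min(\phi\remaind\alpha)$: each postulate is verified by reasoning about the remainder set itself — e.g.\ recovery $(\contr 5)$ is proved by a somewhat delicate syntactic argument showing $\psi\Pent\alpha\limp\phi$ for every $\psi\in\km\min(\phi\remaind\alpha)$ (using $\km$-minimality and the equivalence $(\alpha\limp\phi)\limp\alpha\equiv\alpha$), and $(\contr 6)$--$(\contr 7)$ are handled by showing that remainders of $\alpha\land\beta$ are remainders of $\alpha$ or of $\beta$ and then splitting on how the $\km$-minima compare. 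You instead take the semantic characterisation of Proposition~\ref{compreminderB} as the starting point and collapse every postulate into set-theoretic bookkeeping over possible worlds, with all the real content isolated in the two-part lemma on $\min_{\km}$ of a union of finite sets (both parts of which are correct, and whose application is legitimate since $[\neg(\alpha\land\beta)]^+=[\neg\alpha]^+\cup[\neg\beta]^+$). This is not circular: Proposition~\ref{compreminderB} is established in the paper from Proposition~\ref{compreminder} alone, independently of the AGM postulates. What your route buys is modularity and transparency — recovery becomes the one-line computation $(M\cup N_\alpha)\cap[\alpha]=M$, and $(\contr 7)$ reduces cleanly to part (b) of your lemma — at the cost of leaning on \ref{compreminderB} as a prerequisite; the paper's proof is self-contained at the level of Definition~\ref{def_KMcontraction} but pays for it with longer, more intricate case analyses. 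Your closing observation that $\min_{\km}$ is just maximal-probability selection (via the strict and non-strict instances of (KM2)) is also correct and explains why the argument never needs the explicit form of $\kms$.
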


\nd Next, we prove that every AGM contraction operation can be represented as a KM-contraction operation by appropriately defining the probability distribution underlying the KM-contraction operation. That is, the various AGM contraction operations differentiate each other on a probability distribution over worlds they rely on.\footnote{Of course, such a characterisation would not be possible using Straccia's $\kmh$, where the uniform distribution is assumed.} 

To start with, it is well-known that AGM operations can be characterised by a total preorder over the set of interpretations $\W_\Sigma$.  Formally, given a formula $\phi$, a total preorder $\leq_{\phi}$ on $\W_\Sigma$, 
(we denote the related strict relation as $<_{\phi}$, while denote related symmetric relation as $\simeq_{\phi}$),
is  a {\em  faithful assignment for $\phi$} iff the following conditions hold: 

\begin{enumerate}
    \item if $\{w, w'\}  \subseteq [\phi]_\Sigma$, then $w \simeq_{\phi} w'$; and
    \item if ${w} \in [\phi]_\Sigma$ and ${w'} \not \in [\phi]_\Sigma$, then $w <_{\phi} w'$.
\end{enumerate}

\nd Eventually, given a preorder $\leq$ over worlds,
\[
\min_{\leq}([\psi]_\Sigma) = \{w\in [\psi]_\Sigma\mid w\leq w', \text{ for every } w'\in[\psi]_\Sigma\} \ .
\]

\nd The notion of a faithful assignment allows us to characterise contraction and revision operations satisfying $(\contr 1)-(\contr 7)$. The following is a representation result that can be easily derived and is the KM-based analogue to~\cite[Theorem 14]{Caridroit17} and~\cite[Theorem 3.3]{KatsunoMendelzon1991}.\footnote{The reader may compare it also with Proposition~\ref{compreminderB}.}


%
\begin{proposition}[\cite{Caridroit17,KatsunoMendelzon1991}]\label{KMranking}
Let $\phi,\alpha$ be formulae.
An operation $\contr$ on ${\phi}$ satisfies $(\contr 1)-(\contr 7)$ if and only if there is  a  faithful assignment $\leq_{\phi}$ for $\phi$ such that 
\[
{\phi}^{\contr}_{\alpha} \equiv \overl({[\phi] \cup    \min_{\leq_{\phi}}([\neg \alpha]})) \ .
\]
\end{proposition}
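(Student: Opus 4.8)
The plan is to prove a biconditional, establishing both directions of the representation result. This is essentially the classical AGM representation theorem (Katsuno–Mendelzon / Caridroit et al.) recast in terms of the $\pr$-entailment machinery developed earlier, so the overall strategy is to reduce to the standard argument while keeping track of the $\pr$-semantics. Throughout I would work semantically via the model-set characterisations already set up, recalling that $\phi^{\contr}_{\alpha}$ is determined by its set of possible models $[\phi^{\contr}_{\alpha}]^+$.

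For the \emph{if} direction (a faithful assignment yields a contraction satisfying $(\contr 1)$--$(\contr 7)$), suppose $\leq_{\phi}$ is a faithful assignment for $\phi$ and define $\phi^{\contr}_{\alpha} \equiv \overl{([\phi] \cup \min_{\leq_{\phi}}([\neg \alpha]))}$. I would then verify each postulate in turn, exactly as in the classical proof but reading $\entails$ as $\Pent$ and $\equiv$ as $\equiv_{\pr}$ where appropriate. The key observations are: inclusion $(\contr 1)$ holds because $[\phi] \subseteq [\phi] \cup \min_{\leq_{\phi}}([\neg\alpha])$; vacuity $(\contr 2)$ follows since $\phi \not\Pent \alpha$ forces $\min_{\leq_{\phi}}([\neg\alpha])$ to already lie among the $\phi$-models (using faithfulness condition~1, all $\phi$-models are $\leq_{\phi}$-minimal); success $(\contr 3)$ uses that $\min_{\leq_{\phi}}([\neg\alpha])$ is non-empty whenever $[\neg\alpha]^+ \neq \emptyset$, i.e.\ whenever $\top \not\Pent \alpha$; extensionality $(\contr 4)$ is immediate since $\alpha \equiv_{\pr} \beta$ gives $[\neg\alpha]^+ = [\neg\beta]^+$; recovery $(\contr 5)$ follows because intersecting the added $\neg\alpha$-worlds back with $\alpha$ removes them, leaving exactly $[\phi]$-worlds. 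The conjunctive postulates $(\contr 6)$ and $(\contr 7)$ are the delicate ones and rely on the standard fact that $\min_{\leq_{\phi}}([\neg\alpha] \cup [\neg\beta]) \subseteq \min_{\leq_{\phi}}([\neg\alpha]) \cup \min_{\leq_{\phi}}([\neg\beta])$ together with a case analysis on whether the minimal $\neg(\alpha\land\beta)$-worlds fall in $[\neg\alpha]$ or $[\neg\beta]$.

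For the \emph{only if} direction (a contraction operator satisfying the postulates induces a faithful assignment), I would construct the preorder from the operator. The canonical construction defines, for worlds $w, w' \notin [\phi]$, a relation $w \leq_{\phi} w'$ by $w \in [\phi^{\contr}_{\neg(\overl{w} \lor \overl{w'})}]$ (intuitively, $w$ is ranked no worse than $w'$ if contracting by the formula excluding both leaves $w$ as a selected model), and stipulate $w <_{\phi} w'$ for all $w \in [\phi]$, $w' \notin [\phi]$, with all $\phi$-models mutually $\simeq_{\phi}$. I would then show this relation is total and transitive—using $(\contr 6)$ and $(\contr 7)$ for transitivity—and that the faithfulness conditions hold by construction, and finally that the operator recovers the claimed form by $\min$. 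This is the direction where the postulates do real work.

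The main obstacle I expect is ensuring that the classical representation argument transfers cleanly to the $\pr$-entailment setting, where the ``worlds'' that matter are the \emph{possible} worlds $[\cdot]^+$ rather than all worlds. Concretely, the preorder and the minimisation should really range over possible worlds, and zero-probability worlds must be handled so they do not interfere with success, transitivity, or the totality of $\leq_{\phi}$; the cleanest route is to observe via Proposition~\ref{propextconf} and the definition of $\pr0$ that $\Pent$ behaves over $[\cdot]^+$ exactly as classical $\entails$ behaves over ordinary models, so that the standard Katsuno–Mendelzon and Caridroit-et-al.\ proofs apply verbatim once restricted to possible worlds. Since the excerpt explicitly flags this as a result that ``can be easily derived'' and is the ``KM-based analogue'' of known theorems, I would state the two directions and then, for the conjunctive postulates and the transitivity of the induced preorder, either give the case analysis sketched above or cite the cited theorems and indicate that the only change is the systematic replacement of $\entails$ by $\Pent$ over $[\cdot]^+$.
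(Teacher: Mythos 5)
Your proposal is correct and coincides with the paper's own treatment: the paper offers no proof of this proposition at all, explicitly deferring to Theorem 14 of Caridroit et al.\ and Theorem 3.3 of Katsuno--Mendelzon, and your two-direction sketch (verifying the postulates from a faithful assignment, and in the converse direction building the canonical preorder $w \leq_{\phi} w'$ iff $w \in [\phi^{\contr}_{\neg(\overl{w} \lor \overl{w'})}]$ with transitivity secured by the conjunctive postulates) is exactly the standard argument of those sources. Your observation that $\Pent$ over the possible worlds $[\cdot]^+$ behaves like classical $\entails$ over all worlds is the right transfer device, and it matches how the paper actually uses the result in Proposition~\ref{prop_AGM_compl}, where only strictly positive distributions arise so the two notions coincide.
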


\nd So, this proposition tells us that we may express the set of models of the contraction of $\phi$ by $\alpha$ as the union of the models of $\phi$ and of the minimal counter-models of $\alpha$ \wrt~$\leq_\phi$.

Given that $\Sigma$ is finite, the total preorder $\leq_{\phi}$ can be easily translated into a ranking function $r_{\phi}$: namely
%
\begin{equation}
  r_{\phi}(w)=\begin{cases}
    0, & \text{if }w\in\min_{\leq_{\phi}}(\W_\Sigma)\\
    i, & \text{if }w\in\min_{\leq_{\phi}}(\W_\Sigma\setminus \{w'\mid r_{\phi}(w')<i\}) \ ,
  \end{cases}
\end{equation}
\nd which we call a \emph{$\phi$-faithful ranking}. Note that models of $\phi$ have rank $0$.
The contraction $\contr$ associated to $r_{\phi}$  is then defined as
%
\begin{equation}\label{eq_ranked_contr}
{\phi}^{\contr}_{\alpha} \equiv \overl({[\phi] \cup    \min_{r_\phi}([\neg \alpha]})) \ .
\end{equation}
%
\begin{example}\label{ex_contr_1}
Let $\Sigma=\{a,b,c\}$ and $\phi\defined a\land b$. A $\phi$-faithful ranking $r$ is shown in Table \ref{Fig_1}. The other columns will be addressed in Example~\ref{ex_contr_2} later on.
To determine $(a\land b)^{\contr_{r}}_{b}$, we have to consider $\min_{r}([\neg b])=\{a\bar{b}c,a\bar{b}\bar{c}\}$. Hence $[(a\land b)^{\contr_r}_{b}]=\{abc,ab\bar{c},a\bar{b}c,a\bar{b}\bar{c}\}$ and, thus, $(a\land b)^{\contr_r}_{b} \equiv a$.
\begin{table}[h]
\caption{$\phi$-faithful ranking $r$ with possible $r$-faithful probability distribution $\pr$, and the relative KM.} \label{Fig_1}
\begin{center}
{
\begin{tabular}{|c|c|c|c|} \hline
$r$ & worlds & $\pr$ & $\kms$ \\ \hline
{$2$} & $\bar{a}\overline{b}{c}$ \hspace{0.2cm} $\bar{a}\overline{b}\bar{c}$ & $1/16$ & $4$ \\ \hline
{$1$} & $a\overline{b}c$ \hspace{0.2cm} $a\overline{b}\bar{c}$ \hspace{0.2cm} $\bar{a}bc$ \hspace{0.2cm} $\bar{a}{b}\bar{c}$ & $2/16$ & $3$\\ \hline
{$0$} & $abc$ \hspace{0.2cm} $ab\bar{c}$ & $3/16$ & $2.415$ \\ \hline
\end{tabular}
}
\end{center}

\end{table}
\end{example}

\nd Now, given any set of worlds $\W$ and any $\phi$-faithful ranking $r$ over $\W$, we can define an $r$-\emph{faithful probability distribution} $\pr_{r_\phi}$ over $\W$ respecting the following constraints:
\ii{i} for any $w,w'\in\W$, $\pr_{r_\phi}(w)\geq\pr_{r_\phi}(w')$ iff $r_\phi(w)\leq r_\phi(w')$; and
\ii{ii} for any $w\in\W$, $\pr_{r_\phi}>0$.

\begin{proposition}\label{prop_faith_prob}
For any formula $\phi$, set of worlds $\W$, and $\phi$-faithful ranking $r$ over $\W$, it is always possible to define an $r$-faithful probability distribution $\pr$ over $\W$.
\end{proposition}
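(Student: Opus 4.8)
The plan is to give an explicit construction and then verify the two defining conditions of an $r$-faithful distribution. Since $\Sigma$ is finite, $\W$ is finite, so the ranking $r$ takes only finitely many values, say in $\{0,1,\dots,m\}$ with $m=\max_{w\in\W} r(w)$; for each rank $i$ let $n_i=|\{w\in\W\mid r(w)=i\}|$. I would fix any strictly decreasing sequence of positive reals $a_0>a_1>\cdots>a_m>0$ — for concreteness $a_i=2^{-i}$ works — and set
\[
\pr(w)=\frac{a_{r(w)}}{Z}, \qquad \text{where } Z=\sum_{w\in\W} a_{r(w)}=\sum_{i=0}^{m} n_i\,a_i \ .
\]
Because every $a_i$ is strictly positive and $\W$ is nonempty, $Z>0$, so $\pr$ is well defined.

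Next I would check the required properties. That $\pr$ is a probability distribution is immediate: each $\pr(w)\geq 0$ and $\sum_{w\in\W}\pr(w)=Z/Z=1$. Condition \ii{ii}, namely $\pr(w)>0$ for all $w$, holds since $a_{r(w)}>0$ and $Z>0$. For condition \ii{i}, observe that because $(a_i)$ is strictly decreasing we have $a_i\geq a_j$ iff $i\leq j$; dividing by the common positive constant $Z$ preserves these inequalities, so $\pr(w)\geq\pr(w')$ iff $a_{r(w)}\geq a_{r(w')}$ iff $r(w)\leq r(w')$, which is exactly \ii{i}. In particular worlds of equal rank receive equal probability and worlds of strictly lower rank receive strictly greater probability, as forced by the biconditional.

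There is essentially no hard step here: the statement is an existence claim and the construction above is routine once one exploits finiteness of $\W$. The only point worth a word is that \ii{i} is a genuine ``iff'', so one must ensure both that lower rank forces strictly larger probability and that equal rank forces equal probability; using a single strictly monotone reindexing $i\mapsto a_i$ handles both at once. Finiteness of $\W$ (inherited from finiteness of $\Sigma$) is what lets me normalise by $Z$ with no convergence concern — were $\W$ infinite one would instead need a summable strictly decreasing assignment, but that situation does not arise here. Note also that faithfulness of $r$ is not actually needed for the argument: the construction produces a suitable $\pr$ for \emph{any} integer-valued ranking over the finite set $\W$.
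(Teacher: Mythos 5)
Your proof is correct and follows essentially the same route as the paper's: assign each world a strictly positive weight that is strictly decreasing in its rank (the paper uses the linear weights $m-r(w)$ with $m=1+\max_{w'}r(w')$, you use the geometric weights $2^{-r(w)}$) and normalise over the finite set $\W$. If anything, your verification of the ``iff'' in condition \ii{i} and of the normalisation is spelled out more carefully than in the paper, which leaves those checks to the reader.
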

%
%
%
%
%
\nd Now, using $\pr$ constructed as in Proposition~\ref{prop_faith_prob}, we can generate a KM-contraction that corresponds to the AGM-contraction we started with.

\begin{proposition}\label{prop_AGM_compl}
Let $\phi$ be any formula and $\contr$ any AGM-contraction for $\phi$. We can define a KM contraction $\contr_{\km}$ s.t., for all $\alpha$,
$\phi^\contr_\alpha \equiv \phi^{\contr_{\km}}_\alpha$ holds.
\end{proposition}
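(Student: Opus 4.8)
The plan is to extract, from the total preorder characterising the given AGM operator, a strictly positive probability distribution whose induced KM-contraction coincides with $\contr$; the representation machinery already in place reduces the work to reconciling three equivalent orderings. First I would invoke Proposition~\ref{KMranking}: since $\contr$ satisfies $(\contr 1)$--$(\contr 7)$, there is a faithful assignment $\leq_{\phi}$ on $\W_\Sigma$ with $\phi^{\contr}_{\alpha} \equiv \overl({[\phi] \cup \min_{\leq_{\phi}}([\neg\alpha]}))$ for every $\alpha$, and I translate $\leq_{\phi}$ into the associated $\phi$-faithful ranking $r_\phi$, which preserves the preorder so that $\min_{\leq_{\phi}}([\neg\alpha]) = \min_{r_\phi}([\neg\alpha])$.

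Next, using Proposition~\ref{prop_faith_prob}, I would fix an $r$-faithful probability distribution $\pr$ over $\W_\Sigma$: every world then has strictly positive probability, and $\pr(w)\geq\pr(w')$ iff $r_\phi(w)\leq r_\phi(w')$. I take $\contrk$ to be the KM-contraction determined by this $\pr$, with $\km=\kms$; by Proposition~\ref{kmsprop} the choice of KM is immaterial, since all of them rank worlds identically up to a positive multiplicative constant.

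The crucial intermediate step is to make the two representation formulae comparable. Because $\pr$ assigns non-zero probability to every world, $\pr0$ has no models, so $\phi\Pent\psi$ collapses to $\phi\entails\psi$ and $[\psi]^+=[\psi]$ for every $\psi$; in particular the guard $\phi\Pent\alpha$ of Proposition~\ref{compreminderB} coincides with $\phi\entails\alpha$, and $\phi$ is $\pr$-satisfiable whenever it is satisfiable. On single worlds the orderings invert: by Proposition~\ref{kmsprop}, $\km=c\cdot\kms$ with $c>0$ and $\kms(w)=-\log_2\pr(w)$, whence $\km(w)<\km(w')$ iff $\pr(w)>\pr(w')$; together with $r$-faithfulness this gives $\min_{\km}([\neg\alpha]^+)=\min_{\km}([\neg\alpha])=\min_{r_\phi}([\neg\alpha])=\min_{\leq_{\phi}}([\neg\alpha])$.

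Finally I would apply Proposition~\ref{compreminderB} and compare cases. When $\phi\entails\alpha$, its first branch gives $\overl({[\phi]^+\cup\min_{\km}([\neg\alpha]^+}))=\overl({[\phi]\cup\min_{\leq_{\phi}}([\neg\alpha]}))$, exactly the AGM value. When $\phi\not\entails\alpha$, its second branch gives $\overl{[\phi]^+}=\overl{[\phi]}$; on the AGM side the $\leq_{\phi}$-minimal $\neg\alpha$-worlds have rank $0$ (they exist because $[\phi]\cap[\neg\alpha]\neq\emptyset$) and are hence models of $\phi$, so $[\phi]\cup\min_{\leq_{\phi}}([\neg\alpha])=[\phi]$ and both sides again equal $\overl{[\phi]}$. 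Thus $\phi^{\contr}_{\alpha}\equiv\phi^{\contrk}_{\alpha}$ for all $\alpha$. The proof is mostly a chaining of earlier results; the only delicate points I expect are the order inversion above (least surprising $\leftrightarrow$ most probable $\leftrightarrow$ least ranked) and, in the $\phi\not\entails\alpha$ case, verifying that the minimal counter-models of $\alpha$ are absorbed into $[\phi]$, so that the uniform formula of Proposition~\ref{KMranking} matches the explicit case split of Proposition~\ref{compreminderB}.
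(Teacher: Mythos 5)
Your proof is correct and follows essentially the same route as the paper's: invoke Proposition~\ref{KMranking} to obtain a faithful ranking, use Proposition~\ref{prop_faith_prob} to build an $r$-faithful probability distribution, match the three orderings (rank, probability, KM), and conclude via Proposition~\ref{compreminderB}. The only cosmetic difference is that in the $\phi\not\entails\alpha$ case the paper appeals to the vacuity postulate while you argue directly that the minimal counter-models of $\alpha$ are absorbed into $[\phi]$; both are valid and your explicit observation that strictly positive probabilities collapse $\Pent$ to $\entails$ is exactly the implicit step in the paper's argument.
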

\begin{example}\label{ex_contr_2}
Consider Table \ref{Fig_1}. The $\pr$ (resp.~$\kms$) column indicates world's probabilities (resp.~KM) within the same rank according to an $r$-faithful probability distribution $\pr$, as by  Proposition~\ref{prop_faith_prob}. For instance, if $\pr(abc)=3/16$ (resp.~$\kms(abc)=2.415$). 
%
To determine $(a\land b)^{\contr_{\kms}}_{b}$ \wrt~that $\pr$, we have to consider $\min_{\kms}([\neg b])=\{a\bar{b}c,a\bar{b}\bar{c}\}$. Hence, $[(a\land b)^{\contr_{\kms}}_{b}]=\{abc,ab\bar{c},a\bar{b}c,a\bar{b}\bar{c}\}$ and $(a\land b)^{\contr_{\kms}}_{b}\equiv a \equiv (a\land b)^{\contr_{r}}_{b}$, in agreement with Proposition~\ref{prop_AGM_compl}.
\end{example}
\nd A consequence of Propositions \ref{prop_AGM_sound} and \ref{prop_AGM_compl} is the following.
\begin{proposition}\label{th_repr_contr}
A function $\contr$ is an AGM contraction operation iff it is a KM-contraction operation.
\end{proposition}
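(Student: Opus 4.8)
The plan is to prove the biconditional by splitting it into its two implications, each of which has already been established as one of the two immediately preceding propositions; no new machinery should be required, so the work here is chiefly bookkeeping together with pinning down the precise reading of the statement. First, for the direction ``KM-contraction operation $\Rightarrow$ AGM contraction operation'', I would simply invoke Proposition~\ref{prop_AGM_sound}: any operator $\contrk$ built from a probability distribution $\pr$ via Definition~\ref{def_KMcontraction} satisfies the postulates $(\contr 1)$--$(\contr 7)$, and satisfying these postulates is, by definition, what it means to be an AGM contraction operation. Hence this direction is immediate.

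Second, for ``AGM contraction operation $\Rightarrow$ KM-contraction operation'', I would appeal to Proposition~\ref{prop_AGM_compl}: given an AGM contraction $\contr$ on a belief $\phi$, that proposition produces a probability distribution $\pr$ (via the faithful ranking of Proposition~\ref{prop_faith_prob}) whose induced KM-contraction $\contr_{\km}$ satisfies $\phi^{\contr}_{\alpha} \equiv \phi^{\contr_{\km}}_{\alpha}$ for every $\alpha$. Reading ``is a KM-contraction operation'' extensionally — that is, agreeing up to logical equivalence with some $\contr_{\km}$ on all inputs — this is exactly the assertion that $\contr$ is a KM-contraction operation. So the two cited propositions directly furnish the two halves of the equivalence.

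The one point that deserves care, and where I expect the only genuine subtlety to lie, is reconciling the two notions of consequence in play. The postulates and the KM framework are phrased with $\pr$-entailment $\Pent$ and $\pr$-equivalence $\equiv_{\pr}$, whereas Proposition~\ref{prop_AGM_compl} delivers agreement only up to classical equivalence $\equiv$. This mismatch is harmless precisely because the faithful distribution of Proposition~\ref{prop_faith_prob} assigns strictly positive probability to every world: then $\pr0 \equiv \remaind$, so $\neg\pr0 \equiv \top$, whence $\phi \Pent \psi$ collapses to $\phi \entails \psi$ and $\equiv_{\pr}$ to $\equiv$. I would state this collapse explicitly so that the conclusions of the two propositions can be glued without a gap.

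Beyond that, the remaining obstacles are purely expository rather than mathematical: I would make clear that the biconditional is intended relative to a fixed belief $\phi$ (matching the per-$\phi$ formulation of the faithful assignment in Proposition~\ref{KMranking}), and that ``being a KM-contraction operation'' is meant in the extensional sense of coinciding with some $\contr_{\km}$ up to ($\pr$-)equivalence on all $\alpha$, rather than being literally syntactically identical. The substantive content — verifying $(\contr 1)$--$(\contr 7)$ on the one side and constructing the faithful distribution on the other — has already been discharged in Propositions~\ref{prop_AGM_sound} and~\ref{prop_AGM_compl}, so once these clarifications are in place the proof closes in a couple of lines.
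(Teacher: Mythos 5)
Your proposal is correct and matches the paper's own treatment: the paper states this proposition as an immediate consequence of Propositions~\ref{prop_AGM_sound} and~\ref{prop_AGM_compl}, exactly the two-implication decomposition you give. Your additional remark that $\pr$-entailment collapses to classical entailment under the strictly positive faithful distribution is a worthwhile clarification that is implicit in the paper's proof of Proposition~\ref{prop_AGM_compl} (``as there are no zero probability worlds''), so nothing is missing.
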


\subsection{Sphere-based view of contraction} \label{sbs}

\nd An interesting way of viewing a contraction operator may be given pictorially in terms of a \emph{system of spheres} (see, \eg~\cite{Rott99,Grove88}), as illustrated in Figure~\ref{figsphere1} (a). 
\begin{figure}[!t]
\centering 
\begin{tabular}{cc}
\includegraphics[scale=0.30]{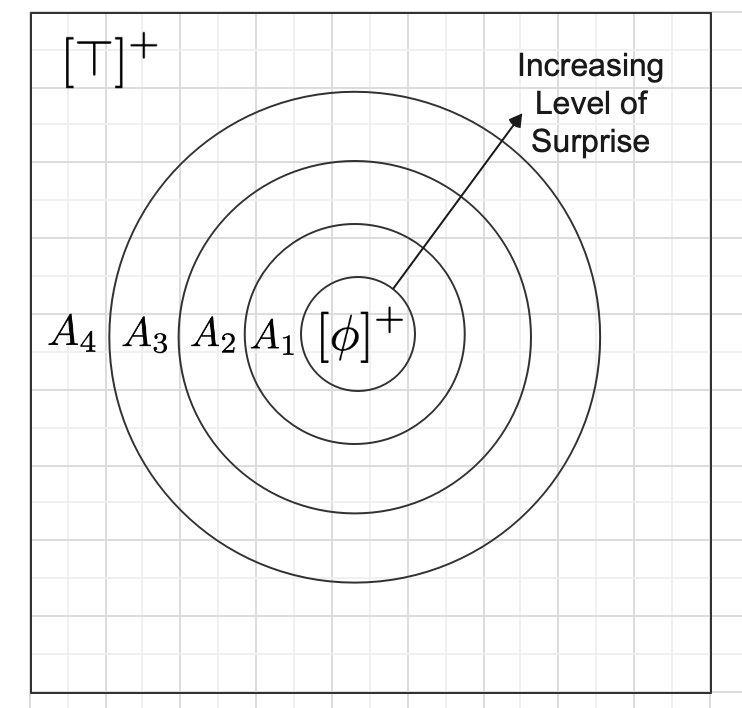} & 
\includegraphics[scale=0.30]{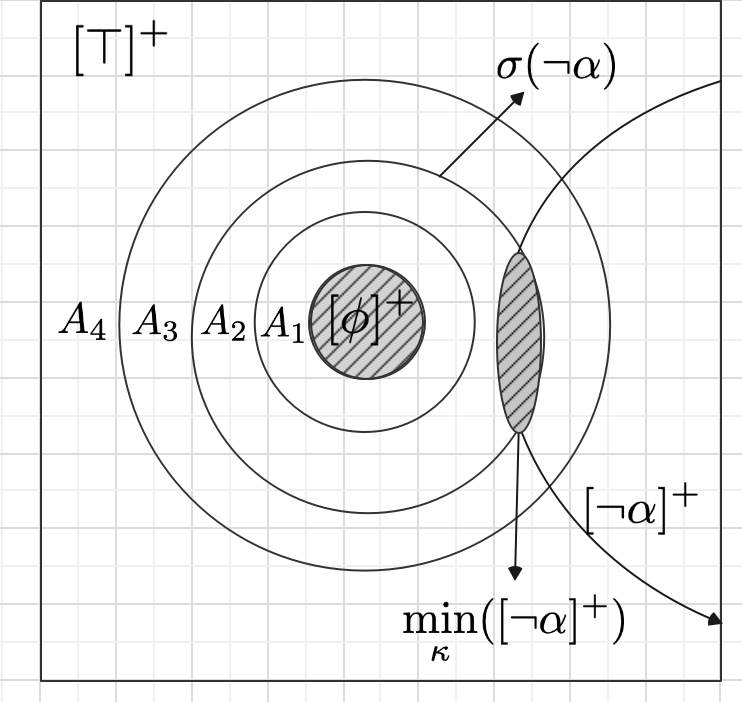} \\
(a) & (b)  \\
\end{tabular}
\caption{(a) System of spheres; (b) System of spheres with contraction $\phi^\contr_\alpha$ shaded.}
\label{figsphere1}
\end{figure}
Specifically,  consider a formula $\phi$ and a formula $\alpha$ to be contracted from $\phi$.
A \emph{sphere} $\sigma$ is a set of possible worlds.\footnote{That is, worlds having non-zero probability.}
A \emph{system of spheres} ${\ss}_\phi$ centred on $\phi$~\footnote{We will omit the reference to  $\ss_\phi$ if clear from context.} is an order $\sigma_0 < \sigma_1 \ldots < \sigma_n$ of nested spheres
(in the sense of set inclusion $\sigma_i \subset \sigma_{i+1}$, $0 \leq i < n$) in which the smallest or innermost sphere is $\sigma_0=[\phi]^+$ (the possible models of $\phi$), the outermost is $\sigma_n = [\top]^+$ (all possible models) and where
\begin{enumerate}
    \item all possible worlds in the \emph{annulus} $A_{i+1} = \sigma_{i+1} \setminus \sigma_i$ ($0\leq i < n$) have the same probability, \ie~for $w,w'\in A_{i+1}, \pr(w) = \pr(w')$; and
    \item the probability of a world decreases from the innermost annulus $A_1$ to the outermost annulus $A_{n}$, 
    \ie~for $w\in A_i$ and $w' \in A_{i+1}$ ($i > 0$), we have $\pr(w) > \pr(w')$.
\end{enumerate}
\nd Intuitively, the agent believes the actual world to be one
of the $\phi$-worlds but does not have sufficient information to establish which
one. However, the agent may be mistaken, in which case it believes that the actual world is most likely to be one of those in the next greater sphere and so on. That is, with 2. we want to say that the nesting of the spheres defines an order in terms of \emph{increasing surprise} of the worlds an agent considers possible. Specifically, moving from the innermost annulus to the outermost one increases the surprise, from an information-theoretic point of view. On the other hand, point 1, encodes the fact that each possible world within an annulus is equally plausible and there is no reason to prefer one world to the other if we accept the Principle of Indifference. 
Now, in order to contract $\alpha$ from $\phi$~\footnote{For ease, assume $[\phi]^+ \subseteq [\alpha]^+$ and $[\neg \alpha]^+ \neq \emptyset$.} we need to add at least one $\neg \alpha$ model to the models of $\phi$. As we are inspired by the Principles of Minimal Surprise and Indifference, to do so we consider the smallest sphere $\sigma(\neg \alpha))$ intersecting $[\neg \alpha]^+$, which is indeed $\min_{\km}([\neg \alpha]^+) = \sigma(\neg \alpha) \cap [\neg \alpha]^+$ (see Figure~\ref{figsphere1} (b)). By the Principle of Indifference, each world in this intersection is equally plausible and, thus, the possible models of $\phi^\contr_\alpha$ (see also by Proposition~\ref{compreminderB}) are those in $[\phi]^+ \cup \min_{\km}([\neg \alpha]^+)$, \ie~the shaded ones in Figure~\ref{figsphere1} (b).

\begin{remark}\label{remsigma}
Note that we may express $\sigma(\neg \alpha)$ in the following way. 
For a formula $\beta$, let $p_{\beta}^{\max}$ be the maximal probability of the possible $\beta$-models, \ie 
\[
p_{\beta}^{\max} = \max \{ \pr(w) \mid w \in [\beta]^+ \} \ .
\]
\nd Note that $\pr(w) = p_{\beta}^{\max}$ for all $w\in\min_\km [\beta]^+$. Then 
\begin{equation} \label{sigmaalpha}
    \sigma(\neg \alpha)) = [\phi]^+ \cup \{w \in [\top]^+ \setminus [\phi]^+ \mid \pr(w) \geq p_{\neg \alpha}^{\max} \} \ .
\end{equation}
\end{remark}

\subsection{A contraction without recovery} \label{contrarecovery}
\nd As we have seen, our contraction operator is an AGM contraction and, thus, satisfies the (in)famous \emph{recovery} postulate $(\contr 5)$ around which there is a longstanding debate. Various contraction proposals have been developed not satisfying $(\contr 5)$, such as \emph{saturatable contraction}~\cite{Levi91},  \emph{semi contraction}~\cite{ferme98}, \emph{severe withdrawal}~\cite{Rott99}, \emph{recuperative/ring withdrawal}~\cite{Ferme24}, and  more (see \eg~\cite{Ferme18}). 

The sole purpose of this section is to illustrate how one may build from our KM-based contraction operator  one not satisfying the recovery postulate. To do so, we focus on \emph{severe withdrawal}~\cite{Rott99}, denoted $\ddiv$, since one may give for it a simple sphere-based pictorial explanation and formalisation.\footnote{For many other approaches and their sphere-based view, see \eg~\cite[Appendix B]{Rott99}.}

From a sphere-based point of view, given a formula $\phi$ and a formula $\alpha$ to be contracted,\footnote{Again, for ease we assume $ [\neg \alpha]^+ \neq \emptyset$.} the severe withdrawal of $\alpha$ from $\phi$, denoted $\phi^{\ddiv}_\alpha$, is the shaded area in Figure~\ref{figspheresevere} (compare with Figure~\ref{figsphere1} (b)).
\begin{figure}[!t]
\centering 
\begin{tabular}{c}
\includegraphics[scale=0.30]{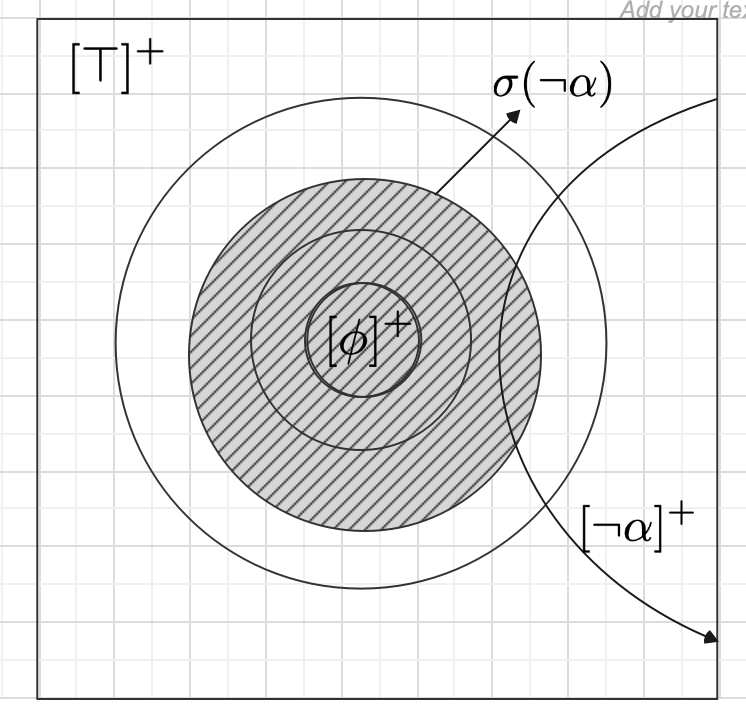} 
\end{tabular}
\caption{System of spheres for severe withdrawal showing $\phi^{\contrs}_\alpha$ shaded.}
\label{figspheresevere}
\end{figure}
Essentially, severe withdrawal relies also on the \emph{Principle of Weak Preference}, according to which ``if one world is considered at least as plausible as another then the former should be considered if the latter is. That is, an agent has determined a preference over worlds (the spheres) and does not prefer the (closest) $\neg \alpha$-worlds over the (closer) $\alpha$-worlds just because it is giving up belief in $\alpha$. 
Its preferences are established prior to the change and we assume that there is no reason to alter them in light of the new information"~\cite{Rott99}. Consequently, by following~\cite{Rott99}, we may define our km-based \emph{severe withdrawal} operator as the formula corresponding to the set of possible worlds in the smallest sphere intersecting $[\neg \alpha]^+$, 
\ie~$\KB^{\ddiv}_\alpha =  \overl{\sigma(\neg \alpha)}$.


Formally, as we did for contraction, we provide next the $\pr$-analogue postulates of sever withdrawal given in~\cite{Rott99} as follows:
a function $\contrs$ is a \emph{severe withdrawal} function if it satisfies the following postulates:

\[
\begin{array}{lll}
(\contrs 1)    &  \phi\Pent \phi^{\contrs}_{\alpha} & \\
(\contrs 2)    & \text{If } \phi\not\Pent \alpha \text{ or } \top \Pent \alpha \text{, then } \phi^{\contrs}_{\alpha}\equiv_{\pr} \phi & \\
(\contrs 3) & \text{If } \top \not\Pent \alpha\text{, then }\phi^{\contrs}_{\alpha}\not\Pent \alpha & \\ 
(\contrs 4) & \text{If }\alpha\equiv_{\pr} \beta\text{, then }\phi^{\contr}_{\alpha}\equiv_{\pr} \phi^{\contr}_{\beta} & \\
(\contrs 6a) & \text{If } \top \not\Pent \alpha \text{, then } \phi^{\contrs}_{\alpha\land \beta}\Pent\phi^{\contrs}_{\alpha} & \\ 
(\contrs 7) & \text{If }\phi^{\contrs}_{\alpha\land \beta}\not\Pent \alpha\text{, then }\phi^{\contrs}_{\alpha}\Pent\phi^{\contrs}_{\alpha\land \beta} & 
\end{array}
\]
\nd Postulates $(\contrs 1)$, $(\contrs 3)$, $(\contrs 4)$ and $(\contrs 7)$ are as for AGM contraction. Postulate $(\contrs 2)$, is as the vacuity postulate $(\contr 3)$, but contains an additional antecedent $\top \not\Pent \alpha$ to take care of the limiting case, which was previously handled with the aid of the recovery postulate, which now is missing. Postulate $(\contrs 6)$ has been replaced with the stronger \emph{antitony condition} $(\contrs 6a)$, which states that anything that is given up to remove a strong sentence (the conjunction of $\alpha$ and $\beta$) should also be given up when removing a weaker sentence ($\alpha$), provided the latter is not logically true~\cite{Rott99}. 

We now show, by relying on $\pr$-analogue construction as proposed by~\cite{Rott99}, how to define a severe withdrawal function $\contrks$ from our KM-based contraction function $\contrk$.

\begin{definition}[KM-Severe withdrawal from $\contr$]\label{def_severecontr}
Given formulae $\phi$, $\alpha$ and a KM-contraction $\contrk$. Then the \emph{KM-severe  withdrawal function defined from $\contrk$} is defined as follows: 
\begin{equation} \label{eq_contrASevere}
\phi^{\contrks}_{\alpha} = 
\left \{ \begin{array}{ll}
\bigwedge \{ \beta \mid \phi^{\contrk}_{\alpha \land \beta} \Pent \beta  \} & 
\text{if } \top \not \Pent \alpha \\
\overl{[\phi]^+} & \text{otherwise.}
\end{array}\right .
\end{equation}


\end{definition}

\nd \nd Please note that again the set $\{ \beta \mid \phi^{\contrk}_{\alpha \land \beta} \Pent \beta  \}$ is finite, as for sets of formulae, we assumed that they cannot contain any pair of equivalent formulae (\cf~Remark~\ref{finitesets}).

Now, observe that in order $\phi^{\contrk}_{\alpha \land \beta} \Pent \beta$ to hold, it should be the case that the smallest sphere $\sigma(\neg \alpha)$ intersecting $[\neg \alpha]^+$ has to be interior to the smallest 
sphere $\sigma(\neg \beta)$ intersecting $[\neg \beta]^+$, \ie~$\sigma(\neg \alpha) < \sigma(\neg \beta)$, as otherwise we would add $\neg \beta$-worlds to $\phi^{\contrk}_{\alpha \land \beta}$ and, thus, $\phi^{\contrk}_{\alpha \land \beta} \Pent \beta$ would not hold. Consequently, $\sigma(\neg \alpha)$ contains no $\neg \beta$-worlds, \ie~$\beta$ is satisfied by all worlds in  $\sigma(\neg \alpha)$ (see Figure~\ref{figspheresevereB}).
\begin{figure}[!t]
\centering 
\begin{tabular}{c}
\includegraphics[scale=0.30]{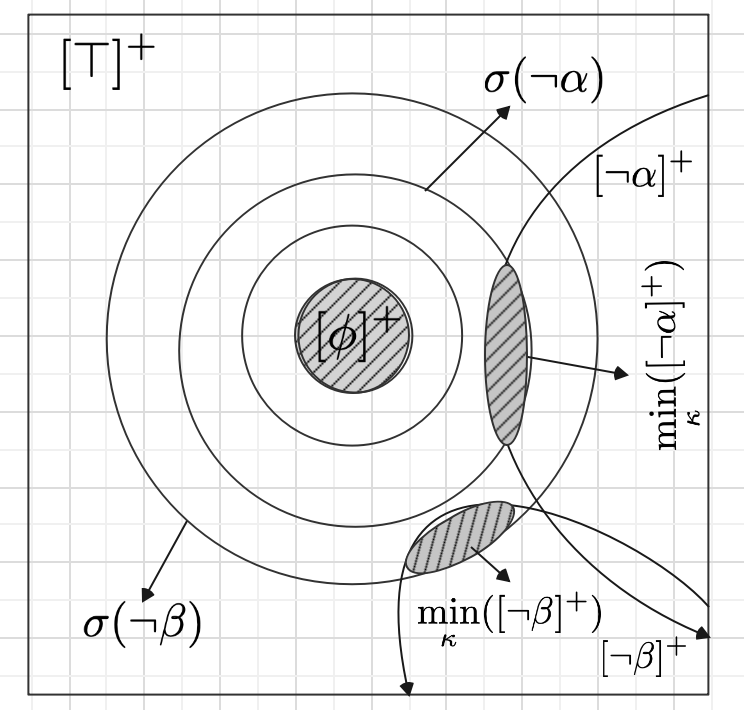} 
\end{tabular}
\caption{System of spheres where $\phi^{\contrk}_{\alpha \land \beta} \Pent \beta$ holds. }
\label{figspheresevereB}
\end{figure}
Therefore, $\{ \beta \mid \phi^{\contrk}_{\alpha \land \beta} \Pent \beta  \}$ is the set of formulae that hold in $\sigma(\neg \alpha)$, which in turns mean that $\phi^{\ddiv}_\alpha =  \overl{\sigma(\neg \alpha)}$. From what we have said, the following Proposition is immediate.

\begin{proposition}\label{propseveresigma}
Given formulae $\phi$, $\alpha$, a KM-contraction $\contrk$ and the KM-severe  withdrawal function $\contrks$  defined from $\contrk$.
Then 
\begin{equation} \label{eq_contrASeveresigma}
\phi^{\contrks}_{\alpha} \equiv \left \{ \begin{array}{ll}
\overl{\sigma(\neg \alpha)}  & \text{if }\top \not \Pent \alpha \\
\overl{[\phi]^+} & \text{otherwise}.
\end{array}\right .
\end{equation}
\nd where $\sigma(\neg \alpha)$ is defined as in Eq.~\ref{sigmaalpha}.
\end{proposition}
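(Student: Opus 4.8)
The plan is to treat the two branches of Definition~\ref{def_severecontr} separately. The case $\top \Pent \alpha$ is immediate: there both the definition of $\contrks$ and the claimed formula return $\overl{[\phi]^+}$, so nothing needs to be shown. All the content lies in the case $\top \not\Pent \alpha$ (equivalently $[\neg\alpha]^+ \neq \emptyset$), where $\phi^{\contrks}_{\alpha} = \bigwedge B$ with $B = \{\beta \mid \phi^{\contrk}_{\alpha\land\beta}\Pent\beta\}$, and I must show $\bigwedge B \equiv \overl{\sigma(\neg\alpha)}$. My strategy is to first prove the membership characterisation
\[
\beta \in B \quad\text{iff}\quad \sigma(\neg\alpha) \subseteq [\beta] ,
\]
i.e. $\beta \in B$ exactly when $\beta$ is satisfied by every world of the sphere $\sigma(\neg\alpha)$, and then to read off the equivalence from it.

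To establish this characterisation I would expand $\phi^{\contrk}_{\alpha\land\beta}$ via Proposition~\ref{compreminderB}. Writing $\neg(\alpha\land\beta) \equiv \neg\alpha\lor\neg\beta$, so that $[\neg(\alpha\land\beta)]^+ = [\neg\alpha]^+ \cup [\neg\beta]^+$, the possible models of $\phi^{\contrk}_{\alpha\land\beta}$ are $[\phi]^+ \cup \min_{\km}([\neg\alpha]^+\cup[\neg\beta]^+)$ (and just $[\phi]^+$ when $\phi\not\Pent\alpha\land\beta$). By axiom {\bf (KM2)} and Proposition~\ref{probent}, $\min_{\km}$ selects precisely the worlds of maximal probability, so $\min_{\km}([\neg\alpha]^+\cup[\neg\beta]^+)$ is the set of worlds attaining $\max(p^{\max}_{\neg\alpha}, p^{\max}_{\neg\beta})$, with $p^{\max}_{\cdot}$ as in Remark~\ref{remsigma}. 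Then $\phi^{\contrk}_{\alpha\land\beta}\Pent\beta$ holds iff every such world, together with every world of $[\phi]^+$, satisfies $\beta$: the $[\phi]^+$-part amounts to $\phi\Pent\beta$, and the $\min_\km$-part fails exactly when some maximal-probability world is a $\neg\beta$-world, which (by the probability ordering of the spheres) happens iff $p^{\max}_{\neg\beta}\geq p^{\max}_{\neg\alpha}$. Hence $\beta\in B$ iff $\phi\Pent\beta$ and $p^{\max}_{\neg\beta}<p^{\max}_{\neg\alpha}$, that is, iff $\sigma(\neg\alpha)<\sigma(\neg\beta)$ in the nesting; comparing with the explicit form of $\sigma(\neg\alpha)$ in Eq.~\ref{sigmaalpha}, this is exactly the condition $\sigma(\neg\alpha)\subseteq[\beta]$.

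With the characterisation in hand, the equivalence is a short two-inclusion argument. The formula $\overl{\sigma(\neg\alpha)}$ has model set exactly $\sigma(\neg\alpha)$, so it trivially satisfies $\sigma(\neg\alpha)\subseteq[\overl{\sigma(\neg\alpha)}]$ and therefore belongs to $B$; this gives $[\bigwedge B]\subseteq\sigma(\neg\alpha)$. Conversely, every $\beta\in B$ holds throughout $\sigma(\neg\alpha)$, so every world of $\sigma(\neg\alpha)$ models every conjunct and thus $\sigma(\neg\alpha)\subseteq[\bigwedge B]$. The two inclusions give $[\bigwedge B]=\sigma(\neg\alpha)=[\overl{\sigma(\neg\alpha)}]$, whence $\phi^{\contrks}_\alpha=\bigwedge B\equiv\overl{\sigma(\neg\alpha)}$, as required.

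I expect the membership characterisation to be the crux, and the delicate point there is the bookkeeping of the probability ordering that underlies the system of spheres: one must use that $[\phi]^+$ is the innermost (strictly most probable) sphere so that the threshold $p^{\max}_{\neg\alpha}$ cleanly determines $\sigma(\neg\alpha)$, and must separately dispose of the degenerate cases $\phi\not\Pent\alpha$, $\phi\not\Pent\beta$ and $\top\Pent\beta$ (where $[\neg\beta]^+=\emptyset$ and the inequality $p^{\max}_{\neg\beta}<p^{\max}_{\neg\alpha}$ is read as vacuously true). Finiteness of $B$ (Remark~\ref{finitesets}) guarantees that $\bigwedge B$ is a well-formed formula.
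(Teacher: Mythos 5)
Your proof is correct and takes essentially the same route as the paper: the paper establishes this proposition via the informal sphere discussion immediately preceding it, arguing that $\phi^{\contrk}_{\alpha \land \beta} \Pent \beta$ holds exactly when $\sigma(\neg \alpha)$ is interior to $\sigma(\neg \beta)$, so that $\{ \beta \mid \phi^{\contrk}_{\alpha \land \beta} \Pent \beta \}$ is precisely the set of formulae holding throughout $\sigma(\neg \alpha)$ and its conjunction is $\overl{\sigma(\neg \alpha)}$, with the proposition then declared ``immediate''. Your membership characterisation via Proposition~\ref{compreminderB} and the $p^{\max}$ threshold comparison is a rigorous rendering of that same argument (matching Remark~\ref{remsigma}), and your closing two-inclusion step (using that $\overl{\sigma(\neg \alpha)}$ itself belongs to the set) merely makes explicit what the paper leaves implicit.
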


\nd In other words, ``by contracting $\alpha$ from $\phi$ according to Definition~\ref{def_severecontr}, we keep those formulae $\beta$ that would be retained when given the choice to remove either $\alpha$ or $\beta$ (or both)"~\cite{Rott99}.


\begin{remark} \label{fermring}
In~\cite{Ferme24} it is shown that one may define a ring withdrawal operator using a severe withdrawal operator
or even using an AGM contraction operator. Along a similar line, following what we have done in this section, it not difficult to see that one may define the analogue KM-based ring withdrawal operator using a KM-based severe withdrawal operator or even using a KM-based AGM contraction operator.
\end{remark}

\nd By Proposition~\ref{propseveresigma}, obviously 
\begin{equation} \label{moresevere}
    \phi^{\contrk}_{\alpha} \Pent \phi^{\contrks}_{\alpha} \, 
\end{equation}

\nd holds (see also Figure~\ref{figspheresevere}). Consequently, both
\[
\km(\phi^{\contrk}_{\alpha}) \geq \km(\phi^{\contrks}_{\alpha})
\]
\nd and
\[
L^{\contrk}_\pr(\phi, \alpha) \leq L^{\contrks}_\pr(\phi, \alpha) 
\]

\nd hold, confirming that severe withdrawal loses more knowledge/information than an AGM contraction. Specifically, the loss of the KM-based severe withdrawal function and analogue of Corollary~\ref{compreminderBB} is 

\begin{corollary} \label{compreminderBBS}
Consider a probability distribution $\pr$ over $\W$, formulae $\phi$ and $\alpha$. 
Then
\begin{equation} \label{eq_contrAAS}
L^{\contrks}_\pr(\phi, \alpha) = 
\left \{ \begin{array}{ll}
\log_2 \frac{\pr(\overl\sigma(\neg \alpha))}{\pr(\phi)} & 
\text{if }  \top \not \Pent \alpha \\
0 & \text{otherwise . }
\end{array}\right .
\end{equation}
\end{corollary}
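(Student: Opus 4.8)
The plan is to obtain the result by a direct computation, unfolding the definition of information loss with the Shannon KM $\kms$ and substituting the closed form of $\phi^{\contrks}_\alpha$ supplied by Proposition~\ref{propseveresigma}. This mirrors the derivation of Corollary~\ref{compreminderBB} from Proposition~\ref{compreminderB}, so no new machinery is needed. First I would recall that by Eq.~\ref{losscontra} the information loss of the severe withdrawal is $L^{\contrks}_\pr(\phi,\alpha) = \kms(\phi) - \kms(\phi^{\contrks}_\alpha)$, that the Shannon KM is $\kms(\beta) = -\log_2 \pr(\beta)$ (Eq.~\ref{khconc}), and that logically equivalent formulae share the same set of models, hence the same probability by Eq.~\ref{prphi}; this lets me replace $\phi^{\contrks}_\alpha$ by any formula equivalent to it when computing its probability.

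Next I would split on the two cases of Proposition~\ref{propseveresigma}. In the case $\top \not\Pent \alpha$, substituting $\phi^{\contrks}_\alpha \equiv \overl{\sigma(\neg\alpha)}$ and telescoping the two logarithms gives
\[
L^{\contrks}_\pr(\phi,\alpha) = -\log_2 \pr(\phi) + \log_2 \pr(\overl{\sigma(\neg\alpha)}) = \log_2 \frac{\pr(\overl{\sigma(\neg\alpha)})}{\pr(\phi)},
\]
which is the first branch of Eq.~\ref{eq_contrAAS}. In the complementary case $\top \Pent \alpha$, Proposition~\ref{propseveresigma} gives $\phi^{\contrks}_\alpha \equiv \overl{[\phi]^+}$, so it remains only to observe that $\pr(\overl{[\phi]^+}) = \pr(\phi)$: the formula $\overl{[\phi]^+}$ has exactly the possible models of $\phi$ as its models, and the models of $\phi$ discarded in passing from $[\phi]_\Sigma$ to $[\phi]^+_\Sigma$ carry zero probability, so the two probability sums in Eq.~\ref{prphi} agree. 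Hence $L^{\contrks}_\pr(\phi,\alpha) = -\log_2 \pr(\phi) + \log_2 \pr(\phi) = 0$, the second branch.

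There is no genuine obstacle here: the entire content is the substitution of Proposition~\ref{propseveresigma} into the definition of loss followed by the cancellation of the two logarithms. The only point requiring a word of care is the identity $\pr(\overl{[\phi]^+}) = \pr(\phi)$ used in the second case, which is immediate once one notes that zero-probability worlds do not contribute to the probability sum in Eq.~\ref{prphi}. The same observation also guarantees well-definedness of the first branch, since $\pr(\phi) > 0$ by the standing assumption that $\phi$ is $\pr$-satisfiable, so the argument of the logarithm is positive and finite; note moreover that $\overl{\sigma(\neg\alpha)}$ always subsumes $[\phi]^+$, whence $\pr(\overl{\sigma(\neg\alpha)}) \geq \pr(\phi)$ and the loss is non-negative, in line with Eq.~\ref{moresevere}.
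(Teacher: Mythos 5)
Your proof is correct and follows the same route as the paper's own proof: unfold the definition of information loss with $\kms$, substitute the closed form of $\phi^{\contrks}_\alpha$ from Proposition~\ref{propseveresigma}, and cancel the logarithms, with the $\top \Pent \alpha$ case reducing to $\pr(\overl{[\phi]^+}) = \pr(\phi)$. The paper merely dismisses that second case as ``straightforward,'' so your spelling it out (together with the well-definedness and non-negativity remarks) is just a more detailed rendering of the identical argument.
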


\nd Note that, if $\top \not \Pent \alpha$, as $[\phi]^+ \subseteq \sigma(\neg \alpha)$, we have $\frac{\pr(\overl\sigma(\neg \alpha)}{\pr(\phi)} \geq 1$. So, we may also write $L^{\contrks}_\pr(\phi, \alpha)$ as
\begin{equation} \label{eq_contrAAbis}
L^{\contrks}_\pr(\phi, \alpha) = 
\left \{ \begin{array}{ll}
\log_2 (1 + \frac{\pr(\overl (\sigma(\neg \alpha) \setminus [\phi]^+) )}{\pr(\phi)} & 
\text{if } \top \not \Pent \alpha \\
0 & \text{otherwise . }
\end{array}\right .
\end{equation}

\nd We conclude this section by showing that $\contrks$ is indeed a severe withdrawal function in the sense that it satisfies the postulates for severe withdrawal.

\begin{proposition}\label{prop_SW_sound}
Consider a probability distribution $\pr$ and a KM-contraction operator $\contrk$ based on $\pr$. Then the function $\contrks$ is a severe withdrawal operator, that is, it satisfies the postulates $(\contrs 1)$-$(\contrs 4)$, $(\contrs 6a)$ and $(\contrs 7)$.
\end{proposition}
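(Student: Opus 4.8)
The plan is to verify each of the six severe-withdrawal postulates directly, using the closed-form characterisation from Proposition~\ref{propseveresigma}, namely that $\phi^{\contrks}_{\alpha} \equiv \overl{\sigma(\neg \alpha)}$ when $\top \not\Pent \alpha$ and $\phi^{\contrks}_{\alpha} \equiv \overl{[\phi]^+}$ otherwise. Working at the semantic level (sets of possible worlds) rather than with the syntactic definition is what makes this manageable: almost every postulate reduces to a containment statement between spheres, and the sphere $\sigma(\neg \alpha)$ is pinned down explicitly by Eq.~\ref{sigmaalpha} in terms of the threshold probability $p^{\max}_{\neg \alpha}$. Throughout I would freely use that $\phi \Pent \psi$ iff $[\phi]^+ \subseteq [\psi]^+$ (Remark~\ref{nonz}) and that $\pr$-equivalence is equality of possible-model sets.

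First I would dispatch the easy postulates. For $(\contrs 1)$ (inclusion), since $[\phi]^+ = \sigma_0 \subseteq \sigma(\neg \alpha)$ by construction, we get $[\phi]^+ \subseteq [\phi^{\contrks}_{\alpha}]^+$, hence $\phi \Pent \phi^{\contrks}_{\alpha}$; the degenerate branch is immediate since then $\phi^{\contrks}_\alpha \equiv \overl{[\phi]^+} \equiv_\pr \phi$. For $(\contrs 2)$ (vacuity), I split on the two hypotheses: if $\top \Pent \alpha$ we are in the second branch and done; if $\phi \not\Pent \alpha$ then $[\phi]^+ \not\subseteq [\alpha]^+$, so $[\phi]^+$ already contains a $\neg\alpha$-world, forcing the smallest sphere meeting $[\neg\alpha]^+$ to be $\sigma_0 = [\phi]^+$, whence $\sigma(\neg\alpha) = [\phi]^+$ and $\phi^{\contrks}_\alpha \equiv_\pr \phi$. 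For $(\contrs 3)$ (success), when $\top \not\Pent \alpha$ we have $[\neg\alpha]^+ \neq \emptyset$, so by Remark~\ref{remsigma} the sphere $\sigma(\neg\alpha)$ contains a world in $\min_\km([\neg\alpha]^+) \subseteq [\neg\alpha]^+$, giving $\sigma(\neg\alpha) \not\subseteq [\alpha]^+$, i.e.~$\phi^{\contrks}_\alpha \not\Pent \alpha$. Postulate $(\contrs 4)$ (extensionality) follows because $\alpha \equiv_\pr \beta$ means $[\alpha]^+ = [\beta]^+$, hence $[\neg\alpha]^+ = [\neg\beta]^+$ and $p^{\max}_{\neg\alpha} = p^{\max}_{\neg\beta}$, so the two spheres coincide.

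The two conjunctive postulates require the monotonicity of spheres in their threshold. The key observation is that $[\neg(\alpha \land \beta)]^+ = [\neg\alpha]^+ \cup [\neg\beta]^+$, so $p^{\max}_{\neg(\alpha\land\beta)} = \max\{p^{\max}_{\neg\alpha}, p^{\max}_{\neg\beta}\}$, and since a larger threshold selects a smaller (inner) sphere, $\sigma(\neg(\alpha\land\beta))$ is the inner one of $\sigma(\neg\alpha)$ and $\sigma(\neg\beta)$. For $(\contrs 6a)$, assuming $\top \not\Pent \alpha$, we have $\sigma(\neg(\alpha\land\beta)) \subseteq \sigma(\neg\alpha)$ directly from this, which is exactly $\phi^{\contrks}_{\alpha\land\beta} \Pent \phi^{\contrks}_{\alpha}$. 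For $(\contrs 7)$, the hypothesis $\phi^{\contrks}_{\alpha\land\beta} \not\Pent \alpha$ says $\sigma(\neg(\alpha\land\beta))$ already contains a $\neg\alpha$-world, so the smallest $\neg\alpha$-meeting sphere is no larger than $\sigma(\neg(\alpha\land\beta))$, giving $\sigma(\neg\alpha) \subseteq \sigma(\neg(\alpha\land\beta))$ and hence $\phi^{\contrks}_\alpha \Pent \phi^{\contrks}_{\alpha\land\beta}$; I would also check the limiting cases where $\top \Pent \alpha$ or $\top \Pent \alpha\land\beta$ separately to make sure the degenerate branch of the definition is consistent with these containments.

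I expect the main obstacle to be the careful bookkeeping of the \emph{limiting cases} $\top \Pent \alpha$ (and $\top \Pent \alpha \land \beta$), where the definition switches to the $\overl{[\phi]^+}$ branch: each conjunctive postulate has an extra antecedent precisely to shield against these, and I must confirm that the antecedents line up with the branch conditions so that no postulate is vacuously violated. A subtler point is that Proposition~\ref{propseveresigma} itself was derived under the standing simplifying assumption $[\neg\alpha]^+ \neq \emptyset$ and $[\phi]^+ \subseteq [\alpha]^+$ used in the sphere discussion; I would need to confirm that the characterisation in fact holds without those assumptions (the $\phi \not\Pent\alpha$ case is folded into the first branch via $\sigma(\neg\alpha)=[\phi]^+$ as noted above), so that the postulate verification is not circular. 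Once the sphere monotonicity lemma and the case analysis are in place, every postulate is a one-line containment, so the proof is essentially a disciplined enumeration rather than a computation.
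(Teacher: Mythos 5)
Your proposal is correct and follows essentially the same route as the paper's own proof: both reduce each postulate to a containment between the sets of possible worlds given by Proposition~\ref{propseveresigma}, handle the limiting cases $\top \Pent \alpha$ (and $\phi \not\Pent \alpha$, where $\sigma(\neg\alpha)=[\phi]^+$) separately, and settle $(\contrs 6a)$ and $(\contrs 7)$ by observing that $\sigma(\neg(\alpha\land\beta))$ is the inner of the spheres $\sigma(\neg\alpha)$ and $\sigma(\neg\beta)$ --- you derive this from the threshold $p^{\max}$ of Remark~\ref{remsigma}, the paper by a case split on the nesting, which is the same fact. No gap to report.
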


\nd We can also prove the opposite direction, that is,  for each severe withdrawal operator $\contrs$ there is a KM $\km$ s.t. $\contrks$ corresponds to $\contrs$.

\begin{proposition}\label{th_repr_contrs}
A function $\contrs$ is a severe withdrawal operator iff it is a KM-severe withdrawal operation. 
\end{proposition}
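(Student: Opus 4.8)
The statement combines a soundness and a completeness direction, exactly mirroring the pattern used for Proposition~\ref{th_repr_contr}. The plan is therefore to obtain the two implications separately and then glue them. The ``only if'' direction is immediate: a KM-severe withdrawal operation $\contrks$ is, by Proposition~\ref{prop_SW_sound}, a severe withdrawal operator, \ie~it satisfies $(\contrs 1)$--$(\contrs 4)$, $(\contrs 6a)$ and $(\contrs 7)$.

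For the ``if'' direction I would start from an arbitrary severe withdrawal operator $\contrs$ and reconstruct a probability distribution whose induced KM-severe withdrawal agrees with $\contrs$ on every $\alpha$. First I would invoke the $\pr$-analogue of Rott's representation result~\cite{Rott99} (the severe-withdrawal counterpart of Proposition~\ref{KMranking}): the postulates force the existence of a faithful assignment $\leq_{\phi}$ on $\W_\Sigma$, \ie~a total preorder whose strict minimum is $[\phi]^+$, such that, whenever $\top\not\Pent\alpha$, one has $\phi^{\contrs}_{\alpha}\equiv\overl{\sigma_{\leq_\phi}(\neg\alpha)}$, where $\sigma_{\leq_\phi}(\neg\alpha)$ denotes the smallest down-set (sphere) of $\leq_\phi$ meeting $[\neg\alpha]^+$. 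In the limiting case $\top\Pent\alpha$, postulate $(\contrs 2)$ forces $\phi^{\contrs}_\alpha\equiv_{\pr}\phi$, \ie~$\overl{[\phi]^+}$.

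Next, exactly as in the passage preceding Example~\ref{ex_contr_1}, I would translate the total preorder $\leq_\phi$ into a $\phi$-faithful ranking $r_\phi$ (with rank $0$ being $[\phi]^+$), and then appeal to Proposition~\ref{prop_faith_prob} to obtain an $r$-faithful probability distribution $\pr$ over $\W_\Sigma$. By construction, two worlds share the same probability iff they have the same rank, and probabilities strictly decrease with increasing rank; hence the equiprobability annuli of $\pr$ coincide exactly with the rank levels of $r_\phi$, so that the $\pr$-spheres of Section~\ref{sbs} coincide with the spheres $\sigma_{\leq_\phi}(\cdot)$ determined by $\leq_\phi$. In particular $\sigma(\neg\alpha)=\sigma_{\leq_\phi}(\neg\alpha)$ for every $\alpha$. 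I would then let $\contrk$ be the KM-contraction based on this $\pr$ (say with $\km=\kms$) and $\contrks$ the KM-severe withdrawal defined from it via Definition~\ref{def_severecontr}. By Proposition~\ref{propseveresigma}, $\phi^{\contrks}_\alpha\equiv\overl{\sigma(\neg\alpha)}$ when $\top\not\Pent\alpha$ and $\phi^{\contrks}_\alpha\equiv\overl{[\phi]^+}$ otherwise; matching this against the two cases recorded above and using $\sigma(\neg\alpha)=\sigma_{\leq_\phi}(\neg\alpha)$ yields $\phi^{\contrks}_\alpha\equiv\phi^{\contrs}_\alpha$ for all $\alpha$, which is the desired representation.

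The main obstacle is the first step of the completeness direction: establishing the sphere / faithful-assignment representation of $\contrs$ in the present $\pr$-entailment setting and checking that the recovered preorder genuinely satisfies the faithfulness conditions (innermost sphere equal to $[\phi]^+$, nested spheres). This is the severe-withdrawal analogue of Proposition~\ref{KMranking} and follows Rott's construction~\cite{Rott99}, but it must be restated with $\Pent$ in place of classical $\entails$ and with the possible-worlds restriction $[\cdot]^+$ throughout (as the reconstructed $\pr$ assigns every world nonzero probability, here $[\cdot]^+$ and $[\cdot]$ coincide). Once the ranking is in hand, everything else is mechanical, since it merely reuses Propositions~\ref{prop_faith_prob} and~\ref{propseveresigma}.
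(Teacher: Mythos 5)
Your proposal is correct and follows essentially the same route as the paper's own proof: one direction via Proposition~\ref{prop_SW_sound}, and the converse by invoking Rott and Pagnucco's sphere-based representation of severe withdrawal (Observation 15 in~\cite{RottPagnucco1999}), building a faithful probability distribution as in Proposition~\ref{prop_faith_prob}, and matching the resulting operator against Proposition~\ref{propseveresigma}. Your write-up is in fact somewhat more explicit than the paper's, which leaves the final agreement of the two operators as ``easy to prove''.
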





\subsection{Belief revision} \label{belrev}

\nd Now, we address the other main BC operation, namely \emph{revision}. A function $\star$ is an AGM \emph{revision} in our setting iff it satisfies the following postulates, which are the $\pr$-entailment analogues of those in~\cite{Caridroit17}):\footnote{Note that $(\star 2)$ implies $(\star 1)$.}
{
\[
\begin{array}{lll}
(\star 1)    &  \phi^{+}_{\alpha}\Pent \phi^{\star}_{\alpha} & \text{(inclusion)}\\
(\star 2)    & \text{If }\phi\not\Pent \neg \alpha\text{, then }\phi^{\star}_{\alpha}\equiv_{\pr} \phi^{+}_{\alpha} & \text{(vacuity)}\\
(\star 3) & \phi^{\star}_{\alpha}\Pent \alpha & \text{(success)}\\
(\star 4) & \text{If }\alpha\equiv_{\pr} \beta\text{, then }\phi^{\star}_{\alpha}\equiv_{\pr} \phi^{\star}_{\beta} & \text{(extensionality)} \\
(\star 5) & \text{If }\alpha\not\Pent\remaind\text{, then }\phi^{\star}_{\alpha}\not\Pent\remaind & \text{(consistency)} \\
(\star 6) &(\phi^{\star}_{\alpha})^{+}_{\beta} \Pent \phi^{\star}_{\alpha\land \beta}& \text{(superexpansion)}\\
(\star 7) & \text{If }\phi^{\star}_{\alpha}\not\Pent \neg \beta\text{, then }\phi^{\star}_{\alpha\land \beta}\Pent(\phi^{\star}_{\alpha})^{+}_{\beta} & \text{(subexpansion)}
\end{array}
\]
}


\nd As mentioned at the beginning of this section, every revision operation can be modelled by composing a contraction operation and an expansion one through the Levi Identity~\cite{Levi1977}: 
It is well-known that AGM contractions and AGM revisions are interconnected through the Levi Identity.

\begin{proposition}[\cite{AlchourronEtAl1985}]\label{th_repr_contr_rev}
    A revision $\star$ is an AGM revision operator iff it can be defined via the Levi Identity using an AGM contraction operator $\contr$.
\end{proposition}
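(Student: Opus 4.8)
The plan is to establish both directions of the biconditional through the two interconnection identities recalled just before the statement, namely the Levi identity $\phi^{\star}_{\alpha} = (\phi^{\contr}_{\neg\alpha}) \land \alpha$ and the Harper identity $\phi^{\contr}_{\beta} = \phi \lor \phi^{\star}_{\neg\beta}$. The observation that makes this essentially the standard AGM argument of~\cite{AlchourronEtAl1985,Caridroit17} is that $\pr$-entailment $\Pent$ has the same structural behaviour as classical entailment on possible models: by the semantic characterisation $\phi \Pent \psi$ iff $[\phi]^{+} \subseteq [\psi]$, the relation $\Pent$ is monotone and supraclassical, and $\land,\lor$ interact with it exactly as in the Boolean case once we restrict to possible worlds. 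Hence the classical postulate-by-postulate verification transfers almost verbatim, with $\Pent$ in place of $\entails$ and $\equiv_{\pr}$ in place of $\equiv$, the only extra care being the limiting cases (e.g.\ $\top \Pent \neg\alpha$ versus $\alpha$ being $\pr$-inconsistent).

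For the ``if'' direction, assume $\contr$ satisfies $(\contr 1)$--$(\contr 7)$ and set $\phi^{\star}_{\alpha} = (\phi^{\contr}_{\neg\alpha})^{+}_{\alpha}$. First I would dispatch the basic postulates: $(\star 3)$ is immediate since we conjoin $\alpha$; $(\star 1)$ follows from inclusion $(\contr 1)$, i.e.\ $\phi \Pent \phi^{\contr}_{\neg\alpha}$, by conjoining $\alpha$ on both sides; $(\star 2)$ follows from vacuity $(\contr 2)$, since $\phi \not\Pent \neg\alpha$ gives $\phi^{\contr}_{\neg\alpha} \equiv_{\pr} \phi$; $(\star 4)$ follows from extensionality $(\contr 4)$ applied to $\neg\alpha \equiv_{\pr} \neg\beta$; and $(\star 5)$ follows from success $(\contr 3)$, since $\alpha$ being $\pr$-consistent means $\top \not\Pent \neg\alpha$, whence $\phi^{\contr}_{\neg\alpha} \not\Pent \neg\alpha$, i.e.\ some possible model of $\phi^{\contr}_{\neg\alpha}$ satisfies $\alpha$.

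For the ``only if'' direction, assume $\star$ satisfies $(\star 1)$--$(\star 7)$ and define $\phi^{\contr}_{\beta} = \phi \lor \phi^{\star}_{\neg\beta}$. I would first verify that this $\contr$ satisfies $(\contr 1)$--$(\contr 7)$, again reading each contraction postulate off the corresponding revision postulate. Then I would check that Levi applied to this $\contr$ returns the original $\star$: expanding $(\phi^{\contr}_{\neg\alpha})^{+}_{\alpha} = (\phi \lor \phi^{\star}_{\alpha}) \land \alpha \equiv_{\pr} (\phi \land \alpha) \lor (\phi^{\star}_{\alpha} \land \alpha)$, and using success $(\star 3)$ to simplify $\phi^{\star}_{\alpha} \land \alpha \equiv_{\pr} \phi^{\star}_{\alpha}$ together with inclusion $(\star 1)$ giving $\phi \land \alpha \Pent \phi^{\star}_{\alpha}$, the disjunction collapses to $\phi^{\star}_{\alpha}$, as required.

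The hard part, as in the classical setting, will be the two supplementary postulates: deriving superexpansion $(\star 6)$ and subexpansion $(\star 7)$ from conjunctive overlap $(\contr 6)$ and conjunctive inclusion $(\contr 7)$, and conversely recovering $(\contr 6)$--$(\contr 7)$ from $(\star 6)$--$(\star 7)$. These require the interaction between the contraction/revision of a conjunction $\alpha \land \beta$ and that of its conjuncts, and the argument must be phrased so that every intermediate step is a $\pr$-entailment; I would handle them by translating the known classical derivations into $[\cdot]^{+}$-inclusions and checking that the degenerate cases where $\top \Pent \neg\alpha$, or where $\alpha \land \beta$ is $\pr$-inconsistent, are absorbed by the vacuity and consistency clauses.
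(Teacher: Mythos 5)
Your proposal is sound, but note that the paper does not actually prove this proposition: it is stated with a citation to Alchourr\'on--G\"ardenfors--Makinson and imported as a known classical result, the implicit claim being that the Levi/Harper correspondence survives the replacement of $\entails$ by $\Pent$ (there is no proof of it in the paper's appendix). What you have written is, in effect, the justification the paper leaves tacit. Your key observation --- that $\phi \Pent \psi$ iff $[\phi]^{+} \subseteq [\psi]$, so that $\Pent$ is ordinary entailment over the Boolean algebra of possible worlds, with $\land,\lor,\neg$ acting as intersection, union and complement relative to $[\top]^{+}$, and with $\pr$-consistency as non-emptiness --- is exactly what licenses the transfer of the classical postulate-by-postulate argument. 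Your verifications of $(\star 1)$--$(\star 5)$ from $(\contr 1)$--$(\contr 4)$, the collapse $(\phi \lor \phi^{\star}_{\alpha}) \land \alpha \equiv_{\pr} \phi^{\star}_{\alpha}$ via $(\star 1)$ and $(\star 3)$, and the correspondence of limiting cases ($\top \Pent \neg\alpha$ iff $\alpha \Pent \remaind$) are all correct. The one respect in which your text falls short of a complete proof is that the supplementary postulates --- $(\star 6)$, $(\star 7)$ in the Levi direction and $(\contr 6)$, $(\contr 7)$ in the Harper direction --- are only announced, not derived; since these are precisely the postulates whose classical derivations are nontrivial, they would need to be written out, though nothing in the $\pr$-setting obstructs them (the propositional steps they require, e.g.\ $\neg(\alpha \land \beta) \equiv_{\pr} \neg\alpha \lor \neg\beta$, all hold at the level of possible-model sets). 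In short: the paper buys brevity by citation; your route supplies an actual argument and would be complete once the two supplementary cases are executed.
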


\nd Applying  now the Levi identity we can immediately define the class of KM-revision operations $\star_{\km}$ as
\begin{equation}\label{eq_revision}
\boxed{ 
\phi^{\star_{\km}}_{\alpha}  = 
(\bigvee  \km\min(\phi\remaind\neg\alpha))\land\alpha
}
\end{equation}

\nd From Propositions~\ref{th_repr_contr} and \ref{th_repr_contr_rev}, we can conclude with
\begin{proposition}\label{th_repr_rev}
A function $\star$ is an AGM-revision operation iff it is a KM-revision operation.
\end{proposition}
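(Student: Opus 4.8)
The plan is to reduce Proposition~\ref{th_repr_rev} to results already established for contraction, exploiting the Levi Identity as the bridge between the two operations. The statement asserts a biconditional: a function $\star$ is an AGM-revision operation if and only if it is a KM-revision operation. Since Proposition~\ref{th_repr_contr_rev} already tells us that AGM revisions are exactly those definable via the Levi Identity from an AGM contraction, and Proposition~\ref{th_repr_contr} tells us that AGM contractions coincide with KM-contractions, the work is essentially to chain these equivalences together and check that the Levi Identity commutes appropriately with the passage between contraction and revision.

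For the forward direction, I would assume $\star$ is an AGM-revision operation. By Proposition~\ref{th_repr_contr_rev}, there is an AGM contraction $\contr$ with $\phi^{\star}_{\alpha} \equiv_{\pr} (\phi \contr \neg \alpha) \land \alpha$ for all $\alpha$. By Proposition~\ref{th_repr_contr}, this $\contr$ is itself a KM-contraction, say $\contrk$ for a suitable probability distribution $\pr$ constructed as in Proposition~\ref{prop_AGM_compl}. Substituting the explicit form of KM-contraction from Eq.~\ref{eq_contraction_2} into the Levi Identity yields exactly the defining equation Eq.~\ref{eq_revision} for the KM-revision operator $\star_{\km}$, so $\star$ is a KM-revision operation. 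For the converse, I would start from an arbitrary KM-revision operator $\star_{\km}$ defined by Eq.~\ref{eq_revision}; by construction its contraction component $\bigvee \km\min(\phi\remaind\neg\alpha)$ is a KM-contraction, hence (Proposition~\ref{prop_AGM_sound}) an AGM contraction, and composing it with expansion through the Levi Identity gives an AGM revision by the easy (``if'') direction of Proposition~\ref{th_repr_contr_rev}.

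The main obstacle, and the point I would verify most carefully, is that the $\pr$-entailment setting behaves well under the Levi Identity, since all the AGM postulates here are stated with $\Pent$ and $\equiv_{\pr}$ rather than classical $\entails$ and $\equiv$. In particular, I would confirm that the correspondence $\phi^{\star}_{\alpha} \equiv_{\pr} (\phi \contr \neg \alpha) \land \alpha$ is exactly the Levi Identity as formalised for this framework, and that Proposition~\ref{th_repr_contr_rev} is genuinely stated in terms of the $\pr$-analogue postulates $(\star 1)$--$(\star 7)$ and $(\contr 1)$--$(\contr 7)$; if so, no additional semantic argument is required. One subtlety worth a sentence is that the KM-contraction is \emph{deterministic} once $\pr$ is fixed, whereas the generic Levi Identity merely quantifies existentially over contractions: the forward direction only needs existence of \emph{some} witnessing KM-contraction, which Proposition~\ref{prop_AGM_compl} supplies, so determinism causes no loss.

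Given that all three ingredients (the Levi Identity equivalence, the contraction representation theorem, and the explicit forms in Eqs.~\ref{eq_contraction_2} and~\ref{eq_revision}) are already in hand, the proof is essentially a one-line composition of equivalences in each direction, and I would expect it to read as: ``Immediate from Propositions~\ref{th_repr_contr} and~\ref{th_repr_contr_rev}, chaining through the Levi Identity,'' with the only real content being the explicit substitution that identifies the Levi composite of a KM-contraction with the defining formula Eq.~\ref{eq_revision} for $\star_{\km}$.
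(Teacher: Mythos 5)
Your proposal is correct and follows essentially the same route as the paper, which likewise derives Proposition~\ref{th_repr_rev} directly from Propositions~\ref{th_repr_contr} and~\ref{th_repr_contr_rev}, using the fact that the KM-revision of Eq.~\ref{eq_revision} is by construction the Levi composite of a KM-contraction with expansion. Your additional remarks (the $\pr$-entailment formulation of the postulates and the existential-versus-deterministic point) are sound sanity checks but add nothing beyond what the paper's two cited propositions already cover.
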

\begin{example}\label{ex_revision}
 Consider Table \ref{Fig_1}. Assume we believe $(a\land b)$ and we are informed that $\neg b$ holds. Then the revision $(a\land b)^{\star_{\kms}}_{\neg b}$  corresponds to $(a\land b)^{\contr_{\kms}}_{b}\land \neg b$. 
 Since $(a\land b)^{\contr_{\kms}}_{b}\equiv a$, we can conclude that $(a\land b)^{\star_{\kms}}_{\neg b}\equiv (a\land \neg b)$.
\end{example}

\nd The analogue of Proposition~\ref{compreminderB} for the revision case is as follows.

\begin{proposition} \label{revsimple}
Consider a probability distribution $\pr$ over $\W$, a KM $\km$ and formulae $\phi$ and $\alpha$
Then
\begin{equation} \label{eqstarB}
{\phi}^{\star}_{\alpha} = \left \{ \begin{array}{ll}
\overl({\min_{\km}([\alpha]^+})) & \text{if } \phi\Pent\neg\alpha\\
\overl([\phi]^+\cap [\alpha]^+) & \text{otherwise.}
\end{array}\right .
\end{equation}
\end{proposition}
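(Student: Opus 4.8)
The plan is to read the revision straight off its definition via the Levi identity and then invoke the semantic characterisation of contraction already established in Proposition~\ref{compreminderB}. By Definition~\ref{def_KMcontraction}, the bracketed term in Eq.~\ref{eq_revision} is exactly the KM-contraction of $\phi$ by $\neg\alpha$, so $\phi^{\star_{\km}}_{\alpha} \equiv \phi^{\contrk}_{\neg\alpha}\land\alpha$. Hence it suffices to compute $[\phi^{\contrk}_{\neg\alpha}\land\alpha] = [\phi^{\contrk}_{\neg\alpha}]\cap[\alpha]$ and apply Proposition~\ref{compreminderB} with $\neg\alpha$ in place of $\alpha$, noting that the relevant counter-models are taken over $[\neg\neg\alpha]^+ = [\alpha]^+$ and that the governing condition becomes $\phi\Pent\neg\alpha$.

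First I would treat the case $\phi\Pent\neg\alpha$. Here Proposition~\ref{compreminderB} gives $\phi^{\contrk}_{\neg\alpha}\equiv\overl{[\phi]^+\cup\min_{\km}([\alpha]^+)}$. Intersecting the model set with $[\alpha]$, observe that $\phi\Pent\neg\alpha$ means $[\phi]^+\subseteq[\neg\alpha]$, so $[\phi]^+\cap[\alpha]=\emptyset$, whereas $\min_{\km}([\alpha]^+)\subseteq[\alpha]^+\subseteq[\alpha]$ is untouched. Thus $([\phi]^+\cup\min_{\km}([\alpha]^+))\cap[\alpha]=\min_{\km}([\alpha]^+)$, giving $\phi^{\star_{\km}}_{\alpha}\equiv\overl{\min_{\km}([\alpha]^+)}$, the first branch of Eq.~\ref{eqstarB}.

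For the case $\phi\not\Pent\neg\alpha$, Proposition~\ref{compreminderB} yields $\phi^{\contrk}_{\neg\alpha}\equiv\overl{[\phi]^+}$, so the revision has model set $[\phi]^+\cap[\alpha]$. Since every world in $[\phi]^+$ has nonzero probability, $[\phi]^+\cap[\alpha]=[\phi]^+\cap[\alpha]^+$, giving $\phi^{\star_{\km}}_{\alpha}\equiv\overl{[\phi]^+\cap[\alpha]^+}$, the second branch. The convention $\overl{\emptyset}=\remaind$ absorbs the degenerate subcase $[\alpha]^+=\emptyset$ (that is, $\alpha$ being $\pr$-unsatisfiable), where the revision correctly collapses to $\remaind$.

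The computation is essentially bookkeeping; the one point that needs care is the double negation together with the direction of the intersection in the first case: one must check that intersecting with $[\alpha]$ removes precisely the block $[\phi]^+$---which lies entirely inside $[\neg\alpha]$ by $\phi\Pent\neg\alpha$---and retains exactly the $\km$-minimal $\alpha$-worlds. A secondary check is that the equivalences, which Proposition~\ref{compreminderB} supplies up to classical $\equiv$ on possible-world sets (via Eq.~\ref{ffc}), are exactly what the statement of Eq.~\ref{eqstarB} asks for, so no strengthening to $\equiv_{\pr}$ is needed.
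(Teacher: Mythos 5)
Your proof is correct and follows essentially the same route as the paper's: both apply the Levi identity and then case-split on $\phi\Pent\neg\alpha$, reading off the contraction $\phi^{\contrk}_{\neg\alpha}$ from its semantic characterisation (Proposition~\ref{compreminderB}, with the paper citing Proposition~\ref{compreminder} for the second case) and intersecting with the $\alpha$-worlds. Your additional remarks on the double negation, the identity $[\phi]^+\cap[\alpha]=[\phi]^+\cap[\alpha]^+$, and the degenerate subcase $[\alpha]^+=\emptyset$ are sound refinements of the same argument.
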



\nd Finally, likewise contraction, for expansion and revision operators $+$ and $\star$, respectively, we may also introduce related quantitative measures such as \emph{information gain of expansion} 
\begin{equation} \label{infog}
G^+_\pr(\phi, \alpha) =  \km(\phi + \alpha) - \km(\phi) \ ,
\end{equation}
\nd and  \emph{information change of revision} 

\begin{equation} \label{inforev}
R^\star_\pr(\phi, \alpha) =  \km(\phi \star \alpha) - \km(\phi) \ ,
\end{equation} 
\nd respectively. Clearly, while the former is non-negative, the latter may not. 

We conclude this section by providing the analogue of Corollary~\ref{compreminderBB}, both for information gain of expansion and information change of revision, whereas for the latter we use Proposition~\ref{revsimple}.

\begin{corollary} \label{exprevpreminderBB}
Consider a probability distribution $\pr$ over $\W$, a KM $\km$ and formulae $\phi$ and $\alpha$. 
Then
\begin{eqnarray}  
G^+_\pr(\phi, \alpha) & = &  
- \log_2 \pr(\alpha\mid \phi) \\ \label{eq_expAA} 
\nonumber \\ 
R^\star_\pr(\phi, \alpha) & = &
\left \{ \begin{array}{ll}
\log_2 \frac{\pr(\phi)}{\pr(\overl({\min_{\km}([\alpha]^+}))} & \text{if } \phi\Pent \neg \alpha\\
G^+_\pr(\phi, \alpha) & \text{otherwise . }
\end{array}\right . \label{eq_expAAA}
\end{eqnarray}

\end{corollary}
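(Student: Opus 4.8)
The plan is to reduce both identities directly to the defining equation of the Shannon KM, $\kms(\phi) = -\log_2\pr(\phi)$ (Eq.~\ref{khconc}), combined with the closed forms for expansion and revision that we already have. Throughout I would read $\km$ as $\kms$; for an arbitrary KM satisfying (KM1)--(KM3) the very same computation produces the stated right-hand sides scaled by the constant $1/\log_2 b$ of Proposition~\ref{kmsprop}, and the world-selection $\min_{\km}$ is unaffected since $\km$ orders worlds only through $\pr$. The expansion identity is essentially immediate, while the revision identity is a two-case verification driven by Proposition~\ref{revsimple}.

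For the information gain of expansion, I would first recall that $\phi + \alpha = \phi\land\alpha$, so that
\[
G^+_\pr(\phi,\alpha) = \kms(\phi\land\alpha) - \kms(\phi) = -\log_2\pr(\phi\land\alpha) + \log_2\pr(\phi) = -\log_2\frac{\pr(\phi\land\alpha)}{\pr(\phi)} \ ,
\]
and then recognise the final quotient as the conditional probability $\pr(\alpha\mid\phi)$, which yields $G^+_\pr(\phi,\alpha) = -\log_2\pr(\alpha\mid\phi)$, as claimed.

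For the information change of revision I would invoke Proposition~\ref{revsimple} to replace $\phi^{\star}_{\alpha}$ by its explicit form and expand $\kms$ in each branch. When $\phi\Pent\neg\alpha$ we have $\phi^{\star}_{\alpha}\equiv\overl({\min_{\km}([\alpha]^+}))$, hence
\[
R^\star_\pr(\phi,\alpha) = \kms(\overl({\min_{\km}([\alpha]^+}))) - \kms(\phi) = \log_2\pr(\phi) - \log_2\pr(\overl({\min_{\km}([\alpha]^+}))) = \log_2\frac{\pr(\phi)}{\pr(\overl({\min_{\km}([\alpha]^+})))} \ ,
\]
which is exactly the first branch. When $\phi\not\Pent\neg\alpha$ we have $\phi^{\star}_{\alpha}\equiv\overl([\phi]^+\cap[\alpha]^+)$, and it remains to identify $R^\star_\pr(\phi,\alpha)$ with $G^+_\pr(\phi,\alpha)$.

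The only step that is not purely formal — and the one I would single out as the crux — is the identity $\pr(\overl([\phi]^+\cap[\alpha]^+)) = \pr(\phi\land\alpha)$ needed in the second branch. I would establish it by observing that $[\phi]^+\cap[\alpha]^+ = [\phi\land\alpha]^+$, that $\overl{S}$ has exactly the worlds of $S$ as its models for any $S\subseteq\W_\Sigma$, and that by Eq.~\ref{prphi} the probability of a formula is the sum of the probabilities of its models, so discarding the zero-probability models of $\phi\land\alpha$ leaves the sum unchanged. Consequently $\kms(\phi^{\star}_{\alpha}) = -\log_2\pr(\phi\land\alpha) = \kms(\phi\land\alpha) = \kms(\phi+\alpha)$, and subtracting $\kms(\phi)$ gives precisely $G^+_\pr(\phi,\alpha)$, completing the second branch and hence the corollary.
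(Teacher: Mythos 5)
Your proof is correct and follows essentially the same route as the paper's own: expand $G^+_\pr$ via $\phi+\alpha=\phi\land\alpha$ and the definition $\kms(\cdot)=-\log_2\pr(\cdot)$, then handle $R^\star_\pr$ by the two-case analysis of Proposition~\ref{revsimple}. Your explicit justification that $\pr(\overl{([\phi]^+\cap[\alpha]^+})) = \pr(\phi\land\alpha)$ in the second branch is a point the paper glosses over by simply asserting (the $\pr$-)equivalence of $\phi^{\star}_{\alpha}$ and $\phi\land\alpha$, so your write-up is, if anything, slightly more careful on the one step that needs it.
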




\subsection{Iterated belief revision in brief} \label{ibc}

\nd In this section, we also give a succinct look into the problem of \emph{iterated revision}~\cite{DarwichePearl1997}. That is, we are going to look at possible ways in which we can apply a sequence of revision operations in our framework. In~\cite{DarwichePearl1997} four well-known extra postulates for iterated revision have been proposed, which we adapt here to our probabilistic setting as follows:

\[
\begin{array}{lll}
{\bf (C1)}    &  \text{If }\psi\Pent \alpha\text{, then }(\phi^{\star}_{\alpha})^{\star}_{\psi}\equiv_{\pr} \phi^{\star}_{\psi}\\
{\bf (C2)}    &  \text{If }\psi\Pent \neg\alpha\text{, then }(\phi^{\star}_{\alpha})^{\star}_{\psi}\equiv_{\pr} \phi^{\star}_{\psi}\\
{\bf (C3)}    &  \text{If }\phi^{\star}_{\psi}\Pent \alpha\text{, then }(\phi^{\star}_{\alpha})^{\star}_{\psi}\Pent \alpha\\
{\bf (C4)}    &  \text{If }\phi^{\star}_{\psi}\not\Pent \neg\alpha\text{, then }(\phi^{\star}_{\alpha})^{\star}_{\psi}\not\Pent \neg\alpha\\
\end{array}
\]

\nd For an explanation of the intuitions behind such postulates we refer the reader to~\cite{DarwichePearl1997}. Here we just aim to show which of the iterated revisions postulates hold for KM-based belief revision.

The following can be shown.

\begin{proposition}\label{prop_iteratedC1}
Let $\star$ be a KM-based revision operator. Then $\star$ satisfies {\bf (C1), (C3)} and {\bf (C4)}.
\end{proposition}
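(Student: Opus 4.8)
The plan is to reduce all three postulates to the purely semantic characterisation of KM-revision given by Proposition~\ref{revsimple} and then argue entirely at the level of possible-model sets $[\cdot]^+$. Three observations drive the whole argument. First, for any belief $\phi$ and sentence $\beta$ the possible models of a revision are
\[
[\phi^{\star}_{\beta}]^+ = \begin{cases}\min_{\km}([\beta]^+) & \text{if } \phi\Pent\neg\beta,\\[2pt] [\phi]^+\cap[\beta]^+ & \text{otherwise,}\end{cases}
\]
so the ``fallback'' set $M_\beta := \min_{\km}([\beta]^+)$ of most-probable $\beta$-worlds depends only on the fixed distribution $\pr$ and on $\beta$, not on the prior belief $\phi$; by Remark~\ref{remprolog} a single $\pr$ underlies all beliefs in play, and this is exactly what makes iteration tractable. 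Second, $\phi\Pent\neg\beta$ holds iff $[\phi]^+\cap[\beta]^+=\emptyset$, and $\phi\not\Pent\neg\beta$ iff $[\phi]^+\cap[\beta]^+\neq\emptyset$. Third, $\pr$-equivalence and $\pr$-entailment amount to equality and inclusion of the corresponding $[\cdot]^+$ sets, so each postulate becomes an elementary set identity or inclusion to be verified by a case split on the two branches above.

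For {\bf (C1)}, write $\chi:=\phi^{\star}_{\alpha}$ and use $\psi\Pent\alpha$, which gives $[\psi]^+\subseteq[\alpha]^+$. I would first record the coincidence lemma that, when $[\psi]^+\subseteq[\alpha]^+$ and $M_\alpha\cap[\psi]^+\neq\emptyset$, one has $M_\alpha\cap[\psi]^+ = M_\psi$: a $\psi$-world attaining the maximal $\alpha$-probability $p_\alpha^{\max}$ forces $p_\psi^{\max}=p_\alpha^{\max}$, and since $[\psi]^+\subseteq[\alpha]^+$ the two extremal sets then coincide. Granting this, I split on $\phi\Pent\neg\alpha$. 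If $\phi\not\Pent\neg\alpha$ then $[\chi]^+=[\phi]^+\cap[\alpha]^+$, and because $[\psi]^+\subseteq[\alpha]^+$ one checks that $[\chi]^+\cap[\psi]^+=[\phi]^+\cap[\psi]^+$; hence $\chi\Pent\neg\psi$ iff $\phi\Pent\neg\psi$, and in either branch $[\chi^{\star}_{\psi}]^+=[\phi^{\star}_{\psi}]^+$. If $\phi\Pent\neg\alpha$ then $[\chi]^+=M_\alpha$ and $[\phi]^+\cap[\psi]^+=\emptyset$, so $[\phi^{\star}_{\psi}]^+=M_\psi$; the lemma then yields $[\chi^{\star}_{\psi}]^+=M_\psi$ whether or not $\chi\Pent\neg\psi$. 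In every case $[(\phi^{\star}_{\alpha})^{\star}_{\psi}]^+=[\phi^{\star}_{\psi}]^+$, i.e.~{\bf (C1)}.

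Postulates {\bf (C3)} and {\bf (C4)} are more routine tracking of non-emptiness. For {\bf (C3)}, set $\chi:=\phi^{\star}_{\alpha}$ and note $[\chi]^+\subseteq[\alpha]^+$ by success $(\star 3)$. If $\chi\not\Pent\neg\psi$ then $[\chi^{\star}_{\psi}]^+=[\chi]^+\cap[\psi]^+\subseteq[\alpha]$ and we are done; the only work is the branch $\chi\Pent\neg\psi$, where $[\chi^{\star}_{\psi}]^+=M_\psi$ and I must show $M_\psi\subseteq[\alpha]$. Here I use the hypothesis $\phi^{\star}_{\psi}\Pent\alpha$: a short case analysis shows that assuming $\phi\not\Pent\neg\psi$ alongside $\chi\Pent\neg\psi$ is contradictory (both possible forms of $[\chi]^+$ force $[\phi]^+\cap[\psi]^+$ to be simultaneously empty and non-empty), so necessarily $\phi\Pent\neg\psi$, whence $M_\psi=[\phi^{\star}_{\psi}]^+\subseteq[\alpha]$. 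For {\bf (C4)}, recall that $\gamma\not\Pent\neg\alpha$ means $[\gamma]^+\cap[\alpha]^+\neq\emptyset$; I would carry a witnessing world through the same split, observing that the premise $[\phi^{\star}_{\psi}]^+\cap[\alpha]^+\neq\emptyset$ already forces $\phi\not\Pent\neg\alpha$ in the relevant branch, so that $[\chi]^+=[\phi]^+\cap[\alpha]^+$ and $[(\phi^{\star}_{\alpha})^{\star}_{\psi}]^+$ retains the same witness.

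The only genuinely delicate step is the coincidence lemma used for {\bf (C1)} identifying $M_\alpha\cap[\psi]^+$ with $M_\psi$; everything else is bookkeeping over the two branches of Proposition~\ref{revsimple}, kept clean by the fact that a single distribution $\pr$ governs all beliefs, so the fallback worlds $M_\beta$ are a function of $\beta$ alone. I also expect {\bf (C2)} to fail precisely because there the hypothesis $\psi\Pent\neg\alpha$ decouples $\psi$ from $M_\alpha$ and this coincidence is unavailable, which is consistent with the paper's claim that exactly three of the four postulates hold.
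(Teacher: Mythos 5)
Your proposal is correct and follows essentially the same route as the paper's proof: both reduce all three postulates to the two-branch characterisation of Proposition~\ref{revsimple} and then case-analyse at the level of possible-model sets, with the same key step (your ``coincidence lemma'') that $\min_{\km}([\alpha]^+)\cap[\psi]^+=\min_{\km}([\psi]^+)$ whenever $[\psi]^+\subseteq[\alpha]^+$ and the intersection is non-empty. The differences are purely organisational — you split on fewer top-level cases and state the coincidence step as an explicit lemma (with justification via maximal probabilities), which the paper asserts inline.
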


\nd The next example shows that {\bf (C2)} is not always satisfied.

\begin{example} \label{exc2}





Let  $\Sigma=\{p,q\}$, and let 
$\pr = \{p\overline{q}^{0.6}, pq^{0.2}, \overline{p}\overline{q}^{0.1}, \overline{p}q^{0.1}\}$.
%
%
Clearly $\km(\overline{p}q)=\km(\overline{p}\overline{q})>\km(pq)>\km(p\overline{q})$. 
Now, let $\phi\equiv_\pr (p\land q)\lor(\neg p\land \neg q)$, $\psi\equiv_\pr p$, and $\alpha\equiv_\pr (\neg p\land q)$. 
It can be verified (see Table \ref{Fig_2}) that 
\begin{eqnarray*}
\psi & \Pent & \neg\alpha  \\
    \phi & \not\Pent & \neg \psi \\
    \phi & \Pent & \neg \alpha \\
\end{eqnarray*}

\begin{table}
\caption{Probability distribution $\pr$, and the relative KM, in Example \ref{exc2}.} \label{Fig_2}
\begin{center}
{
\begin{tabular}{|c|c|c|c|} \hline
$r$ & worlds & $\pr$ & $\kms$ \\ \hline
{$2$} & $\overline{p}q$ \hspace{0.2cm} $\overline{p}\overline{q}$ & $0.1$ & $3.32$ \\ \hline
{$1$} & $pq$ & $0.2$ & $2.32$\\ \hline
{$0$} & $p\overline{q}$ & $0.6$ & $0.74$ \\ \hline
\end{tabular}
}
\end{center}
\end{table}
%
%
By Proposition~\ref{revsimple}, the following results follow:
\begin{itemize}
    \item $[\phi^{\star}_{\psi}]=\overl([\phi]^+\cap [\psi]^+)=\{pq\}$;
    \item $[\phi^{\star}_{\alpha}]=\overl(\min_{\km}([\alpha]^+))=\{\overline{p}q\}$;
    \item $[(\phi^{\star}_{\alpha})^{\star}_{\psi}]=\overl(\min_{\km}([\psi]^+))=\{p\overline{q}\}$.
\end{itemize}

\nd Therefore, $(\phi^{\star}_{\alpha})^{\star}_{\psi} \not\equiv_\pr \phi^{\star}_{\psi}$.
    
\end{example}

\nd From Example~\ref{exc2} it follows that

\begin{proposition}\label{prop_iteratedC2}
There is a KM-based revision operator $\star$ that does not satisfy {\bf (C2)}.
\end{proposition}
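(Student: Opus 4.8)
The plan is to prove this existence claim by exhibiting a single witness and checking that (C2) fails on it. Since (C2) is a universally quantified postulate, it suffices to produce one KM-based revision operator $\star$ together with one triple of formulae $\phi,\psi,\alpha$ for which the $\pr$-equivalence $(\phi^{\star}_{\alpha})^{\star}_{\psi}\equiv_{\pr}\phi^{\star}_{\psi}$ breaks while its antecedent $\psi\Pent\neg\alpha$ holds. The witness is exactly the data of Example~\ref{exc2}, and the argument is a verification rather than a construction.

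First I would fix $\Sigma=\{p,q\}$ and the distribution $\pr=\{p\bar{q}^{0.6},pq^{0.2},\bar{p}\bar{q}^{0.1},\bar{p}q^{0.1}\}$, which via Eq.~\ref{eq_revision} determines a concrete KM-based revision operator $\star$. The decisive feature of this $\pr$ is the strict surprise ranking $\km(\bar{p}q)=\km(\bar{p}\bar{q})>\km(pq)>\km(p\bar{q})$ forced by the distinct world-probabilities. I would then verify the antecedent of (C2): with $\psi\equiv_{\pr}p$ and $\neg\alpha\equiv_{\pr}p\lor\neg q$, every non-zero-probability model of $p$ satisfies $p\lor\neg q$, so $\psi\Pent\neg\alpha$ indeed holds, and (C2) is genuinely in force.

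The core of the argument is to evaluate both sides of (C2) by repeated application of Proposition~\ref{revsimple}, tracking at each revision step which of its two branches is active. For the right-hand side, since $\phi$ keeps the non-zero-probability model $pq\models p$ we have $\phi\not\Pent\neg\psi$, so the second branch gives $\phi^{\star}_{\psi}\equiv\overl([\phi]^+\cap[\psi]^+)\equiv pq$. For the left-hand side I would first compute $\phi^{\star}_{\alpha}$: here $\phi\Pent\neg\alpha$, so the first branch yields $\phi^{\star}_{\alpha}\equiv\overl(\min_{\km}([\alpha]^+))\equiv\bar{p}q$; then, revising $\bar{p}q$ by $\psi$, one has $\bar{p}q\Pent\neg\psi$, so the first branch applies again and selects the least-surprising $p$-world, giving $(\phi^{\star}_{\alpha})^{\star}_{\psi}\equiv\overl(\min_{\km}([\psi]^+))\equiv p\bar{q}$. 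Since $p\bar{q}\not\equiv_{\pr}pq$, postulate (C2) fails for $\star$, which establishes the proposition.

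I expect the only real subtlety, rather than a deep obstacle, to lie in correctly determining which branch of Proposition~\ref{revsimple} governs each of the three revision steps, as this depends on the $\pr$-entailment relations among $\phi,\psi,\alpha$ and on the intermediate belief $\phi^{\star}_{\alpha}$. The asymmetry that ultimately breaks (C2) is precisely that routing the revision through $\alpha$ first collapses the belief onto $\bar{p}q$ and thereby discards $pq$, so the subsequent revision by $\psi$ is forced down to the globally least-surprising $p$-world $p\bar{q}$, whereas the direct revision $\phi^{\star}_{\psi}$ is free to retain $pq$. Once $\pr$ is chosen so that $\pr(p\bar{q})>\pr(pq)$, this divergence is automatic, which is why no genuine difficulty remains beyond the bookkeeping.
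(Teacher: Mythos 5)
Your proof is correct and is essentially identical to the paper's own argument: the paper establishes Proposition~\ref{prop_iteratedC2} precisely via Example~\ref{exc2}, using the same distribution $\pr$, the same formulae $\phi,\psi,\alpha$, and the same three applications of Proposition~\ref{revsimple} to obtain $\phi^{\star}_{\psi}\equiv pq$ versus $(\phi^{\star}_{\alpha})^{\star}_{\psi}\equiv p\bar{q}$. Your branch-by-branch bookkeeping matches the paper's computation exactly, so there is nothing to add.
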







\nd We recap that postulate  {\bf (C2)} formalises the idea that if $\psi$ contradicts the previous information $\alpha$, then $\psi$ completely erases the effect of the `temporary' presence of $\alpha$ in the belief set. However, the validity of such a postulate may be debatable, as the following example illustrates.



\begin{example}\label{ex_c2}
    Let $\Sigma=\{p,d,c\}$, where $p$, $d$, and $c$ read, respectively, `Bob owns a pet', `Bob owns a dog', and `Bob owns a cat'. Suppose now that, according to the statistics about pet ownership in Bob's country, we have the following probability distribution: 
    \[
    \pr=\{\bar{p}\bar{d}\bar{c}^{0.3},p\bar{d}c^{0.3},pd\bar{c}^{0.25},pdc^{0.15}\} \ .
    \]

    \nd Assume that our starting KB (our $\phi$) corresponds to the proposition $\neg p$, \ie~`Bob does not own a pet'. Also, our probability distribution $\pr$, assigning probability $0$ to some worlds, implies some background information: in particular,  $d\limp p, c\limp p$, and $p\limp (d\lor c)$ (for the sake of simplicity we assume that the only kinds of pets are dogs and cats).
    
    
    In the following, let $\alpha:=d$ and $\psi:= \neg d$. 
    \begin{itemize}
        \item  If we are informed that `Bob does not own a dog', \ie~we ask to revise $\phi$ by $\psi$,  by Eq.~\ref{eqstarB}, we `remain' in $\bar{p}\bar{d}\bar{c}$. That is, $\phi^{\star}_\psi \equiv_{\pr} \phi$. 

        \item  Instead, if we are informed that `Bob owns a dog',  \ie~we ask to revise $\phi$ by $\alpha$,  by Eq.~\ref{eqstarB}, we `move' to the world $pd\bar{c}$ (the most expected/least surprising world in which $d$ is satisfied) and, thus, $\phi^{\star}_\alpha \equiv_{\pr} \overl{pd\bar{c}}$.
        
        If successively we are informed that `Bob does not own a dog', \ie~we ask to revise $\phi^{\star}_\alpha$ by $\psi$,   by Eq.~\ref{eqstarB}, we have to consider both the worlds $\bar{p}\bar{d}\bar{c}$ and $p\bar{d}c$, which are the most expected/least surprising worlds in which Bob does not own a dog. Therefore, $(\phi^{\star}_\alpha)^{\star}_\psi \equiv_{\pr} \overl{\bar{p}\bar{d}\bar{c}} \lor \overl{p\bar{d}c} \equiv_{\pr} p \limp (\neg d \land c)$. That is, we leave open the possibility that Bob owns a cat.
    \end{itemize}

   \nd Therefore, $\phi^{\star}_\psi$ and $(\phi^{\star}_\alpha)^{\star}_\psi$ are not $\pr$-equivalent and we believe that the inequality is reasonable to hold in this case.
%
 %
    
\end{example}





\section{Conclusions, related and future work} \label{concl}

We have introduced novel quantitative Belief Change (BC) operators rooted in Knowledge Measures (KMs), aimed at minimising the (information theoretic) surprise of the information conveyed by the modified belief. 

In particular, our contributions encompass several key aspects:
\ii{i} we have presented a general information theoretic approach to KMs for which~\cite{Straccia20} is a special case;
\ii{ii} we've proposed KM-based BC operators that adhere to the AGM postulates; and
\ii{iii} we've demonstrated that any BC operator meeting the AGM postulates can be represented as a KM-based BC operator, regulated by a specific probability distribution over the worlds.
We have additionally introduced quantitative metrics that capture the loss of information during contraction, the acquisition of information through expansion, and the alteration of information during revision.  We also have given a succinct look into the problem of iterated revision~\cite{DarwichePearl1997}, and have also illustrated how one can construct a non-recovery-postulate-compliant contraction operator from our KM-based framework, by focusing on the so-called severe withdrawal~\cite{Rott99} model as illustrative example.

\paragraph{Closely related work}
Concerning KMs, to the best of our knowledge, we are not aware of other works, except~\cite{Straccia20}, that address the idea of KMs. However, there is an emerging community that deals with measuring inconsistency degrees, such as~\cite{Grant22,UlbrichtTB20,ThimmW19} (and references therein), which in fact, as pointed out in~\cite{Straccia20}, is an area were KMs may apply. 

Concerning, BC the area is by far much more investigated (see, \eg~\cite{FermeHansson2011,FermeHansson2018,Peppas08,AravanisP22} (and references therein). An in depth account of the huge literature is out of the scope of this paper and, thus, we refer here only to closely related work.
There have been some publications that have investigated possible connections between a probabilistic approach and belief revision~\cite{Lindstrom1989,Hawthornemakinson2007,Makinson2011,Spohn2012}, mainly showing the distance between probabilistic conditionals and qualitative conditionals, the latter based on belief revision. Such papers have a reasoning framework that is quite different \wrt~ours.
As mentioned within our paper, various definitions and properties in terms of $\pr$-entailment have a natural classical propositional logic counterpart~\cite{Caridroit17}, which we used to build up our framework. 
Worth mentioning are also~\cite{Levi91,Hansson95}, whose major objective was to illustrate a contraction that does not satisfy the recovery postulate.  Interestingly, besides the recovery postulate, Levi's contraction does neither satisfy conjunctive overlap nor conjunctive inclusion. But, the interesting point is that one may regain these two properties back by relying on the notion of \emph{Information Value} (IV)~\cite{Levi91,Hansson95}  that should be used to guide the selection of contraction candidates. Essentially, two variants of IVs  have been identified:\footnote{$\mathcal{V}(\alpha)$ is real-valued.} 
\ii{i} \emph{strong monotonicity} of IV $\mathcal{V}$: if $\alpha \models \beta$ then $\mathcal{V}(\alpha) > \mathcal{V}(\beta)$; and
\ii{ii} \emph{weak monotonicity} of IV $\mathcal{V}$: if $\alpha \models \beta$ then $\mathcal{V}(\alpha) \geq \mathcal{V}(\beta)$.
From that, the \emph{value-based Levi contraction} is then defined, roughly, as a disjunction of the $\mathcal{V}$-maximal elements in $S(\phi,\alpha)$,  where this latter is the set of so-called \emph{saturatable contractions} that consist of formulae $\psi$ weaker than $\phi$ such that $\psi \land \neg \alpha$ is maximally consistent.\footnote{In our setting, this set may be defined as the set  $S(\phi,\alpha) = \{ \psi \mid \phi\Pent\psi \text{ and } \psi \land \neg \alpha \text{ is maximal $\pr$-consistent } \}$,  where $\beta$ is is maximal if there is no $\pr$-consistent  $\gamma$ with $\gamma \PentL \beta$.}
Of course, the notion of IV resembles the axiom (E)  of KMs in~\cite{Straccia20} in the sense that the latter are weak monotonic IVs, though the opposite is not true.
Also, the overall definition of value-based Levi contraction is based on an IV maximisation selection process among saturatable contractions, while we rely on a  KM minimisation selection process over remainders (\wrt~a probabilistic distribution). Nevertheless, at the time of writing, the relationship between Levi's value-based Levi contraction and our KM-based one is still unclear and we will address it in future work,  together with other alternative constructions of BC operators. Let us also mention that we also tried to consider the $\km\max$ operator (with obvious definition) in place of the $\km\min$ operator in Definition~\ref{def_KMcontraction}, though it turned out to be questionable: as shown in Eq.~\ref{khconc}, KMs reward the less probable options: the less expected is a piece of information, the ``more information it contains'' from an information-theoretic point of view. But then, regarding Example~\ref{runexG}, if we select the remainders with highest KM, we would have given the priority to a world in which the non-flying birds would have been penguins (\ie~$bp\bar{o}\bar{f}w^{0.15}$) rather than to the more probable world $(b\bar{p}\bar{f}w)^{0.4}$.

\paragraph{Future Work} Topics for future research may include the following (besides those already mentioned in the paper here and there):
\ii{i} to investigate about different choices for KM postulates;
\ii{ii} to investigate and/or apply the notion of KMs in other contexts such as First-Order Language (FOL), or more in general  languages that do not have the finite-model property, to various forms of paraconsistent logics~\cite{Abe15}), \eg, measuring the degree of inconsistency~\cite{Grant22}), to non-monotonic logics~\cite{UlbrichtTB20,BrewkaNT08}, to conditional logics~\cite{Casini22}, and to uncertain, many-valued and mathematical fuzzy logics~\cite{Dubois01};
\ii{iii} to investigate on the application of KMs  to other approaches related to BC, inclusive to non recovery postulate-compliant variants; and
\ii{iv} to investigate forms of belief change obtained by combining the use of KMs with some kind of measure of similarity/semantic closeness among worlds, such as Dalal distance~\cite{Dalal88}.







\section*{Acknowledgment}
\nd This research was partially supported by TAILOR, a project funded by EU Horizon 2020 research and innovation programme under (GA No 952215). This paper is also supported by the FAIR (Future Artificial Intelligence Research) project funded by the NextGenerationEU program within the PNRR-PE-AI scheme (M4C2, investment 1.3, line on Artificial Intelligence). Eventually, this work has also been partially supported by the H2020 STARWARS Project (GA No. 101086252), type of action HORIZON TMA MSCA Staff Exchanges.


{\footnotesize

}

\appendix


\section{Some Proofs} \label{proofs}

\setcounter{proposition}{\getrefnumber{prsigmapB}-1}
\begin{proposition} 
For a formula $\phi$, $\Sigma_\phi \subseteq \bar{\Sigma} \subseteq \Sigma$ and a probability distribution $\pr_{\bar{\Sigma}}$, we have that
$\pr_\Sigma(\phi)  =   \pr_{\bar{\Sigma}}(\phi)$,
where $\pr_\Sigma$ is the extension of $\pr_{\bar{\Sigma}}$.
\end{proposition}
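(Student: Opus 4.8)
The plan is to expand $\pr_\Sigma(\phi)$ as a sum over its $\Sigma$-models and then collapse it back to $\pr_{\bar\Sigma}(\phi)$ by means of the marginalisation identity~(\ref{sigmapbisB}). First I would apply the definition~(\ref{prphi}) of the probability of a formula to write
\[
\pr_\Sigma(\phi) = \sum_{w \in [\phi]_\Sigma} \pr_\Sigma(w) \ .
\]
The structural fact driving the argument is that, since $\Sigma_\phi \subseteq \bar\Sigma$, whether a world $w \in \W_\Sigma$ satisfies $\phi$ depends only on the truth values $w$ assigns to the letters of $\Sigma_\phi$, hence only on the restriction $\bar w \in \W_{\bar\Sigma}$ of $w$ to $\bar\Sigma$: that is, $w \sat \phi$ iff $\bar w \sat \phi$, where $\bar w \subseteq w$ and $w \in \extw{\bar w}_\Sigma$ (the single-world instance $\phi = \overl{\bar w}$ of this fact being recorded in~(\ref{eqequiv})). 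Consequently $[\phi]_\Sigma$ is the \emph{disjoint} union of the extension sets $\extw{\bar w}_\Sigma$ ranging over $\bar w \in [\phi]_{\bar\Sigma}$, the disjointness being immediate because distinct $\bar\Sigma$-worlds have disjoint sets of $\Sigma$-extensions.

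With this decomposition I would regroup the sum according to the $\bar\Sigma$-restriction, obtaining
\[
\pr_\Sigma(\phi) = \sum_{\bar w \in [\phi]_{\bar\Sigma}} \ \sum_{w \in \extw{\bar w}_\Sigma} \pr_\Sigma(w) \ ,
\]
and then collapse each inner sum via the marginalisation identity~(\ref{sigmapbisB}), which states precisely that $\sum_{w \in \extw{\bar w}_\Sigma} \pr_\Sigma(w) = \pr_{\bar\Sigma}(\bar w)$. This gives
\[
\pr_\Sigma(\phi) = \sum_{\bar w \in [\phi]_{\bar\Sigma}} \pr_{\bar\Sigma}(\bar w) = \pr_{\bar\Sigma}(\phi) \ ,
\]
the last equality being~(\ref{prphi}) read over $\bar\Sigma$, which is the desired conclusion.

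The only step that is more than routine bookkeeping is the locality claim $w \sat \phi \Leftrightarrow \bar w \sat \phi$ for every extension $w$ of $\bar w$. I would discharge it by a straightforward induction on the structure of $\phi$: the base case consists of literals over $\Sigma_\phi$, whose truth value is identical in $w$ and $\bar w$ since $\Sigma_\phi \subseteq \bar\Sigma$, and the inductive step is immediate from the Boolean clauses of the satisfaction relation. Once this locality property is in place, the disjoint-union decomposition of $[\phi]_\Sigma$ and the appeal to~(\ref{sigmapbisB}) are purely mechanical, so I anticipate no further difficulty. For completeness I note that the marginalisation identity~(\ref{sigmapbisB}) used above is itself an immediate consequence of the extension definition~(\ref{sigmapB}): summing $\pr_\Sigma(w) = \pr_{\bar\Sigma}(\bar w) / |\extw{\bar w}_\Sigma|$ over the $|\extw{\bar w}_\Sigma|$ extensions of $\bar w$ returns $\pr_{\bar\Sigma}(\bar w)$.
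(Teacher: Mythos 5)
Your proposal is correct and is essentially the paper's own proof: the paper establishes the same chain of equalities, just read in the opposite direction (starting from $\pr_{\bar{\Sigma}}(\phi)$, inserting the marginalisation identity, and merging the double sum into a sum over $[\phi]_\Sigma$). The only difference is that you explicitly justify the step the paper leaves implicit, namely that $[\phi]_\Sigma$ is the disjoint union of the sets $\langle \bar{w} \rangle_\Sigma$ for $\bar{w} \in [\phi]_{\bar{\Sigma}}$, via the locality of satisfaction.
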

\begin{proof}
By definition and Eq.~\ref{sigmapbisB} we have
\[
\pr_{\bar{\Sigma}}(\phi)  =   \sum_{\bar{w} \in [\phi]_{\bar{\Sigma}}} \pr_{\bar{\Sigma}}(\bar{w}) 
= \sum_{\bar{w}\in [\phi]_{\bar{\Sigma}}} \sum_{w \in \extw{\bar{w}}_{\Sigma}} \pr_\Sigma(w) 
 =  \sum_{w\in [\phi]_\Sigma}  \pr_\Sigma(w) = \pr_\Sigma(\phi) \ ,
\]
which concludes.
\end{proof}

\setcounter{proposition}{\getrefnumber{propextconf}-1}
\begin{proposition}
Consider formulae $\phi$ and $\psi$, $\Sigma_\phi \cup \Sigma_\psi \subseteq \bar{\Sigma} \subseteq \Sigma$,
a probability distribution $\pr$ over $\bar{\Sigma}$, and the extension $\pr_{\Sigma}$  of  $\pr$  to $\Sigma$. Then for $\lhd\in\{\leq,<\}$, $\phi \Pentlhd \psi$ iff $\phi \PentPlhd{\pr_{\Sigma}} \psi$.
\end{proposition}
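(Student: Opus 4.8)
The plan is to reduce each of the two $\pr$-entailment relations to a set-inclusion statement about (possible) models, and then transfer the inclusion across the alphabets $\bar{\Sigma}$ and $\Sigma$ using the structure of the extension $\pr_\Sigma$. First I would unfold the definitions: by Remark~\ref{nonz} and the definition of $\pr$-entailment, $\phi \Pent \psi$ relative to $\pr$ over $\bar{\Sigma}$ holds iff $[\phi]^+_{\bar{\Sigma}} \subseteq [\psi]_{\bar{\Sigma}}$, and likewise $\phi \PentP{\pr_{\Sigma}} \psi$ holds iff $[\phi]^+_\Sigma \subseteq [\psi]_\Sigma$. Hence it suffices to relate possible models and models over the two alphabets.

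The key observation, which I would establish next, is that positivity of probability is preserved exactly under extension. By Eq.~\ref{sigmapB}, for $w \in \extw{\bar{w}}_\Sigma$ we have $\pr_\Sigma(w) = \frac{1}{|\extw{\bar{w}}_\Sigma|}\,\pr_{\bar{\Sigma}}(\bar{w})$, where $\bar{w}$ is the restriction of $w$ to $\bar{\Sigma}$; since the factor $1/|\extw{\bar{w}}_\Sigma|$ is strictly positive, $\pr_\Sigma(w) > 0$ iff $\pr_{\bar{\Sigma}}(\bar{w}) > 0$. Combined with the fact that, because $\Sigma_\phi \subseteq \bar{\Sigma}$, every model of $\phi$ over $\Sigma$ is an extension of a model of $\phi$ over $\bar{\Sigma}$ (Remark~\ref{remmods}, Eq.~\ref{eqequiv}), this yields
\[
[\phi]^+_\Sigma = \bigcup_{\bar{w}\in[\phi]^+_{\bar{\Sigma}}} \extw{\bar{w}}_\Sigma ,
\]
and, using $\Sigma_\psi \subseteq \bar{\Sigma}$ analogously, $[\psi]_\Sigma = \bigcup_{\bar{w}\in[\psi]_{\bar{\Sigma}}} \extw{\bar{w}}_\Sigma$.

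From here the case $\lhd = {\leq}$ follows: since the extension sets $\extw{\bar{w}}_\Sigma$ of distinct worlds $\bar{w}$ are disjoint and nonempty, the inclusion $[\phi]^+_\Sigma \subseteq [\psi]_\Sigma$ holds iff $[\phi]^+_{\bar{\Sigma}} \subseteq [\psi]_{\bar{\Sigma}}$, establishing $\phi \Pent \psi$ iff $\phi \PentP{\pr_{\Sigma}} \psi$. The strict case $\lhd = {<}$ then follows immediately from its definition, $\phi \PentL \psi$ iff $\phi\Pent\psi$ and $\psi \not\Pent \phi$, by applying the already-proven non-strict equivalence to both $\phi \Pent \psi$ and $\psi \Pent \phi$. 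I do not expect a serious obstacle here; the only point requiring care is the equivalence $\pr_\Sigma(w) > 0 \iff \pr_{\bar{\Sigma}}(\bar{w}) > 0$ together with the disjointness and nonemptiness of the extension sets that let the union-level inclusion reduce to the base-level inclusion, both of which are direct consequences of the definitions in Remark~\ref{remmods}.
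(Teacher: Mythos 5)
Your proof is correct and takes essentially the same route as the paper's: both rest on Eq.~\ref{sigmapB} to show that extension preserves exactly which worlds have non-zero probability, and then transfer the entailment across alphabets (the paper phrases this syntactically as the equivalence of $\phi \land \neg \pr0$ and $\phi \land \neg {\pr_{\Sigma}}0$, you phrase it semantically as $[\phi]^+_\Sigma = \bigcup_{\bar{w}\in[\phi]^+_{\bar{\Sigma}}} \extw{\bar{w}}_\Sigma$), with the strict case reduced to the non-strict one in both. If anything, your rendering is the more careful one, since the paper's literal claim $\pr(\bar{w}) = \pr_{\Sigma}(w)$ is only true for trivial extensions, whereas your observation that $\pr_{\Sigma}(w)>0$ iff $\pr_{\bar{\Sigma}}(\bar{w})>0$ is precisely the fact needed.
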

\begin{proof}

By Eq.~\ref{sigmapB}, for  $\bar{w} \in \W_{\bar{\Sigma}}$ and an extension $w \in \extw{\bar{w}}_{\Sigma}$, we have $\pr(\bar{w})  = \pr_{\Sigma}(w)$ and, thus, $\phi \land \neg \pr0$ and 
$\phi \land \neg {\pr_{\Sigma}}0$ are equivalent. Therefore, 
$\phi \Pent \psi$ iff $\phi \PentP{\pr_{\Sigma}} \psi$ holds. 
Furthermore, the strict case follows from the fact that we rule out in $\phi \land \neg \pr0$ and 
$\phi \land \neg {\pr_{\Sigma}}0$ all worlds with $0$ probability (see also Remark~\ref{nonz}), which concludes.
\end{proof}

\setcounter{proposition}{\getrefnumber{probent}-1}
\begin{proposition} 
    Consider formulae $\phi$ and $\psi$ \wrt~$\Sigma$, and a probability distribution $\pr$ over $\Sigma \supseteq \Sigma_{\phi}\cup\Sigma_{\psi}$.
    If $\phi \Pentlhd \psi$ then $\pr(\phi) \lhd \pr(\psi)$, for  $\lhd\in\{\leq,<\}$.
\end{proposition}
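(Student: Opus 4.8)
The plan is to reduce everything to a set-inclusion argument at the level of \emph{possible} models, exploiting the two facts already established in the excerpt: that $\phi \Pent \psi$ holds iff $[\phi]^+_\Sigma \subseteq [\psi]_\Sigma$, and that zero-probability worlds contribute nothing to $\pr(\phi) = \sum_{w \in [\phi]_\Sigma} \pr(w)$, so equivalently $\pr(\phi) = \sum_{w \in [\phi]^+_\Sigma} \pr(w)$. Once restated this way, both inequalities follow by summing non-negative weights over nested sets.

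First I would treat the non-strict case $\lhd = {\leq}$. Assume $\phi \Pent \psi$. Every world $w \in [\phi]^+_\Sigma$ is a model of $\psi$ and, by definition of a possible model, satisfies $\pr(w) > 0$; hence $w \in [\psi]^+_\Sigma$, giving the inclusion $[\phi]^+_\Sigma \subseteq [\psi]^+_\Sigma$. Summing the non-negative quantities $\pr(w)$ over the smaller index set then yields
\[
\pr(\phi) \;=\; \sum_{w \in [\phi]^+_\Sigma} \pr(w) \;\leq\; \sum_{w \in [\psi]^+_\Sigma} \pr(w) \;=\; \pr(\psi) \ .
\]

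Next I would handle the strict case $\lhd = {<}$. Assume $\phi \PentL \psi$, that is $\phi \Pent \psi$ together with $\psi \not\Pent \phi$. The first conjunct again gives $[\phi]^+_\Sigma \subseteq [\psi]^+_\Sigma$ as above. For the second conjunct I would invoke Remark~\ref{nonz}: since $\psi \not\Pent \phi$, there is a possible model $w^\ast$ of $\psi$ that is not a model of $\phi$, so $w^\ast \in [\psi]^+_\Sigma \setminus [\phi]^+_\Sigma$ with $\pr(w^\ast) > 0$. Splitting the sum for $\pr(\psi)$ over the disjoint union $[\phi]^+_\Sigma \,\cup\, ([\psi]^+_\Sigma \setminus [\phi]^+_\Sigma)$ then gives
\[
\pr(\psi) - \pr(\phi) \;=\; \sum_{w \in [\psi]^+_\Sigma \setminus [\phi]^+_\Sigma} \pr(w) \;\geq\; \pr(w^\ast) \;>\; 0 \ ,
\]
so $\pr(\phi) < \pr(\psi)$.

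I do not expect a genuine obstacle: the argument is elementary once the possible-model reformulation is in place. The only point requiring care is the strict case, where one must produce an actual \emph{positive-probability} witness distinguishing $\psi$ from $\phi$; this is exactly what Remark~\ref{nonz} guarantees, so strictness of $\pr$-entailment translates into a strict probability gap rather than a merely weak one.
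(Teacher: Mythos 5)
Your proof is correct and follows essentially the same route as the paper's: the paper works with the formula $\phi \land \neg \pr0$, whose models are exactly your set $[\phi]^{+}_\Sigma$, and likewise reduces both inequalities to summing probabilities over nested sets of possible models. If anything, your treatment of the strict case (extracting an explicit positive-probability witness $w^\ast$ via Remark~\ref{nonz} and splitting the sum) spells out a step the paper's proof only gestures at.
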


\begin{proof}
%
Consider the case $\phi \Pent \psi$. 
By definition of $\pr$-entailment, $[\phi \land \neg \pr0]_\Sigma \subseteq [\psi]_\Sigma$ holds. Therefore, as we rule out $0$ probability worlds,  $\pr(\phi) = \pr(\phi \land \neg \pr0) \leq \pr(\psi)$.

Similar to Proposition~\ref{propextconf}, the strict case follows from the fact that we rule out in $\phi \land \neg \pr0$ all worlds with $0$ probability (see also Remark~\ref{nonz}), which concludes.
\end{proof}

\setcounter{proposition}{\getrefnumber{kmsprop}-1}
\begin{proposition} 
 A KM $\km$ satisfying (KM1) - (KM3) is of the form $\frac{1}{\log_2 b} \cdot \kms$, for a constant $b>1$, 
 \ie~for a formula $\phi$ \wrt~$\Sigma$ and a probability distribution $\pr$ over $\Sigma$,
 $\km(\phi) = -\log_b \pr(\phi) = \frac{1}{\log_2 b} \cdot \kms (\phi)$. Also, $\kms$ satisfies (KM1) - (KM3).
\end{proposition}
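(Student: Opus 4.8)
The plan is to show that the three axioms force $\km$ to be a function of the single number $\pr(\phi)$ that solves a multiplicative Cauchy equation, and then to invoke the standard classification of its monotone solutions. First I would use (KM2) to argue that $\km(\phi)$ depends only on $\pr(\phi)$: applying the $\leq$-instance of (KM2) in both directions, $\pr(\phi)=\pr(\psi)$ yields simultaneously $\km(\psi)\leq\km(\phi)$ and $\km(\phi)\leq\km(\psi)$, hence $\km(\phi)=\km(\psi)$. Therefore there is a single map $f\colon[0,1]\to\mathbb{R}^+\cup\{0,\infty\}$ with $\km(\phi)=f(\pr(\phi))$, and (KM2) says precisely that $f$ is non-increasing, strictly so on strict inequalities.

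Next I would turn (KM3) into a functional equation. For $\pr$-independent $\phi$ and $\psi$ we have $\pr(\phi\land\psi)=\pr(\phi)\cdot\pr(\psi)$, so (KM3) reads $f(\pr(\phi)\cdot\pr(\psi))=f(\pr(\phi))+f(\pr(\psi))$, i.e. $f(xy)=f(x)+f(y)$ on the probability values realisable by independent formulae. Substituting $x=2^{-s}$ and setting $g(s)=f(2^{-s})$ converts this into the additive Cauchy equation $g(s+t)=g(s)+g(t)$; since $g$ inherits monotonicity from $f$, its only solutions are linear, $g(s)=c\cdot s$, with no continuity hypothesis beyond monotonicity required. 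Unwinding gives $f(x)=-c\log_2 x$. The boundary conditions in (KM1), namely $f(1)=\km(\top)=0$ and $\lim_{x\to 0^+}f(x)=\km(\remaind)=\infty$ together with $f$ non-increasing, force $c>0$; writing $c=1/\log_2 b$ with $b>1$ yields $f(x)=-\log_b x$, whence $\km(\phi)=-\log_b\pr(\phi)=\frac{1}{\log_2 b}\kms(\phi)$ by change of base.

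The main obstacle is ensuring that the equation $f(xy)=f(x)+f(y)$ holds on a domain rich enough to pin down $f$ on all of $(0,1]$, since for a fixed finite $\Sigma$ only finitely many values $\pr(\phi)$ actually occur. I would resolve this by enlarging the alphabet: using the extension/marginalisation machinery of Proposition~\ref{prsigmapB} and Eq.~\ref{prsphiB}, under which the probability of a formula is preserved, I can introduce fresh propositional letters carrying independent events of prescribed probabilities, thereby realising arbitrarily many independent pairs $(x,y)$ and recovering the equation on a dense set of arguments; monotonicity then propagates the logarithmic form to the whole interval. This is exactly the content of Shannon's uniqueness theorem for self-information~\cite{Shannon48} that the statement appeals to.

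Finally I would verify the converse half, that $\kms$ itself satisfies (KM1)--(KM3). Axiom (KM1) is immediate from $\pr(\top)=1$ and $\pr(\remaind)=0$ together with $\log_2 1=0$ and the convention $\log_2 0=-\infty$; (KM2) holds because $-\log_2$ is strictly decreasing, so it preserves both $\leq$ and $<$; and (KM3) follows since $\pr(\phi\land\psi)=\pr(\phi)\cdot\pr(\psi)$ for $\pr$-independent formulae, whereupon the logarithm splits the product into the sum $\kms(\phi)+\kms(\psi)$. These verifications are routine and complete the proof.
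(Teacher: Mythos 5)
Your proof is correct and reaches the same conclusion as the paper, but by a genuinely different route. The paper's proof fixes $I(\pr(\phi))=\km(\phi)$, \emph{assumes} (with an apologetic footnote) that $I$ is twice differentiable, differentiates the multiplicative equation $I(pp')=I(p)+I(p')$ twice to obtain $(pI'(p))'=0$, and solves this ODE to get $I(p)=k\ln p+c$, after which (KM1) and (KM2) fix $c=0$ and $k<0$. You avoid the smoothness hypothesis altogether: you first justify (rather than silently assume, as the paper does) that $\km$ factors through $\pr$ via the two-sided application of (KM2), then reduce the multiplicative equation to the additive Cauchy equation by the substitution $g(s)=f(2^{-s})$ and invoke the classical fact that monotone solutions of Cauchy's equation are linear --- and monotonicity is exactly what (KM2) supplies, so your regularity assumption is drawn from the axioms themselves rather than imposed from outside. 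You also explicitly flag the domain-richness problem (for fixed finite $\Sigma$ only finitely many values $\pr(\phi)$ are realisable, so the functional equation by itself cannot pin down $f$ on all of $(0,1]$) and patch it by alphabet extension; the paper's proof passes over this silently, implicitly treating $I$ as defined and multiplicative on the whole interval, so your version is the more honest rendering of what the appeal to Shannon's uniqueness theorem actually requires (note that your fix needs the axioms to be read uniformly across extensions of $(\Sigma,\pr)$, which is also what the paper tacitly assumes). One small slip: you attribute $c>0$ to (KM1) plus non-increasingness, but $f\equiv 0$ on $(0,1]$ with $f(0)=\infty$ survives those constraints; what rules out $c=0$ is the strict clause of (KM2) applied to any formula with $0<\pr(\phi)<1$ against $\top$ (the paper's claim that $k<0$ has the identical gap), so this does not affect the comparison.
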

\begin{proof}[Proof]
For the interested reader, we outline here a proof. 
Let us consider the function 
$I(\pr(\phi)) = \kms(\phi)$, \ie~$I$ has a probability as parameter. Then, for a twice differentiable function $I$,\footnote{We make this assumption for ease of presentation, though one doesn't need to do so~\cite{Shannon48}.} we have 
by (KM3),  $I(p  p') =  I(p) + I(p')$. Taking the derivative \wrt~$p$, we get
$p' I'(p  p') =   I'(p)$. Now, taking the derivative \wrt~$p'$, 
$I'(p p') + pp' I''(p p') =   0$ follows and by introducing $p=p p'$, we get 
$I'(p) + p I''(p) =   0$. That is, $(p I'(p))' =  0$.
This differential equation has solution $I(p) = k\ln p + c$ for $k,c \in \mathbb{R}$. Now, by (KM1), $I(1) = 0$ and, thus,
$c=0$. (KM2) tells us that $I(p)$ is monotonically non-increasing and, thus,  together with $I(1) = 0$ we have $I(p) \geq 0$ for all $p \in[0,1]$. Therefore, $k<0$ has to hold. Eventually, choosing a value for $k$ is equivalent to choosing a value 
$b>1$ for $k = -\frac{1}{\ln b}$, \ie~$b$ is the value of the base of the logarithm. Therefore, $I(p) = -\log_b p$ and, thus, $\km(\phi) = -\log_b \pr(\phi)$. Eventually, by choosing $b=2$, we also get that $\kms$ satisfies (KM1) - (KM3), which concludes.
%
\end{proof}


\setcounter{proposition}{\getrefnumber{kmsclassical}-1}
\begin{proposition} 
For any formula $\phi$, given the uniform probability distribution $\pr^u_{\Sigma}$ over $\Sigma$,
$\kms(\phi) = \kmh(\phi)$.
\end{proposition}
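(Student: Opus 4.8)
The plan is to reduce the whole statement to a single logarithm identity, after first handling the degenerate case and then justifying the passage from the full alphabet $\Sigma$ down to $\Sigma_\phi$. First I would dispose of the unsatisfiable case: if $\phi$ has no model then $[\phi]_{\Sigma_\phi}=\emptyset$, so $\pr^u_{\Sigma}(\phi)=0$ by Eq.~\ref{prphi}, and hence $\kms(\phi)=-\log_2 0=\infty$, which matches the stipulation $\kmh(\phi)=\infty$ accompanying Eq.~\ref{kmhs}. From here on I assume $\phi$ is satisfiable.

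The first substantive step is the reduction to $\Sigma_\phi$. By Proposition~\ref{prsigmapB} (equivalently Eq.~\ref{prsphiB}) we have $\pr_\Sigma(\phi)=\pr_{\Sigma_\phi}(\phi)$ whenever $\pr_\Sigma$ is the extension of $\pr_{\Sigma_\phi}$. I would observe that the extension of a uniform distribution is again uniform: by Eq.~\ref{sigmapB} each world $w\in\extw{\bar{w}}_\Sigma$ receives probability $\frac{1}{|\extw{\bar{w}}_\Sigma|}\cdot\pr_{\bar{\Sigma}}(\bar{w})$, and when $\pr_{\bar{\Sigma}}$ is constant this value is the same for every $w\in\W_\Sigma$. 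Consequently $\pr^u_\Sigma$ is precisely the extension of $\pr^u_{\Sigma_\phi}$, so Proposition~\ref{prsigmapB} yields $\pr^u_\Sigma(\phi)=\pr^u_{\Sigma_\phi}(\phi)$.

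The second step is the direct computation over $\Sigma_\phi$. Using the fact recorded after Eq.~\ref{prphi}, that under a uniform distribution $\pr_{\bar{\Sigma}}(\phi)=|[\phi]_{\bar{\Sigma}}|/|\W_{\bar{\Sigma}}|$, together with $|\W_{\Sigma_\phi}|=2^{|\Sigma_\phi|}$, I obtain
\[
\kms(\phi)=-\log_2 \pr^u_{\Sigma_\phi}(\phi)=-\log_2\frac{|[\phi]_{\Sigma_\phi}|}{2^{|\Sigma_\phi|}}=|\Sigma_\phi|-\log_2 |[\phi]_{\Sigma_\phi}|,
\]
which is exactly $\kmh(\phi)$ as given in Eq.~\ref{kmhs}.

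There is no genuinely hard part here; the only thing that must not be glossed over is the reduction to $\Sigma_\phi$, namely justifying that $\kms$ taken against the uniform distribution on the full alphabet $\Sigma$ agrees with the value computed on $\Sigma_\phi$ alone. This is what makes the term $|\Sigma_\phi|$ (rather than $|\Sigma|$) appear, and it is supplied cleanly by Proposition~\ref{prsigmapB} once the preservation of uniformity under extension is noted. Everything else is the elementary identity $\log_2(a/2^{n})=\log_2 a - n$.
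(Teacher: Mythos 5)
Your proof is correct and follows essentially the same route as the paper's: dispose of the unsatisfiable case, reduce from $\Sigma$ to $\Sigma_\phi$ via Proposition~\ref{prsigmapB}/Eq.~\ref{prsphiB}, and finish with the identity $-\log_2\bigl(|[\phi]_{\Sigma_\phi}|/2^{|\Sigma_\phi|}\bigr)=|\Sigma_\phi|-\log_2|[\phi]_{\Sigma_\phi}|$. The only (cosmetic) difference is that you justify the reduction by checking that the extension of $\pr^u_{\Sigma_\phi}$ is $\pr^u_{\Sigma}$, while the paper states the equivalent fact that the marginalisation of $\pr^u_{\Sigma}$ to $\Sigma_\phi$ is uniform; your explicit verification is, if anything, slightly more careful.
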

\begin{proof}
If $\phi$ unsatisfiable then  $\pr^u_{\Sigma}(\phi) = 0$ and, thus, by def.~$\kms(\phi) = \infty = \kmh(\phi)$.
Otherwise, $\pr^u_{\Sigma} = 1/2^{|\Sigma|}$ and, thus, its marginalisation  to $\Sigma_\phi$ is 
$\pr^u_{\Sigma_\phi} = 1/2^{|\Sigma_\phi|}$ and $\pr^u_\Sigma(\phi) = \pr^u_{\Sigma_\phi}(\phi) = |[\phi]_{\Sigma_\phi}|/2^{|\Sigma_\phi|}$ holds (by Eq.~\ref{prsphiB}). So,
$\kms(\phi)$  $=  - \log_2 \pr_\Sigma(\phi)$ 
$=  - \log_2 \pr_{\Sigma_\phi}(\phi)$
$=  - \log_2 \frac{|[\phi]_{\Sigma_\phi}|}{2^{|\Sigma_\phi|}}$ 
$=  |\Sigma_\phi| - \log_2 |[\phi]_{\Sigma_\phi}|$ 
$=  \kmh(\phi)$, which concludes.
\end{proof}

\setcounter{proposition}{\getrefnumber{compreminder}-1}
\begin{proposition} 
Consider a probability distribution $\pr$ over $\W_\Sigma$ and formulae $\phi$ and $\alpha$ such that
$\phi \Pent \alpha$ and $\top \not\Pent \alpha$. Then
$\psi \in \phi\remaind\alpha$ iff 
%
\begin{equation*}
\psi \equiv \overl({[\phi]_\Sigma^+ \cup \{w\}}) 
\end{equation*}
\nd for some $w \in [\neg \alpha]_\Sigma^+$. Furthermore, 
$\pr(\psi) = \pr(\overl{([\phi]_\Sigma^+})) + \pr(w)$. 
On the other hand, if $\phi \not \Pent \alpha$ or $\top \Pent \alpha$ then 
 $\phi\remaind\alpha = \{\overl{([\phi]^+})\}$.
\end{proposition}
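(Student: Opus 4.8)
The plan is to translate the definitions of possible remainders (Eq.~\ref{possremind}) and of remainders (Eq.~\ref{remind}) into the language of sets of possible models, and then to read off the $\PentL$-minimal elements explicitly. The whole argument rests on the dictionary, already recorded in the excerpt, that $\phi \Pent \psi$ holds iff $[\phi]^+_\Sigma \subseteq [\psi]_\Sigma$, so that (restricting to possible worlds, see Remark~\ref{nonz}) $\psi' \PentL \psi$ is equivalent to the strict inclusion $[\psi']^+ \subsetneq [\psi]^+$.

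First I would dispose of the two degenerate cases. If $\top \Pent \alpha$, then by Eq.~\ref{possremind} the set $\phi\possremaind\alpha$ is already the singleton $\{\overl{[\phi]^+}\}$, so Eq.~\ref{remind} gives $\phi\remaind\alpha = \{\overl{[\phi]^+}\}$ immediately. If instead $\top \not\Pent \alpha$ but $\phi \not\Pent \alpha$, I would take $\psi := \phi$ to witness $\overl{[\phi]^+} \in \phi\possremaind\alpha$ (here $\phi \Pent \phi$ and $\phi \not\Pent \alpha$ hold), and observe that every other possible remainder $\overl{[\psi]^+}$ satisfies $\phi \Pent \psi$, i.e.\ $[\phi]^+ \subseteq [\psi]^+$. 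Hence $\overl{[\phi]^+}$ is the unique $\PentL$-minimal element, so $\phi\remaind\alpha = \{\overl{[\phi]^+}\}$.

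The core case is $\phi \Pent \alpha$ and $\top \not\Pent \alpha$. Reading Eq.~\ref{possremind}, the possible-model sets $S = [\psi]^+$ occurring in $\phi\possremaind\alpha$ are exactly the sets of possible worlds with $[\phi]^+ \subseteq S$ (from $\phi \Pent \psi$) and $S \cap [\neg\alpha]^+ \neq \emptyset$ (from $\psi \not\Pent \alpha$); conversely each such $S$ is realised by $\psi := \overl{S}$, since $[\overl{S}]^+ = S$ whenever $S$ consists of possible worlds. By the dictionary above, the $\PentL$-minimal possible remainders correspond to the $\subseteq$-minimal admissible sets $S$. Since $\phi \Pent \alpha$ forces $[\phi]^+ \subseteq [\alpha]$, no world of $[\phi]^+$ lies in $[\neg\alpha]^+$, so every admissible $S$ must adjoin to $[\phi]^+$ at least one world $w \in [\neg\alpha]^+$ lying strictly outside $[\phi]^+$. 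The $\subseteq$-minimal admissible sets are therefore precisely $[\phi]^+ \cup \{w\}$ with $w \in [\neg\alpha]^+$, and the assumption $\top \not\Pent \alpha$ guarantees $[\neg\alpha]^+ \neq \emptyset$, so such minimal elements exist. This yields the claimed equivalence $\psi \in \phi\remaind\alpha$ iff $\psi \equiv \overl{[\phi]^+ \cup \{w\}}$ for some $w \in [\neg\alpha]^+$.

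For the probability identity I would note that, as $w \in [\neg\alpha]^+$ while $[\phi]^+ \subseteq [\alpha]$, the world $w$ does not belong to $[\phi]^+$; hence $[\phi]^+$ and $\{w\}$ are disjoint and Eq.~\ref{prphi} gives $\pr(\psi) = \pr(\overl{[\phi]^+}) + \pr(w)$. The only point requiring care — rather than a genuine difficulty — is the faithful translation between $\pr$-entailment and inclusion of possible-model sets, together with checking that every relevant set of possible worlds is represented by a formula via $\overl{\cdot}$; once this correspondence is fixed, the minimality analysis is routine.
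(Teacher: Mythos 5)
Your proof is correct and follows essentially the same route as the paper's: both reduce the problem to the set-level observation that remainders are the $\subseteq$-minimal supersets of $[\phi]^+$ meeting $[\neg\alpha]^+$, with nothing strictly between $[\phi]^+$ and $[\phi]^+\cup\{w\}$, and both get the probability identity from the disjointness $w\notin[\phi]^+$. If anything, your write-up is slightly more complete than the paper's, which argues explicitly only that each $\overl{([\phi]^+\cup\{w\})}$ is a remainder and asserts the converse, and which dismisses the degenerate cases ``by definition'' where you spell out the argument for $\phi\not\Pent\alpha$.
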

\begin{proof}
Assume $\phi \Pent \alpha$ and $\top \not\Pent \alpha$. Clearly, $\overl({[\phi]_\Sigma^+ \cup \{w\}})$ is a possible reminder as $\phi \Pent \overl({[\phi]_\Sigma^+ \cup \{w\}})$ and $\overl({[\phi]_\Sigma^+ \cup \{w\}})\not\Pent\alpha$ holds. Now, assume to the contrary that $\overl({[\phi]_\Sigma^+ \cup \{w\}})$ is not a reminder. Therefore, there is another $\psi \in \phi\remaind\alpha$ \st~$\psi\PentL  \overl({[\phi]^+ \cup  \{w\}})$. But, that to be the case, it must be $[\phi]^+\subset [\psi]^+\subset ([\phi]^+ \cup \{w\})$, that cannot be the case.
    Therefore, each reminder corresponds to a formula $\overl({[\phi]_\Sigma^+ \cup \{w\}})$ with $w \in [\neg \alpha]_\Sigma^+$, which concludes this part.
Additionally, as $w\not \in [\phi]_\Sigma^+$, $\pr(\psi) = \pr(\overl{([\phi]_\Sigma^+})) + \pr(w)$ holds.

Eventually, if $\phi \not \Pent \alpha$ or $\top \Pent \alpha$ holds then 
 $\phi\remaind\alpha = \{\overl{([\phi]^+})\}$ by the definition of the reminder set, which concludes the proof.
\end{proof}

\setcounter{proposition}{\getrefnumber{compreminderB}-1}
\begin{proposition} 
Consider a probability distribution $\pr$ over $\W$, a KM $\km$ and formulae $\phi$ and $\alpha$. 
Then
%

\begin{equation*} 
\phi^{\contrk}_{\alpha} \equiv
\left \{ \begin{array}{ll}
\overl({[\phi]^+ \cup \min_{\km}([\neg \alpha]^+})) & \text{if } \phi\Pent \alpha\\
\overl{[\phi]^+} & \text{otherwise, }
\end{array}\right .
\end{equation*}

\nd where $\min_{\km}([\neg \alpha]^+)=\{w\in\W \mid w\in[\neg \alpha]^+$ and there is no $w'\in\W\text{ s.t. } w'\in[\neg \alpha]^+ \text{ and } \km(w') < \km(w)\}$.
\end{proposition}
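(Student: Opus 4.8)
The plan is to evaluate the disjunction in Definition~\ref{def_KMcontraction} explicitly, using the characterisation of the remainder set from Proposition~\ref{compreminder} together with the monotonicity axiom (KM2), and then to read off the models of the resulting formula. I would split along the case distinction appearing in the claimed identity.

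First I would dispatch the easy branches. If $\phi \not\Pent \alpha$, or if $\top \Pent \alpha$, then Proposition~\ref{compreminder} gives $\phi\remaind\alpha = \{\overl{[\phi]^+}\}$; since $\km\min$ of a singleton is that singleton, $\phi^{\contrk}_{\alpha} = \bigvee \{\overl{[\phi]^+}\} = \overl{[\phi]^+}$. This settles the ``otherwise'' branch, and it also covers the degenerate sub-case $\phi\Pent\alpha$ with $\top\Pent\alpha$: there $[\neg\alpha]^+ = \emptyset$, so $\min_{\km}([\neg\alpha]^+) = \emptyset$ and the first branch of the claimed identity likewise reads $\overl{[\phi]^+}$, in agreement.

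The substantive case is $\phi\Pent\alpha$ together with $\top\not\Pent\alpha$. Here Proposition~\ref{compreminder} tells me the remainders are exactly the formulae $\psi_w := \overl{[\phi]^+ \cup \{w\}}$ for $w \in [\neg\alpha]^+$, with $\pr(\psi_w) = \pr(\overl{[\phi]^+}) + \pr(w)$. The key step is to identify $\km\min(\phi\remaind\alpha)$. Since the summand $\pr(\overl{[\phi]^+})$ is a fixed constant, $\pr(\psi_w)$ is maximal precisely when $\pr(w)$ is; and by (KM2) the measure $\km$ is non-increasing in probability (strictly decreasing on strict inequalities), so $\km(\psi_w)$ is minimal iff $\pr(\psi_w)$ is maximal iff $\pr(w)$ is maximal iff $\km(\overl{w})$ is minimal, \ie~iff $w \in \min_{\km}([\neg\alpha]^+)$. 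Hence $\km\min(\phi\remaind\alpha) = \{\psi_w \mid w \in \min_{\km}([\neg\alpha]^+)\}$.

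Finally I would compute models. Each $\overl{w}$ is satisfied exactly by $w$ (Eqs.~\ref{ffa}--\ref{ffb}), so $\psi_w$ has model set $[\phi]^+ \cup \{w\}$; therefore the disjunction $\bigvee_{w\in\min_{\km}([\neg\alpha]^+)} \psi_w$ has model set $[\phi]^+ \cup \min_{\km}([\neg\alpha]^+)$, yielding $\phi^{\contrk}_{\alpha} \equiv \overl{[\phi]^+ \cup \min_{\km}([\neg\alpha]^+)}$. I expect the main obstacle to be exactly the key step above: showing that $\km$-minimality among the remainders coincides with $\km$-minimality among $\neg\alpha$-worlds. This rests on the additive decomposition $\pr(\psi_w) = \pr(\overl{[\phi]^+}) + \pr(w)$ from Proposition~\ref{compreminder}, which makes the shift by the constant $\pr(\overl{[\phi]^+})$ order-preserving, together with the (strict) monotonicity of (KM2); without the latter one could not conclude that the chosen remainders are precisely those arising from $\km$-minimal worlds.
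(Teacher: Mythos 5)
Your proof is correct and follows essentially the same route as the paper's: both dispatch the degenerate cases via Proposition~\ref{compreminder}, and in the substantive case ($\phi\Pent\alpha$, $\top\not\Pent\alpha$) both identify $\km\min(\phi\remaind\alpha)$ with $\{\overl{([\phi]^+\cup\{w\}})\mid w\in\min_{\km}([\neg\alpha]^+)\}$ using the additive decomposition $\pr(\psi_w)=\pr(\overl{[\phi]^+})+\pr(w)$ together with the (strict) monotonicity of (KM2). The only difference is presentational: the paper establishes the two set inclusions by contradiction, while you state the same fact as a direct chain of equivalences.
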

\begin{proof} 
If $\phi \not \Pent \alpha$ then by Proposition~\ref{compreminder}, $\phi\remaind\alpha = \{\overl{([\phi]^+})\}$ and, thus, $\phi^{\contrk}_{\alpha} = \overl({[\phi]^+})$.
Now, let us assume $\phi \Pent \alpha$ instead. There are two cases.

\begin{description}
    \item[Case $\top \Pent \alpha$.]  If  $\top \Pent \alpha$, then  $\min_{\km}([\neg \alpha]^+) = \emptyset$.  Again, by Proposition~\ref{compreminder},
$\phi\remaind\alpha = \{\overl{([\phi]^+})\}$ and, thus, $\phi^{\contrk}_{\alpha} = \overl({[\phi]^+}) = \overl({[\phi]^+ \cup \emptyset })$.

    \item[Case $\top \not\Pent \alpha$.] 
    Let us prove that $\overl({[\phi]^+ \cup  \min_{\km}([\neg \alpha]^+}))$ $\equiv$ $\bigvee  \km\min(\phi\remaind\alpha)$. It is sufficient to prove that each formula in $\km\min(\phi\remaind\alpha)$ corresponds
to a formula $\overl({[\phi]^+ \cup \{ w \} })$ with $w\in\min_{\km}([\neg \alpha]^+)$. 
    
At first we prove that a formula $\overl({[\phi]^+ \cup  \{w\}})$ with $w\in\min_{\km}([\neg \alpha]^+)$ is in $\km\min(\phi\remaind\alpha)$. 
Now, $\overl({[\phi]^+ \cup  \{w\}})$, is a reminder by Proposition~\ref{compreminder},
%
\ie~$\overl({[\phi]^+ \cup \{w\}}) \in\phi\remaind\alpha$, and we have to prove that $\overl({[\phi]^+ \cup \{w\}}) \in\km\min(\phi\remaind\alpha)$. Assume it is not the case. That implies that there is $\psi\in\phi\remaind\alpha$ s.t.~$\km(\psi) < \km(\overl({[\phi]^+ \cup \{w\}}))$ and, thus, $\pr(\psi) > \pr(\overl({[\phi] \cup \{w\}})) = \pr(\overl{[\phi])} + \pr(w)$ (by Proposition~\ref{compreminder}).  But, $\psi\in\phi\remaind\alpha$ means that, by  Proposition~\ref{compreminder}, $\psi \equiv \overl({[\phi]^+ \cup \{w'\}}))$ for some $w' \in [\neg \alpha]^+$, $\pr(\psi) = \pr(\overl{[\phi]}) + \pr(w')$ and, thus, $\pr(w') > \pr(w)$, that cannot be the case as $w\in \min_k([\neg \alpha]^+)$ and, thus, $\pr(w) \geq \pr(w')$ has to hold.  
    
To prove that every formula in $\km\min(\phi\remaind\alpha)$ indeed corresponds to some formula
$\overl({[\phi]^+ \cup \{w\}})$, with $w\in\min_{\km}([\neg \alpha]^+)$, consider that every formula in $\km\min(\phi\remaind\alpha)$ must correspond to a formula $\overl({[\phi]^+ \cup \{w\}})$ with $w\in[\neg \alpha]^+$, and for it to be in $\km\min(\phi\remaind\alpha)$ it must be one of such formulae with the highest probability \wrt~$\pr$, hence $w$ must be in $\min_{\km}([\neg \alpha]^+)$.
    
Therefore, $\km\min(\phi\remaind\alpha)$ $ = $ $\{ \overl({[\phi]^+ \cup \{w\}}) \mid w\in $ $\min_{\km}([\neg \alpha]^+)\}$. That is, $\overl({[\phi]^+ \cup  \min_{\km}([\neg \alpha]}))$ $\equiv$ $\bigvee \km\min(\phi\remaind\alpha)$, which concludes.
    
\end{description}

\end{proof}

\setcounter{corollary}{\getrefnumber{compreminderBB}-1}
\begin{corollary} 
Consider a probability distribution $\pr$ over $\W$, a KM $\km$ and formulae $\phi$ and $\alpha$. 
Then
\begin{equation*} \label{eq_contrALoss}
L^\contr_\pr(\phi, \alpha) = 
\left \{ \begin{array}{ll}
\log_2 (1+ \frac{\pr(\overl\min_{\km}([\neg \alpha]^+)}{\pr(\phi)}) & \text{if } \phi\Pent \alpha\\
0 & \text{otherwise . }
\end{array}\right .
\end{equation*}
\end{corollary}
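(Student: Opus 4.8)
The plan is to reduce everything to Proposition~\ref{compreminderB}, which already supplies the semantic form of $\phi^{\contr}_{\alpha}$, and then merely unfold the definition of information loss using the Shannon KM. Since the stated result is expressed with $\log_2$, the intended measure is $\kms(\cdot) = -\log_2 \pr(\cdot)$ of Eq.~\ref{khconc}; I instantiate $\km = \kms$ throughout. First I would rewrite the loss as a single logarithm: by Eq.~\ref{losscontra} and Eq.~\ref{khconc},
\[
L^\contr_\pr(\phi, \alpha) = \kms(\phi) - \kms(\phi^{\contr}_{\alpha}) = -\log_2 \pr(\phi) + \log_2 \pr(\phi^{\contr}_{\alpha}) = \log_2 \frac{\pr(\phi^{\contr}_{\alpha})}{\pr(\phi)} \ .
\]
Thus the entire problem reduces to computing $\pr(\phi^{\contr}_{\alpha})$ in each of the two cases delivered by Proposition~\ref{compreminderB}.

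For the \emph{otherwise} case ($\phi \not\Pent \alpha$), Proposition~\ref{compreminderB} gives $\phi^{\contr}_{\alpha} \equiv \overl{[\phi]^+}$. Because the $0$-probability models of $\phi$ contribute nothing to $\pr(\phi)$, we have $\pr(\overl{[\phi]^+}) = \pr(\phi)$, so the ratio equals $1$ and $L^\contr_\pr(\phi, \alpha) = \log_2 1 = 0$, exactly as claimed.

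For the case $\phi \Pent \alpha$, Proposition~\ref{compreminderB} gives $\phi^{\contr}_{\alpha} \equiv \overl{[\phi]^+ \cup \min_{\km}([\neg \alpha]^+)}$. The key observation is that $\phi \Pent \alpha$ forces every possible model of $\phi$ to satisfy $\alpha$, i.e.~$[\phi]^+ \subseteq [\alpha]_\Sigma$, whence $[\phi]^+ \cap [\neg \alpha]^+ = \emptyset$ and in particular $[\phi]^+$ and $\min_{\km}([\neg \alpha]^+)$ are disjoint. Consequently the probability of the union splits additively,
\[
\pr(\phi^{\contr}_{\alpha}) = \pr(\overl{[\phi]^+}) + \pr(\overl{\min_{\km}([\neg \alpha]^+)}) = \pr(\phi) + \pr(\overl{\min_{\km}([\neg \alpha]^+)}) \ ,
\]
and substituting into the ratio above yields
\[
L^\contr_\pr(\phi, \alpha) = \log_2 \frac{\pr(\phi) + \pr(\overl{\min_{\km}([\neg \alpha]^+)})}{\pr(\phi)} = \log_2\!\left(1 + \frac{\pr(\overl{\min_{\km}([\neg \alpha]^+)})}{\pr(\phi)}\right) \ ,
\]
which is the desired expression.

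The argument is essentially a one-line calculation once Proposition~\ref{compreminderB} is in hand, so I do not expect genuine difficulty. The only step requiring care is the disjointness of $[\phi]^+$ and $\min_{\km}([\neg \alpha]^+)$, since this is precisely what makes $\pr$ additive over the union and lets the $\pr(\phi)$ terms combine into the clean $1 + (\cdot)$ form; everything else is routine logarithm manipulation together with the observation $\pr(\overl{[\phi]^+}) = \pr(\phi)$.
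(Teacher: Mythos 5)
Your proof is correct and follows essentially the same route as the paper's: both reduce to Proposition~\ref{compreminderB}, use additivity of $\pr$ over the disjoint union $[\phi]^+ \cup \min_{\km}([\neg\alpha]^+)$ together with $\pr(\overl{[\phi]^+}) = \pr(\phi)$, and finish by logarithm algebra with the Shannon form $\km(\cdot) = -\log_2\pr(\cdot)$. The only differences are cosmetic (you rewrite the loss as a single log-ratio before computing, and you spell out the disjointness argument that the paper leaves implicit), which if anything makes the argument slightly more complete.
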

\begin{proof}
Consider Proposition~\ref{compreminderB} and Eq.~\ref{losscontra}. If  $\phi\not\Pent \alpha$ then $L^\contr_\pr(\phi, \alpha) = \km(\phi) - \km(\overl{([\phi]^+})) = \km(\phi) - \km(\phi) = 0$. Otherwise ($\phi\Pent \alpha$), let us note that $\pr(\overl({[\phi]^+ \cup \min_{\km}([\neg \alpha]^+}))) = \pr(\phi) + \pr(\min_{\km}([\neg \alpha]^+) = \pr(\phi) \cdot(1 + \sfrac{\pr(\min_{\km}([\neg \alpha]^+)}{\pr(\phi)})$ and, thus,  $L^\contr_\pr(\phi, \alpha) = \km(\phi) - \km(\phi^{\contr}_{\alpha})$ $= -log_2 \pr(\phi) + \log_2 \pr(\overl({[\phi]^+ \cup \min_{\km}([\neg \alpha]^+})))  = -log_2 \pr(\phi) + \log_2 \pr(\phi) \cdot(1 + \sfrac{\pr(\min_{\km}([\neg \alpha]^+)}{\pr(\phi)})$ 
$ = -log_2 \pr(\phi) + \log_2 \pr(\phi) + \log_2 (1 + \sfrac{\pr(\min_{\km}([\neg \alpha]^+)}{\pr(\phi)})$, which concludes.
\end{proof}

\setcounter{proposition}{\getrefnumber{prop_AGM_sound}-1}
\begin{proposition}
A KM-contraction operator $\contrk$ is an AGM contraction operator, that is, it satisfies postulates $(\contr 1)$-$(\contr 7)$.
\end{proposition}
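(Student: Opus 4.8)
The plan is to reduce every postulate to an elementary statement about \emph{possible models} by invoking the semantic characterisation of Proposition~\ref{compreminderB}, which expresses $\phi^{\contrk}_{\alpha}$ as $\overl{([\phi]^+ \cup \min_{\km}([\neg\alpha]^+))}$ when $\phi\Pent\alpha$ and as $\overl{[\phi]^+}$ otherwise. Since $\phi\Pent\psi$ holds iff $[\phi]^+\subseteq[\psi]$ and $\phi\equiv_{\pr}\psi$ iff $[\phi]^+=[\psi]^+$, and since $\overl{S}$ has exactly the worlds in $S$ as models, each postulate becomes a containment or equality between finite sets of worlds. The whole verification is then carried out case by case according to whether $\phi\Pent\alpha$ and whether $\top\Pent\alpha$ (the latter forcing $[\neg\alpha]^+=\emptyset$, hence $\min_{\km}([\neg\alpha]^+)=\emptyset$, so that the contraction collapses to $\overl{[\phi]^+}$).

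First I would dispatch the five ``local'' postulates, all of which are immediate from the characterisation. For $(\contr 1)$, both branches give $[\phi]^+\subseteq[\phi^{\contrk}_{\alpha}]$. For $(\contr 2)$, $\phi\not\Pent\alpha$ yields $\phi^{\contrk}_{\alpha}\equiv\overl{[\phi]^+}\equiv_{\pr}\phi$. For $(\contr 3)$, when $\top\not\Pent\alpha$ and $\phi\Pent\alpha$ the set $\min_{\km}([\neg\alpha]^+)$ is non-empty, and any $w$ in it is a possible model of $\phi^{\contrk}_{\alpha}$ falsifying $\alpha$. For $(\contr 4)$, $\alpha\equiv_{\pr}\beta$ gives $[\neg\alpha]^+=[\neg\beta]^+$ and $\phi\Pent\alpha$ iff $\phi\Pent\beta$, so both branches coincide. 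For $(\contr 5)$ (recovery), in the non-trivial branch $\min_{\km}([\neg\alpha]^+)\subseteq[\neg\alpha]^+$ is disjoint from $[\alpha]$, so $[\phi^{\contrk}_{\alpha}]\cap[\alpha]=[\phi]^+\cap[\alpha]=[\phi]^+$ because $\phi\Pent\alpha$ forces $[\phi]^+\subseteq[\alpha]$; hence $\phi^{\contrk}_{\alpha}\land\alpha\Pent\phi$.

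The core of the argument, and the part I expect to be the main obstacle, is $(\contr 6)$ and $(\contr 7)$, since they couple the contraction by $\alpha\land\beta$ with those by $\alpha$ and $\beta$. The one lemma that drives both is the containment $\min_{\km}(A\cup B)\subseteq\min_{\km}(A)\cup\min_{\km}(B)$: any KM-minimal world of $A\cup B$ lies, say, in $A$, and is then a fortiori KM-minimal within $A$. Applying this with $A=[\neg\alpha]^+$, $B=[\neg\beta]^+$ and $[\neg(\alpha\land\beta)]^+=A\cup B$, conjunctive overlap $(\contr 6)$ follows: in the branch $\phi\Pent\alpha\land\beta$ we get $[\phi^{\contrk}_{\alpha\land\beta}]^+=[\phi]^+\cup\min_{\km}(A\cup B)\subseteq[\phi]^+\cup\min_{\km}(A)\cup\min_{\km}(B)=[\phi^{\contrk}_{\alpha}]\cup[\phi^{\contrk}_{\beta}]$, while the case $\phi\not\Pent\alpha\land\beta$ reduces to $(\contr 1)$ via $\phi^{\contrk}_{\alpha\land\beta}\equiv_{\pr}\phi\Pent\phi^{\contrk}_{\alpha}\lor\phi^{\contrk}_{\beta}$.

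Finally, for $(\contr 7)$ I would exploit the hypothesis $\phi^{\contrk}_{\alpha\land\beta}\not\Pent\alpha$ quantitatively. Writing $m_{\alpha}$ (resp.\ $m_{\alpha\beta}$) for the least KM-value over $[\neg\alpha]^+$ (resp.\ over $[\neg\alpha]^+\cup[\neg\beta]^+$), one always has $m_{\alpha\beta}\leq m_{\alpha}$. In the branch $\phi\Pent\alpha\land\beta$ we have $[\phi]^+\subseteq[\alpha]$, so the assumed $\neg\alpha$-counterexample to $\phi^{\contrk}_{\alpha\land\beta}\Pent\alpha$ must come from $\min_{\km}(A\cup B)$ and be a $\neg\alpha$-world, whose KM-value is both $m_{\alpha\beta}$ and $\geq m_{\alpha}$; this forces $m_{\alpha\beta}=m_{\alpha}$, whence $\min_{\km}([\neg\alpha]^+)\subseteq\min_{\km}(A\cup B)$ and therefore $[\phi^{\contrk}_{\alpha}]\subseteq[\phi^{\contrk}_{\alpha\land\beta}]$, i.e.\ $\phi^{\contrk}_{\alpha}\Pent\phi^{\contrk}_{\alpha\land\beta}$; the case $\phi\not\Pent\alpha\land\beta$ again collapses through $\phi^{\contrk}_{\alpha\land\beta}\equiv_{\pr}\phi$. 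The bookkeeping across the $\top\Pent(\cdot)$ limiting cases is the only delicate point, but in each of them one of the three contractions collapses to $\overl{[\phi]^+}$ and the required inclusions become trivial.
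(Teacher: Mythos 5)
Your proposal is correct, and it takes a genuinely different route from the paper's own proof. The paper verifies the postulates directly from Definition~\ref{def_KMcontraction}, reasoning at the level of formulae: which formulae lie in the remainder set $\phi\remaind\alpha$, which of them are KM-minimal, and (for recovery) a proof by contradiction that manipulates $\alpha\limp\phi$ and uses $(\alpha\limp\phi)\limp\alpha\equiv\alpha$ together with Proposition~\ref{prKM}. You instead channel everything through the semantic characterisation of Proposition~\ref{compreminderB} and turn each postulate into a containment between finite sets of worlds; this is legitimate and non-circular, since Proposition~\ref{compreminderB} is established before and independently of the AGM postulates. The two arguments are close to being mirror images (by Proposition~\ref{compreminder}, KM-minimal remainders correspond exactly to worlds in $\min_{\km}([\neg\alpha]^+)$), but they diverge in the details: your recovery proof is a two-line set computation where the paper's is an indirect formula-level argument; your lemma $\min_{\km}(A\cup B)\subseteq\min_{\km}(A)\cup\min_{\km}(B)$ is the world-level counterpart of the paper's observation that every remainder for $\alpha\land\beta$ is a remainder for $\alpha$ or for $\beta$; and for $(\contr 7)$ your quantitative argument via the minimal KM values ($m_{\alpha\beta}=m_{\alpha}$, hence $\min_{\km}([\neg\alpha]^+)\subseteq\min_{\km}(A\cup B)$) makes fully explicit a step that the paper's proof asserts rather tersely (``this implies that all the $\gamma\in\km\min(\phi\remaind\alpha)$ are in $\km\min(\phi\remaind(\alpha\land\beta))$''). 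What your approach buys is uniformity and transparency --- every postulate reduces to elementary finite set theory over possible worlds, with the $\top\Pent\alpha$ limiting cases handled by $\min_{\km}(\emptyset)=\emptyset$; what the paper's approach buys is self-containment, as it works straight from the definition of the operator without routing through the semantic characterisation.
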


\begin{proof}
We have to prove that all AGM postulates are satisfied:
\begin{description}
    \item[$(\contr 1)$.] $\phi\Pent\psi$ for every $\psi\in\phi\remaind\alpha$, hence $\phi\Pent\bigvee \km\min(\phi\remaind\alpha)$.
    
    \item[$(\contr 2)$.] If $\phi\not\Pent \alpha$, then $\km\min(\phi\remaind\alpha)$ contains only $\overl{([\phi]^+})$, which is $\pr$-equivalent to $\phi$.

    \item[$(\contr 3)$.] Assume $\top \not\Pent \alpha$. Then $\psi\not\Pent\alpha$ for all $\psi\in\phi\remaind\alpha$, that, by the monotonicity of $\Pent$, implies $\bigvee\km\min(\phi\remaind\alpha)\not\Pent\alpha$.
    
    \item[$(\contr 4)$.] It is immediate from the definition of $\phi\remaind\alpha$.

   \item[$(\contr 5)$.] It suffices to prove that $\phi^{\contrk}_{\alpha}\Pent\alpha\limp\phi$. To this end, we prove that for all $\psi\in\km\min(\phi\remaind\alpha)$, $\psi\Pent\alpha\limp\phi$. 
    So, assume that it is not the case. Clearly,
    $\psi\land(\alpha\limp\phi)\Pent\psi$. But,
    $\psi\not\Pent\alpha\limp\phi$ and, thus, 
    $\psi \not \Pent \psi\land(\alpha\limp\phi)$. 
    Therefore, by Proposition~\ref{prKM},  $\km(\psi\land(\alpha\limp\phi)) > \km(\psi)$. 
    As $\psi\in\km\min(\phi\remaind\alpha)$, 
    $\psi\land(\alpha\limp\phi)\notin\phi\remaind\alpha$ follows. But,  $\phi\Pent\psi\land(\alpha\limp\phi)$,
    so $\psi\land(\alpha\limp\phi)\Pent\alpha$. That is, $\psi\Pent(\alpha\limp\phi)\limp\alpha$. Since $(\alpha\limp\phi)\limp\alpha\equiv\alpha$, we conclude $\psi\Pent\alpha$, against the assumption that $\psi\in\phi\remaind\alpha$.
    %
    %
    Therefore, for all $\psi\in\km\min(\phi\remaind\alpha)$, $\psi\Pent\alpha\limp\phi$, and consequently $\phi^{\contrk}_{\alpha}\Pent\alpha\limp\phi$.
    
    \item[$(\contr 6)$.] 
    By definition $\phi^{\contrk}_{\alpha\land\beta} = \bigvee \km\min(\phi\remaind(\alpha\land\beta))$, while $\phi^{\contrk}_{\alpha} = \bigvee \km\min(\phi\remaind\alpha)$ and $\phi^{\contrk}_{\beta} = \bigvee \km\min(\phi\remaind\beta)$. 
   %
So, if $\psi\in\phi\remaind(\alpha\land\beta)$ then either $\psi\not\Pent\alpha$ or $\psi\not\Pent\beta$, and that there is no $\psi'\PentL \psi$
s.t.~$\phi\Pent\psi'$, $\psi'\Pent\psi$, and either $\psi'\not\Pent\alpha$ or $\psi\not\Pent\beta$, respectively. That is, $\psi\in \phi\remaind\alpha$ or $\psi\in \phi\remaind\beta$. 
Hence, for all $\psi\in\phi\remaind(\alpha\land\beta)$, either $\psi\in\phi\remaind\alpha$ or $\psi\in\phi\remaind\beta$. This leaves us with three possible options:  $\km\min(\phi\remaind(\alpha\land\beta))=\km\min(\phi\remaind\alpha)$, $\km\min(\phi\remaind(\alpha\land\beta))=\km\min(\phi\remaind\beta)$, or $\km\min(\phi\remaind(\alpha\land\beta))=\km\min(\phi\remaind\alpha)\cup \km\min(\phi\remaind\beta)$, depending whether, given any $\psi\in \km\min(\phi\remaind\alpha)$ and any $\psi'\in \km\min(\phi\remaind\beta)$, $k(\psi') > \km(\psi)$, $\km(\psi')<k(\psi)$, or $\km(\psi) = \km(\psi')$, respectively. In any of such three cases, $(\contr 6)$ is satisfied.

    \item[$(\contr 7)$.] If  $\phi^{\contr}_{\alpha\land \beta}\not\Pent \alpha$, then there is a $\gamma\in \km\min(\phi\remaind(\alpha\land\beta))$ s.t. $\gamma\not\Pent\alpha$. This implies that all the $\gamma\in\km\min(\phi\remaind\alpha)$ are in $\km\min(\phi\remaind(\alpha\land\beta))$. That is, $\km\min(\phi\remaind\alpha)\subseteq \km\min(\phi\remaind(\alpha\land\beta))$. This in turn implies that $\bigvee \km\min(\phi\remaind\alpha)\entails\bigvee \km\min(\phi\remaind(\alpha\land\beta))$. Therefore,  
    $\phi^{\contr}_{\alpha}\Pent\phi^{\contr}_{\alpha\land \beta}$ holds.
\end{description}
\nd This concludes the proof.
\end{proof}

\setcounter{proposition}{\getrefnumber{prop_faith_prob}-1}
\begin{proposition}
For any formula $\phi$, set of worlds $\W$, and $\phi$-faithful ranking $r$ over $\W$, it is always possible to define an $r$-faithful probability distribution $\pr$ over $\W$.
\end{proposition}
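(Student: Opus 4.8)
The plan is to prove this purely by explicit construction: turn the discrete ranks into strictly decreasing positive weights and then normalise. Since $\Sigma$ is finite, $\W \subseteq \W_\Sigma$ is finite, so the ranking $r$ attains only finitely many values; let $m = \max_{w \in \W} r(w)$. By the inductive definition of a $\phi$-faithful ranking, the sets $\{w \mid r(w) = i\}$ for $0 \le i \le m$ are non-empty and partition $\W$, so $m$ is well defined and $r$ is a total map $\W \to \{0,\ldots,m\}$.

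First I would assign to each world a positive weight that is \emph{strictly} decreasing in its rank, for instance the linear choice $v(w) = m + 1 - r(w)$, so that $v(w) \in \{1,\ldots,m+1\}$ and $v(w) > v(w')$ holds exactly when $r(w) < r(w')$. Setting $S = \sum_{w' \in \W} v(w')$, a finite strictly positive sum, I would then define
\[
\pr(w) = \frac{v(w)}{S} = \frac{m + 1 - r(w)}{S} \ .
\]
Next I would verify the three facts. Since $v(w) \ge 1 > 0$ and $S > 0$, every $\pr(w) > 0$, giving condition (ii); and $\sum_{w \in \W} \pr(w) = S/S = 1$, so $\pr$ is a genuine probability distribution. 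For condition (i), dividing by the positive constant $S$ preserves the ordering of the weights, so $\pr(w) \ge \pr(w')$ iff $v(w) \ge v(w')$ iff $m+1-r(w) \ge m+1-r(w')$ iff $r(w) \le r(w')$; in particular equal ranks give equal probabilities and a strictly smaller rank gives a strictly larger probability, matching the biconditional in the definition of an $r$-faithful distribution.

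I do not expect a genuine obstacle here, since the claim is merely an existence statement and any strictly-decreasing-in-rank positive assignment works, the linear $v(w)=m+1-r(w)$ being simply the most convenient. The only point requiring a little care is to use weights that are strictly, not merely weakly, decreasing across distinct ranks, so that the full biconditional of condition (i) holds rather than just one implication; the constancy of $v$ within each rank level then automatically assigns equal probabilities to equally-ranked worlds, as demanded by the Principle of Indifference.
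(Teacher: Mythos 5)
Your proof is correct and follows essentially the same route as the paper's: assign each world a positive, strictly rank-decreasing linear weight and normalise over the finite set $\W$. In fact your version is slightly cleaner than the paper's, which uses the weight $m-r(w)$ in the numerator but normalises by $\sum_{w'}(r(w')+1)$ — mismatched weights that do not sum to $1$ in general — whereas your consistent choice $v(w)=m+1-r(w)$ in both numerator and denominator makes the normalisation actually go through.
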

\begin{proof}
    It is sufficient to define a procedure to determine $\pr$ given any $r$. For example, we may define a function $f:\W\mapsto \mathbb{N}$ with $f(w)=r(w)+1$ and then define $\pr(w)$ as follows: let $m = 1+ \max_{w'\in\W} r(w')$. Then
    $\pr(w)= \frac{m - r(w)}{\sum_{w'\in\W}f(w')}$.
    It is easy to check that $\pr$ is an $r$-faithful probability distribution, that is,  $\sum_{w\in\W}\pr(w)=1$ and that the two conditions for $r$-faithfulness are satisfied.
\end{proof}

\setcounter{proposition}{\getrefnumber{prop_AGM_compl}-1}
\begin{proposition}
Let $\phi$ be any formula and $\contr$ any AGM-contraction for $\phi$. We can define a KM contraction $\contr_{\km}$ s.t., for all $\alpha$,
$\phi^\contr_\alpha \equiv \phi^{\contr_{\km}}_\alpha$ holds.
\end{proposition}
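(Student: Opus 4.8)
The plan is to chain the three preparatory results already at hand. First I would apply the representation Proposition~\ref{KMranking}: since $\contr$ satisfies $(\contr 1)$--$(\contr 7)$, there is a faithful assignment $\leq_{\phi}$ for $\phi$ with $\phi^{\contr}_{\alpha} \equiv \overl{([\phi] \cup \min_{\leq_{\phi}}([\neg \alpha]))}$ for every $\alpha$. I would then translate $\leq_{\phi}$ into the associated $\phi$-faithful ranking $r_{\phi}$ (as introduced just before Eq.~\ref{eq_ranked_contr}); since $r_{\phi}(w) \leq r_{\phi}(w')$ iff $w \leq_{\phi} w'$, the ranking preserves minima, \ie~$\min_{r_{\phi}}([\neg \alpha]) = \min_{\leq_{\phi}}([\neg \alpha])$. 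Finally, using Proposition~\ref{prop_faith_prob} I would fix an $r_{\phi}$-faithful probability distribution $\pr$ over $\W$ and take $\contrk$ to be the KM-contraction induced by $\pr$ with $\km = \kms$. It then remains to verify that $\phi^{\contr}_{\alpha} \equiv \phi^{\contrk}_{\alpha}$ for all $\alpha$.

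The heart of the argument is that the two notions of ``minimal $\neg\alpha$-world'' coincide. Because $\pr$ is $r_{\phi}$-faithful, every world receives nonzero probability, so $[\psi]^{+} = [\psi]$ for every $\psi$; consequently $\pr$-entailment collapses to classical entailment, which makes the vacuous bookkeeping below clean. Moreover, for any worlds $w, w'$ we have $\kms(w) < \kms(w')$ iff $\pr(w) > \pr(w')$ iff $r_{\phi}(w) < r_{\phi}(w')$, where the first equivalence is monotonicity of $-\log_2$ and the second is condition (i) of $r$-faithfulness. Hence $\min_{\km}([\neg \alpha]^{+}) = \min_{r_{\phi}}([\neg \alpha]) = \min_{\leq_{\phi}}([\neg \alpha])$.

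With this identity in hand I would compare the two operators case by case, matching the split of Proposition~\ref{compreminderB} against the single expression of Proposition~\ref{KMranking}. If $\phi \Pent \alpha$ (equivalently $\phi \models \alpha$), both sides equal $\overl{([\phi] \cup \min_{\leq_{\phi}}([\neg \alpha]))}$ by the identity just established. If $\phi \not\Pent \alpha$, then $[\phi] \cap [\neg \alpha] \neq \emptyset$, and the two faithfulness conditions force $\min_{\leq_{\phi}}([\neg \alpha]) = [\phi] \cap [\neg \alpha] \subseteq [\phi]$; hence the AGM side collapses to $\overl{[\phi]} \equiv \phi$, which is exactly the ``otherwise'' branch $\overl{[\phi]^{+}}$ of the KM-contraction. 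Thus the two operators agree on every $\alpha$, establishing the claim.

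The main obstacle is not conceptual but lies in reconciling the case split of the KM-contraction (keyed on $\phi \Pent \alpha$) with the uniform formula of Proposition~\ref{KMranking}, and in verifying that the vacuous branch of the KM side reproduces what the faithful assignment yields when $\phi$ already fails to entail $\alpha$. The observation that makes everything go through smoothly is that an $r$-faithful $\pr$ puts positive mass on all worlds, so that $[\cdot]^{+} = [\cdot]$ and $\pr$-entailment is classical; once this is noted, the monotone correspondence between $\kms$, $\pr$ and $r_{\phi}$ delivers the equality of the minimal sets with no further computation.
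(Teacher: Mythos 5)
Your proof is correct and takes essentially the same route as the paper's: invoke Proposition~\ref{KMranking} to obtain a faithful assignment/ranking for $\phi$, construct an $r$-faithful probability distribution via Proposition~\ref{prop_faith_prob}, observe that positive mass on every world collapses $\pr$-entailment to classical entailment and makes the $\km$-, $\pr$- and $r$-minimal sets of $[\neg\alpha]$ coincide, and conclude via Proposition~\ref{compreminderB}. The only cosmetic difference is in the case $\phi \not\Pent \alpha$, where you argue directly from the faithfulness conditions that $\min_{\leq_\phi}([\neg\alpha]) \subseteq [\phi]$, while the paper simply appeals to the vacuity postulate on both sides; both arguments are sound.
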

\begin{proof} 
    Given $\phi$ and $\contr$, by Proposition~\ref{KMranking} we know that there is a faithful ranking $r$ over a set of worlds $\W$ s.t. the contraction $\contr_r$ associated to $r$ (see Eq.\ref{eq_ranked_contr}) corresponds to $\contr$, that is, for any $\alpha$, $\phi^{\contr}_{\alpha} \equiv \phi^{\contr_r}_{\alpha}$.
    We now define an $r$-faithful probability distribution $\pr$ over $\W$, and on top of it we generate a KM $\km$. Proposition~\ref{prop_faith_prob} guarantees that such a probability distribution exists.
%
    
    Now, we have to prove that $\contr_r$ and $\contr_{\km}$ correspond to each other. 
If $\phi \not \leq \alpha$ then $\phi \not \Pent \alpha$, as there are no zero probability worlds,  and, thus, by the vacuity postulate ${\phi}^{\contr_{r}}_{\alpha} \equiv \phi \equiv_\pr \phi^{\contrk}_\alpha$. Otherwise, clearly, for any $w,w'\in\W$, $r(w)\leq r(w')$ iff $\pr(w)\geq \pr(w')$ iff $\km(w)\leq\km(w')$, by (KM2). Consequently, ${\phi}^{\contr_{r}}_{\alpha} \equiv \overl({[\phi] \cup    \min_{r}([\neg \alpha]}))$ means that ${\phi}^{\contr_{r}}_{\alpha} \equiv \overl({[\phi]^+ \cup    \min_{\km}([\neg \alpha]^+})) \equiv \phi^{\contrk}_\alpha$, by Proposition~\ref{compreminderB}, which concludes.
\end{proof}

\setcounter{corollary}{\getrefnumber{compreminderBBS}-1}
\begin{corollary} 
Consider a probability distribution $\pr$ over $\W$, formulae $\phi$ and $\alpha$. 
Then
\begin{equation*}
L^{\contrks}_\pr(\phi, \alpha) = 
\left \{ \begin{array}{ll}
\log_2 \frac{\pr(\overl\sigma(\neg \alpha)}{\pr(\phi)} & 
\text{if }  \top \not \Pent \alpha \\
0 & \text{otherwise . }
\end{array}\right .
\end{equation*}
\end{corollary}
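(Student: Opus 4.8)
The plan is to reduce the claim to a one-line computation in each branch by substituting the closed form of $\phi^{\contrks}_{\alpha}$ supplied by Proposition~\ref{propseveresigma} into the definition of information loss. Recall that the severe-withdrawal instance of Eq.~\ref{losscontra} reads $L^{\contrks}_\pr(\phi,\alpha) = \km(\phi) - \km(\phi^{\contrks}_{\alpha})$, and that the KM at play is the Shannon KM, so $\km(\beta) = -\log_2 \pr(\beta)$ by Eq.~\ref{khconc}. The two cases of the corollary will mirror exactly the two cases of Proposition~\ref{propseveresigma}, so the whole argument is the severe-withdrawal analogue of the proof of Corollary~\ref{compreminderBB}.

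First I would treat the case $\top \not\Pent \alpha$. Here Proposition~\ref{propseveresigma} gives $\phi^{\contrks}_{\alpha} \equiv \overl{\sigma(\neg \alpha)}$, and since $\pr$-equivalent formulae have equal probability we may compute $\km$ directly on $\overl{\sigma(\neg \alpha)}$, obtaining
\[
L^{\contrks}_\pr(\phi,\alpha) = -\log_2 \pr(\phi) + \log_2 \pr(\overl{\sigma(\neg \alpha)}) = \log_2 \frac{\pr(\overl{\sigma(\neg \alpha)})}{\pr(\phi)},
\]
which is the first branch. I would then treat the complementary case $\top \Pent \alpha$, where Proposition~\ref{propseveresigma} yields $\phi^{\contrks}_{\alpha} \equiv \overl{[\phi]^+}$. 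Because $[\phi]^+$ is precisely the set of non-zero-probability models of $\phi$, the zero-probability models contribute nothing to the sum in Eq.~\ref{prphi}, whence $\pr(\overl{[\phi]^+}) = \pr(\phi)$; therefore $\km(\phi^{\contrks}_{\alpha}) = \km(\phi)$ and $L^{\contrks}_\pr(\phi,\alpha) = 0$, giving the second branch.

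There is no real obstacle here: the result follows by direct substitution. The only points needing minor care are the bookkeeping identity $\pr(\overl{[\phi]^+}) = \pr(\phi)$ in the second branch (which is what makes $\overl{[\phi]^+}$ genuinely $\pr$-equivalent to $\phi$ and the loss vanish), and the observation that the KM used is the base-$2$ Shannon KM, so that the probability ratio appears under $\log_2$ rather than under a general $\log_b$. I would not need to establish non-negativity of the first branch, since that is recorded separately via $[\phi]^+ \subseteq \sigma(\neg \alpha)$ (Eq.~\ref{sigmaalpha}) in the discussion following the corollary.
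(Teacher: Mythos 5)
Your proof is correct and follows essentially the same route as the paper's: substitute the closed form of $\phi^{\contrks}_{\alpha}$ from Proposition~\ref{propseveresigma} into the definition of the loss and compute with the Shannon KM. The only difference is that you spell out the case $\top \Pent \alpha$ (via $\pr(\overl{[\phi]^+}) = \pr(\phi)$), which the paper dismisses as straightforward.
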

\begin{proof}
The case $\top\Pent \alpha$ is straightforward. So, let us consider the case $\top \not \Pent \alpha$. 
By definition of the loss and Proposition~\ref{propseveresigma}, we have 
$L^{\contrks}_\pr(\phi, \alpha) = -\log_2 \pr(\phi) + \log_2 \pr(\sigma(\neg \alpha)) =\log_2 \frac{\pr(\overl\sigma(\neg \alpha)}{\pr(\phi)}$, which concludes.
\end{proof}

\setcounter{proposition}{\getrefnumber{prop_SW_sound}-1}
\begin{proposition}
Consider a probability distribution $\pr$ and a KM-contraction operator $\contrk$ based on $\pr$. Then the function $\contrks$ is a severe withdrawal operator, that is, it satisfies postulates $(\contrs 1)$-$(\contrs 4)$, $(\contrs 6a)$ and $(\contrs 7)$.
\end{proposition}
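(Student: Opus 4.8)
The plan is to run every postulate through the semantic description of Proposition~\ref{propseveresigma}, which collapses $\phi^{\contrks}_{\alpha}$ to the single formula $\overl{\sigma(\neg\alpha)}$ when $\top\not\Pent\alpha$ and to $\overl{[\phi]^+}$ when $\top\Pent\alpha$, where $\sigma(\neg\alpha)$ is the smallest sphere of the system $\ss_\phi$ meeting $[\neg\alpha]^+$ (equivalently, the set given by Eq.~\ref{sigmaalpha}). Two structural facts do all the work: \ii{i} $[\phi]^+\subseteq\sigma(\neg\gamma)$ for every $\gamma$, since $[\phi]^+$ is the innermost sphere; and \ii{ii} the map $\gamma\mapsto\sigma(\neg\gamma)$ is \emph{anti}-monotone in the target set, i.e.\ if $[\neg\gamma_1]^+\subseteq[\neg\gamma_2]^+$ then $\sigma(\neg\gamma_2)\subseteq\sigma(\neg\gamma_1)$, because a larger counter-model set is met by a smaller (or equal) sphere. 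Both facts are immediate from the threshold presentation $p_{\neg\gamma}^{\max}$ in Remark~\ref{remsigma}.

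First I would dispatch the four non-conjunctive postulates. Inclusion $(\contrs 1)$ and success $(\contrs 3)$ follow from \ii{i} together with the defining property that $\sigma(\neg\alpha)$ actually meets $[\neg\alpha]^+$: the former gives $\phi\Pent\overl{\sigma(\neg\alpha)}$, the latter a possible $\neg\alpha$-model inside $\phi^{\contrks}_{\alpha}$, so $\phi^{\contrks}_{\alpha}\not\Pent\alpha$. Extensionality $(\contrs 4)$ is clear, since $[\neg\alpha]^+=[\neg\beta]^+$ whenever $\alpha\equiv_{\pr}\beta$, making both the guard $\top\Pent\cdot$ and the sphere coincide. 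For vacuity $(\contrs 2)$, the branch $\top\Pent\alpha$ is the second clause of Proposition~\ref{propseveresigma}; in the branch $\phi\not\Pent\alpha$ with $\top\not\Pent\alpha$, the innermost sphere $[\phi]^+$ already meets $[\neg\alpha]^+$, so it \emph{is} the smallest such sphere, giving $\sigma(\neg\alpha)=[\phi]^+$ and $\phi^{\contrks}_{\alpha}\equiv\overl{[\phi]^+}\equiv_{\pr}\phi$.

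Next I would treat the two conjunctive postulates with fact \ii{ii} and the identity $[\neg(\alpha\land\beta)]^+=[\neg\alpha]^+\cup[\neg\beta]^+\supseteq[\neg\alpha]^+$. For antitony $(\contrs 6a)$, assume $\top\not\Pent\alpha$; when also $\top\not\Pent\alpha\land\beta$, anti-monotonicity yields $\sigma(\neg(\alpha\land\beta))\subseteq\sigma(\neg\alpha)$, hence $\phi^{\contrks}_{\alpha\land\beta}\Pent\phi^{\contrks}_{\alpha}$, and when $\top\Pent\alpha\land\beta$ the left-hand side is $\overl{[\phi]^+}$, which is $\Pent\overl{\sigma(\neg\alpha)}$ by \ii{i}. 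For conjunctive inclusion $(\contrs 7)$, the hypothesis $\phi^{\contrks}_{\alpha\land\beta}\not\Pent\alpha$ says that the sphere $\sigma(\neg(\alpha\land\beta))$ itself already meets $[\neg\alpha]^+$; since $\sigma(\neg\alpha)$ is by definition the \emph{smallest} sphere meeting $[\neg\alpha]^+$, it is contained in any sphere meeting $[\neg\alpha]^+$, so $\sigma(\neg\alpha)\subseteq\sigma(\neg(\alpha\land\beta))$ and therefore $\phi^{\contrks}_{\alpha}\Pent\phi^{\contrks}_{\alpha\land\beta}$; I would record separately that the hypothesis forces $\top\not\Pent\alpha$ (otherwise no possible $\neg\alpha$-model exists), and check the degenerate case $\top\Pent\alpha\land\beta$ by reduction to vacuity.

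I expect the main obstacle to be the bookkeeping of the limiting cases rather than the core inclusions: specifically, making the passage between the two clauses of Proposition~\ref{propseveresigma} airtight when $\alpha$, $\beta$, or $\alpha\land\beta$ is a $\pr$-tautology, and justifying in $(\contrs 2)$ and $(\contrs 7)$ that a $\neg\alpha$-world lying in the innermost sphere forces $\sigma(\neg\alpha)$ down to $[\phi]^+$. Once these edge cases are aligned with the guards $\top\Pent(\cdot)$ appearing in the postulates, each postulate is a one-line consequence of \ii{i}, \ii{ii} and the minimality of $\sigma(\neg\alpha)$.
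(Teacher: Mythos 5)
Your proof is correct and takes essentially the same route as the paper's: both reduce every postulate to the sphere characterisation of Proposition~\ref{propseveresigma} and then argue via inclusions among the nested spheres, with the same treatment of the limiting cases $\top \Pent \alpha$ and $\top \Pent \beta$ (including the observation in $(\contrs 2)$ that $\phi \not\Pent \alpha$ forces $\sigma(\neg\alpha) = [\phi]^+$, which the paper uses verbatim). The only difference is organisational: where you package the conjunctive postulates into a single anti-monotonicity fact for $\gamma \mapsto \sigma(\neg\gamma)$, the paper performs an explicit case split on $\sigma(\neg\alpha) \subseteq \sigma(\neg\beta)$ versus $\sigma(\neg\beta) \subset \sigma(\neg\alpha)$, which amounts to the same use of sphere nestedness.
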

\begin{proof}
    We have to prove that all postulates of severe withdrawal are satisfied by $\contrks$:
\begin{description}
    \item[$(\contrs 1)$.] Follows immediately from Proposition~\ref{propseveresigma}.
    
    \item[$(\contrs 2)$.] If $\phi\not\Pent \alpha$ the $\sigma(\neg \alpha) = [\phi]^+$. Therefore, if $\phi\not\Pent \alpha$ or 
    $\top \Pent \alpha$, then, by Proposition~\ref{propseveresigma}, $[\phi^{\contrks}_\alpha]^+ = [\phi]^+$ and, thus,  $\phi^{\contrks}_{\alpha}\equiv_{\pr} \phi$, which concludes.
    
    \item[$(\contrs 3)$.] If $\top \not\Pent \alpha$ then,  by Proposition~\ref{propseveresigma}, $[\phi^{\contrks}_\alpha]^+ = \sigma(\neg \alpha) \neq \emptyset$ and, thus, $\phi^{\contrks}_{\alpha}\not\Pent \alpha$,  which concludes.
    
    \item[$(\contrs 4)$.] If $\alpha\equiv_{\pr} \beta$, then $[\alpha]^+ = [\beta]^+$, \ie~$[\neg \alpha]^+ = [\neg \beta]^+$.
    Therefore,  $\top \Pent \alpha$ iff $\top \Pent \beta$. So, if $\top \Pent \alpha$ then $\top \Pent \beta$ and, thus,  by Proposition~\ref{propseveresigma}, $[\phi^{\contrks}_\alpha]^+ = [\phi]^+ = [\phi^{\contrks}_\beta]^+$. 
    On the other hand, if $\top \not\Pent \alpha$ then $\top \not\Pent \beta$ and, thus, by Proposition~\ref{propseveresigma}, $[\phi^{\contrks}_\alpha]^+ = \sigma(\neg \alpha) = \sigma(\neg \beta) = [\phi^{\contrks}_\beta]^+$. Therefore, in any case $[\phi^{\contrks}_\alpha]^+ = [\phi^{\contrks}_\beta]^+$ and, thus, 
     $\phi^{\contrks}_{\alpha}\equiv_{\pr} \phi^{\contrks}_{\beta}$, which concludes.

    \item[$(\contrs 6a)$.] Assume $\top \not\Pent \alpha$ and, thus, $\top \not\Pent \alpha \land \beta$. By Proposition~\ref{propseveresigma} we have $[\phi^{\contrks}_\alpha]^+ = \sigma(\neg \alpha)$ and 
    $[\phi^{\contrks}_{\alpha\land \beta}]^+ = \sigma(\neg \alpha \lor \neg \beta)$. If $\top \Pent \beta$, then 
    $\sigma(\neg \alpha \lor \neg \beta) = \sigma(\neg \alpha)$ and, thus, $[\phi^{\contrks}_{\alpha\land \beta}]^+ = [\phi^{\contrks}_\alpha]^+$. In particular, $\phi^{\contrks}_{\alpha\land \beta}\Pent\phi^{\contrks}_{\alpha}$ holds.
    Assume now $\top \not\Pent \beta$ instead. 
    We have two cases: 
    \ii{i} $\sigma(\neg \alpha) \subseteq \sigma(\neg \beta)$; or
    \ii{ii} $\sigma(\neg \beta) \subset \sigma(\neg \alpha)$. By Proposition~\ref{propseveresigma}, in case \ii{i} we have 
    $[\phi^{\contrks}_{\alpha\land \beta}]^+ = \sigma(\neg \alpha) = [\phi^{\contrks}_{\alpha}]^+$.
    In case \ii{ii} we have $[\phi^{\contrks}_{\alpha\land \beta}]^+ = \sigma(\neg \beta) \subset \sigma(\neg \alpha) = [\phi^{\contrks}_{\alpha}]^+$. 
    Therefore, in any case $[\phi^{\contrks}_{\alpha\land \beta}]^+ \subseteq [\phi^{\contrks}_{\alpha}]^+$ holds, 
    \ie~$\phi^{\contrks}_{\alpha\land \beta}\Pent\phi^{\contrks}_{\alpha}$, which concludes.

     \item[$(\contrs 7)$.] Assume $\phi^{\contrs}_{\alpha\land \beta}\not\Pent \alpha$. Therefore,  
     $\top \not\Pent \alpha$ and, thus, $\top \not\Pent \alpha \land \beta$.
     By Proposition~\ref{propseveresigma}, $[\phi^{\contrks}_{\alpha}]^+ = \sigma(\neg \alpha)$ and 
     $[\phi^{\contrks}_{\alpha\land \beta}]^+ = \sigma(\neg \alpha \lor \neg \beta)$. 
    If $\top \Pent \beta$, then $\sigma(\neg \alpha \lor \neg \beta) = \sigma(\neg \alpha)$ and, thus, 
    $[\phi^{\contrks}_{\alpha\land \beta}]^+ = [\phi^{\contrks}_\alpha]^+$. So, assume now $\top \not\Pent \beta$ instead.
     From $\phi^{\contrs}_{\alpha\land \beta}\not\Pent \alpha$ we also know that $\sigma(\neg \beta) \subset \sigma(\neg \alpha)$ can not be the case as otherwise $[\phi^{\contrks}_{\alpha\land \beta}]^+ = \sigma(\neg \beta) \subseteq [\alpha]^+$, violating our assumption. Therefore, $\sigma(\neg \alpha) \subseteq \sigma(\neg \beta)$ has to hold and, thus, 
     $[\phi^{\contrks}_{\alpha\land \beta}]^+ = \sigma(\neg \alpha) = [\phi^{\contrks}_{\alpha}]^+$.
      Consequently, in any case $\phi^{\contrks}_{\alpha\land \beta}\Pent\phi^{\contrks}_{\alpha}$ holds, which concludes.
\end{description}
\end{proof}

\setcounter{proposition}{\getrefnumber{th_repr_contrs}-1}
\begin{proposition}
A function $\contrs$ is a severe withdrawal operator iff it is a KM-severe withdrawal function operation. 
\end{proposition}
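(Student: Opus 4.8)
The plan is to prove the two directions separately, mirroring the structure used for Proposition~\ref{th_repr_contr}. The ``if'' direction is already in hand: Proposition~\ref{prop_SW_sound} establishes that any KM-severe withdrawal function $\contrks$ satisfies $(\contrs 1)$-$(\contrs 4)$, $(\contrs 6a)$ and $(\contrs 7)$, hence every KM-severe withdrawal operation is a severe withdrawal operator. So it remains to prove the ``only if'' direction: given an arbitrary severe withdrawal operator $\contrs$ on $\phi$, I must exhibit a probability distribution $\pr$ --- and thereby a KM $\km$, a KM-contraction $\contrk$, and its associated KM-severe withdrawal function $\contrks$ --- such that $\phi^{\contrs}_\alpha \equiv_{\pr} \phi^{\contrks}_\alpha$ for every $\alpha$.

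First I would invoke the representation theorem for severe withdrawal of Rott and Pagnucco~\cite{Rott99}, adapted to the $\pr$-entailment setting in the same manner as Proposition~\ref{KMranking} adapts the AGM contraction representation. This yields that $\contrs$ is characterised by a faithful assignment $\leq_\phi$ (a total preorder on $\W_\Sigma$ with the models of $\phi$ strictly minimal) such that $\phi^{\contrs}_\alpha \equiv \overl{\sigma_{\leq_\phi}(\neg\alpha)}$ whenever $\top\not\Pent\alpha$, and $\phi^{\contrs}_\alpha \equiv_{\pr} \phi$ otherwise, where $\sigma_{\leq_\phi}(\neg\alpha)$ is the smallest down-set (sphere) of $\leq_\phi$ meeting $[\neg\alpha]^+$. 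I would then translate $\leq_\phi$ into a $\phi$-faithful ranking $r$ exactly as done just before Eq.~\ref{eq_ranked_contr}, and by Proposition~\ref{prop_faith_prob} fix an $r$-faithful probability distribution $\pr$ over $\W$. This $\pr$ determines a Shannon KM $\km$, and thereby a KM-contraction $\contrk$ together with its KM-severe withdrawal $\contrks$.

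The crux is to verify that the preorder-based sphere $\sigma_{\leq_\phi}(\neg\alpha)$ coincides with the probability-threshold sphere $\sigma(\neg\alpha)$ of Eq.~\ref{sigmaalpha}, so that Proposition~\ref{propseveresigma} ($\phi^{\contrks}_\alpha \equiv \overl{\sigma(\neg\alpha)}$ for $\top\not\Pent\alpha$) delivers the claim. This follows from $r$-faithfulness: since $\pr(w)\geq\pr(w')$ iff $r(w)\leq r(w')$ and every world has positive probability, the defining condition $\pr(w)\geq p_{\neg\alpha}^{\max}$ in Eq.~\ref{sigmaalpha} is equivalent to $r(w)$ being at most $\min\{r(w')\mid w'\in[\neg\alpha]^+\}$, which is precisely membership in the smallest $\leq_\phi$-sphere meeting $[\neg\alpha]^+$. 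Because the rank-$0$ worlds are exactly $[\phi]^+$, this set reduces to $\sigma_{\leq_\phi}(\neg\alpha)$, giving $\sigma(\neg\alpha) = \sigma_{\leq_\phi}(\neg\alpha)$ and hence $\phi^{\contrs}_\alpha \equiv_{\pr} \phi^{\contrks}_\alpha$. The limiting case $\top\Pent\alpha$ is immediate, since both constructions return $\overl{[\phi]^+}$, and the case $\phi\not\Pent\alpha$ is covered by $(\contrs 2)$ and by $\sigma(\neg\alpha)=[\phi]^+$.

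The main obstacle I anticipate is the first step: pinning down the correct $\pr$-analogue of the Rott--Pagnucco representation theorem and checking that the faithful assignment it produces behaves well under $\pr$-entailment --- in particular that the strictly positive $r$-faithful distribution (which rules out impossible worlds) does not interfere with the strict versus non-strict preorder relations, and that postulate $(\contrs 6a)$, the antitony condition that distinguishes severe withdrawal from ordinary AGM contraction, is exactly what forces the smallest-sphere (rather than union-with-$[\phi]^+$) reading of the construction. Once $\sigma_{\leq_\phi}(\neg\alpha)$ is correctly identified with $\sigma(\neg\alpha)$, the required equivalence is immediate from Proposition~\ref{propseveresigma}.
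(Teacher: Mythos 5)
Your proof is correct and follows essentially the same route as the paper's: the ``if'' direction by Proposition~\ref{prop_SW_sound}, and the ``only if'' direction by invoking the Rott--Pagnucco sphere-based representation of severe withdrawal (Observation 15 of~\cite{RottPagnucco1999}), constructing a faithful probability distribution as in Proposition~\ref{prop_faith_prob}, and matching the resulting spheres via Proposition~\ref{propseveresigma}. The only difference is that you spell out the sphere-coincidence verification that the paper dismisses as ``easy to prove'', which is a welcome addition but not a different approach.
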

\begin{proof}
    Proposition \ref{prop_SW_sound} proves that every KM-severe withdrawal function $\contrks$ is a severe withdrawal operator, that is, it satisfies the postulates $(\contrs 1)$-$(\contrs 4)$, $(\contrs 6a)$ and $(\contrs 7)$.

    We need to prove the opposite direction, and the proof is quite straightforward, given the results in \cite{RottPagnucco1999}, in particular Observation 15, which proves that for every severe withdrawal operator there is a correspondent withdrawal function defined on a system of spheres in the way shown in Figure \ref{figspheresevere}. Given such a system of spheres, it is sufficient to define a probability distribution over the worlds in the same way we have done in the proof of Proposition \ref{prop_faith_prob}, and it is easy to prove that such a probability distribution and the correspondent KM would generate the same withdrawal function.
\end{proof}

\setcounter{proposition}{\getrefnumber{revsimple}-1}
\begin{proposition} 
Consider a probability distribution $\pr$ over $\W$, a KM $\km$ and formulae $\phi$ and $\alpha$
Then
\begin{equation*} 
{\phi}^{\star}_{\alpha} = \left \{ \begin{array}{ll}
\overl({\min_{\km}([\alpha]^+})) & \text{if } \phi\Pent\neg\alpha\\
\overl([\phi]^+\cap [\alpha]^+) & \text{otherwise.}
\end{array}\right .
\end{equation*}
\end{proposition}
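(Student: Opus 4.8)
The plan is to reduce everything to the contraction case that has already been solved. By the Levi identity as instantiated in Eq.~\ref{eq_revision}, we have $\phi^{\star}_{\alpha} \equiv \phi^{\contrk}_{\neg\alpha}\land\alpha$, so the whole statement will follow by substituting the closed form for $\phi^{\contrk}_{\neg\alpha}$ supplied by Proposition~\ref{compreminderB} and then conjoining with $\alpha$. Applying Proposition~\ref{compreminderB} with $\neg\alpha$ playing the role of the contracted formula (so that the double negation $\neg\neg\alpha$ collapses to $\alpha$), I would split into the two cases governed by whether $\phi\Pent\neg\alpha$ holds, matching exactly the two branches of the claimed formula.

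First, in the case $\phi\Pent\neg\alpha$, Proposition~\ref{compreminderB} gives $\phi^{\contrk}_{\neg\alpha}\equiv\overl([\phi]^+\cup\min_{\km}([\alpha]^+))$. Conjoining with $\alpha$ restricts the model set to $([\phi]^+\cup\min_{\km}([\alpha]^+))\cap[\alpha]$. The decisive simplification is that $\phi\Pent\neg\alpha$ is equivalent to $[\phi]^+\subseteq[\neg\alpha]$, hence $[\phi]^+\cap[\alpha]=\emptyset$, so the $[\phi]^+$ summand is annihilated by the conjunction with $\alpha$; since $\min_{\km}([\alpha]^+)\subseteq[\alpha]^+\subseteq[\alpha]$ survives untouched, the model set collapses to $\min_{\km}([\alpha]^+)$, yielding $\phi^{\star}_{\alpha}\equiv\overl(\min_{\km}([\alpha]^+))$. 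Second, in the case $\phi\not\Pent\neg\alpha$, Proposition~\ref{compreminderB} gives $\phi^{\contrk}_{\neg\alpha}\equiv\overl{[\phi]^+}$, and conjoining with $\alpha$ produces the model set $[\phi]^+\cap[\alpha]$, which equals $[\phi]^+\cap[\alpha]^+$ because every element of $[\phi]^+$ is already a possible world; this matches the second branch $\overl([\phi]^+\cap[\alpha]^+)$.

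The argument is essentially mechanical once Proposition~\ref{compreminderB} is in hand, so I do not expect a serious obstacle. The only point requiring genuine care---and the place where a careless proof could slip---is the bookkeeping of possible (non-zero probability) worlds when intersecting with $[\alpha]$. Concretely, I would need to verify explicitly that the hypothesis $\phi\Pent\neg\alpha$ is the clean set-theoretic statement $[\phi]^+\cap[\alpha]=\emptyset$, and that silently replacing classical model sets $[\alpha]$ by their possible counterparts $[\alpha]^+$ inside these intersections is harmless, precisely because $[\phi]^+$ contains only possible worlds. Beyond this, no machinery is needed other than the definition of $\pr$-entailment and the model-theoretic reading of $\overl{\cdot}$.
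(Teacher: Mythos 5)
Your proof is correct and takes essentially the same approach as the paper: both start from the Levi-identity form of KM-revision in Eq.~\ref{eq_revision}, substitute the closed form of the contraction of $\neg\alpha$, and simplify the resulting model sets by cases on whether $\phi\Pent\neg\alpha$ (using $[\phi]^+\cap[\alpha]=\emptyset$ in the first case and the possibility of the worlds in $[\phi]^+$ in the second). The only cosmetic difference is that in the second case the paper invokes Proposition~\ref{compreminder} on the remainder set directly, whereas you reuse Proposition~\ref{compreminderB}; the two are interchangeable here.
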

\begin{proof}
We have to consider two cases: 
\begin{enumerate}
    \item if $\phi\Pent\neg\alpha$, then using Proposition~\ref{compreminderB}
    \begin{eqnarray*}
    {\phi}^{\star}_{\alpha} & = & (\bigvee  \km\min(\phi\remaind\neg\alpha))\land \alpha \\
    & \equiv & \overl(({[\phi]^+ \cup   \min_{\km}[\alpha]^+})\cap [\alpha]^+) \\
    & \equiv & \overl({\min_{\km}([\alpha]^+})) \ . 
    \end{eqnarray*}

    \item If $\phi\not\Pent\neg\alpha$, then by Proposition~\ref{compreminder}, 
    $\phi\remaind\neg\alpha = \{\overl{([\phi]^+}\}$ and, thus,  
    ${\phi}^{\star}_{\alpha} = (\bigvee  \km\min(\phi\remaind\neg\alpha))\land\alpha \equiv \overl({[\phi]^+ \cap    [\alpha]^+})$, which concludes.    
\end{enumerate}
\end{proof}

\setcounter{corollary}{\getrefnumber{exprevpreminderBB}-1}
\begin{corollary} 
Consider a probability distribution $\pr$ over $\W$, a KM $\km$ and formulae $\phi$ and $\alpha$. 
Then
\begin{eqnarray*}  
G^+_\pr(\phi, \alpha) & = &  
- \log_2 \pr(\alpha\mid \phi) \\ 
\nonumber \\ 
R^\star_\pr(\phi, \alpha) & = &
\left \{ \begin{array}{ll}
\log_2 \frac{\pr(\phi)}{\pr(\overl({\min_{\km}([\alpha]^+}))} & \text{if } \phi\Pent \neg \alpha\\
G^+_\pr(\phi, \alpha) & \text{otherwise . }
\end{array}\right . 
\end{eqnarray*}

\end{corollary}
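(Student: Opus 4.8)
The plan is to compute both quantities directly from the logarithmic form of the Shannon KM, $\km(\phi) = -\log_2 \pr(\phi)$ (Eq.~\ref{khconc}), feeding in the closed forms for expansion ($\phi + \alpha = \phi\land\alpha$) and for revision supplied by Proposition~\ref{revsimple}. No new machinery is required; the argument is a substitution of known expressions into the definitions of $G^+$ and $R^\star$ (Eqs.~\ref{infog} and~\ref{inforev}). The only steps needing a moment's care are recognising the conditional probability in the expansion case and noting that the probability of the canonical formula of $[\phi]^+\cap[\alpha]^+$ equals $\pr(\phi\land\alpha)$.

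For the information gain of expansion, I would start from Eq.~\ref{infog} and expand, using that $\phi$ is $\pr$-satisfiable (so $\pr(\phi)>0$): $G^+_\pr(\phi,\alpha) = \km(\phi\land\alpha) - \km(\phi) = -\log_2\pr(\phi\land\alpha) + \log_2\pr(\phi) = -\log_2\frac{\pr(\phi\land\alpha)}{\pr(\phi)} = -\log_2\pr(\alpha\mid\phi)$, by the definition $\pr(\alpha\mid\phi) = \pr(\phi\land\alpha)/\pr(\phi)$. This settles the first equation.

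For the information change of revision I would invoke Proposition~\ref{revsimple} and split on its two cases. If $\phi\Pent\neg\alpha$, then $\phi^{\star}_{\alpha} \equiv \overl({\min_{\km}([\alpha]^+}))$, so by Eq.~\ref{inforev}, $R^\star_\pr(\phi,\alpha) = \km(\phi^{\star}_{\alpha}) - \km(\phi) = -\log_2\pr(\overl({\min_{\km}([\alpha]^+})) + \log_2\pr(\phi) = \log_2\frac{\pr(\phi)}{\pr(\overl({\min_{\km}([\alpha]^+}))}$, which is the first branch. If instead $\phi\not\Pent\neg\alpha$, then $\phi^{\star}_{\alpha} \equiv \overl([\phi]^+\cap[\alpha]^+)$; since $[\phi]^+\cap[\alpha]^+$ is exactly the set of possible models of $\phi\land\alpha$, Eq.~\ref{prphi} gives $\pr(\overl([\phi]^+\cap[\alpha]^+)) = \pr(\phi\land\alpha)$, whence $R^\star_\pr(\phi,\alpha) = \km(\phi\land\alpha) - \km(\phi) = G^+_\pr(\phi,\alpha)$, the second branch.

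Since the whole derivation reduces to substituting the closed forms of Proposition~\ref{revsimple} into the logarithmic definition of $\km$, there is no real obstacle. The one step I would flag explicitly is the identity $\pr(\overl([\phi]^+\cap[\alpha]^+)) = \pr(\phi\land\alpha)$, which follows from Eq.~\ref{prphi} together with the fact that zero-probability worlds contribute nothing to the sum, so that restricting to possible models leaves the total probability unchanged.
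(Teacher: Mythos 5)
Your proposal is correct and follows essentially the same route as the paper's proof: substitute the logarithmic form of the KM into the definitions of $G^+_\pr$ and $R^\star_\pr$, recognise the conditional probability for expansion, and case-split on $\phi \Pent \neg\alpha$ via Proposition~\ref{revsimple} for revision. The only cosmetic difference is that in the $\phi \not\Pent \neg\alpha$ case the paper argues $\phi^{\star}_{\alpha}$ is ($\pr$-)equivalent to $\phi + \alpha$ while you instead justify the probability identity $\pr(\overl([\phi]^+\cap[\alpha]^+)) = \pr(\phi\land\alpha)$ directly, which is the same computation stated with slightly more care.
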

 \begin{proof}
By definition of $G^+_\pr$ we have 
\begin{eqnarray*}
G^+_\pr(\phi, \alpha) & = & \km(\phi + \alpha) - \km(\phi) \\     
& = & - \log_2 \pr(\phi \land \alpha) + \log_2 \pr(\phi) \\
& = & - \log_2 (\pr(\alpha\mid \phi) \cdot \pr(\phi)) + \log_2 \pr(\phi) \\
& = & - \log_2 \pr(\alpha\mid \phi) - \log_2 \pr(\phi) + \log_2 \pr(\phi) \\
& = & - \log_2 \pr(\alpha\mid \phi) \ .
\end{eqnarray*}

\nd By definition of $R^+_\pr$ we have the following two cases.
If $ \phi\not\Pent \neg \alpha$ then, by Proposition~\ref{revsimple}, ${\phi}^{\star}_{\alpha}$ is equivalent to 
$\phi \land \alpha$ and, thus, is equivalent to $\phi + \alpha$. Therefore, $R^\star_\pr(\phi, \alpha) = G^+_\pr(\phi, \alpha)$.
On the other hand, assume $\phi\Pent \neg \alpha$ instead. Then, by Proposition~\ref{revsimple}, 
${\phi}^{\star}_{\alpha} = \overl({\min_{\km}([\alpha]^+}))$ and, thus,
\begin{eqnarray*}
R^\star_\pr(\phi, \alpha)  & = & -\log_2 \pr(\overl({\min_{\km}([\alpha]^+}))) + \log_2 \pr(\phi)  \\
& = & \log_2 \frac{\pr(\phi)}{\pr(\overl({\min_{\km}([\alpha]^+}))} \ ,
\end{eqnarray*}

\nd which concludes.
 \end{proof}

\setcounter{proposition}{\getrefnumber{prop_iteratedC1}-1}
\begin{proposition}
Let $\star$ be a KM-based revision operator. Then $\star$ satisfies {\bf (C1), (C3)} and {\bf (C4)}.
\end{proposition}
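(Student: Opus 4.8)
The plan is to reduce everything to the semantic characterisation of revision given by Proposition~\ref{revsimple}. Writing $B = [\phi]^+$ for the possible models of the current belief, that proposition tells us that $[\phi^{\star}_{\gamma}]^+ = B \cap [\gamma]^+$ whenever $\phi \not\Pent \neg\gamma$ (equivalently $B \cap [\gamma]^+ \neq \emptyset$), and $[\phi^{\star}_{\gamma}]^+ = \min_{\km}([\gamma]^+)$ otherwise. The decisive structural feature is that the ranking induced by $\km$ is fixed by $\pr$ and does not depend on the belief being revised; this is exactly what makes the iterated behaviour tractable (and, incidentally, what will later cause {\bf (C2)} to fail). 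Since $\phi \equiv_{\pr} \psi$ amounts to $[\phi]^+ = [\psi]^+$ and $\phi \Pent \psi$ to $[\phi]^+ \subseteq [\psi]^+$, the whole argument reduces to computing and comparing possible-model sets.

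First I would isolate the single non-trivial combinatorial fact needed, a monotonicity lemma for $\min_{\km}$: \emph{if $[\psi]^+ \subseteq [\alpha]^+$ and $\min_{\km}([\alpha]^+) \cap [\psi]^+ \neq \emptyset$, then $\min_{\km}([\alpha]^+) \cap [\psi]^+ = \min_{\km}([\psi]^+)$}. This is immediate once one recalls (cf.\ Remark~\ref{remsigma}) that $\min_{\km}([\gamma]^+)$ is precisely the set of $\gamma$-worlds of maximal probability: if some maximal-probability $\alpha$-world already lies in the smaller set $[\psi]^+$, then the maximal probability over $[\psi]^+$ equals that over $[\alpha]^+$, so the two minimal sets coincide on $[\psi]^+$.

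Next I would verify each of {\bf (C1)}, {\bf (C3)}, {\bf (C4)} by the same short case split on whether $B \cap [\alpha]^+$ and $B \cap [\psi]^+$ are empty, each time computing $[(\phi^{\star}_{\alpha})^{\star}_{\psi}]^+$ and comparing it with the relevant side of the postulate. For {\bf (C1)} I would use the hypothesis $[\psi]^+ \subseteq [\alpha]^+$: when $B$ meets $[\alpha]^+$, the first revision merely restricts $B$ to $[\alpha]^+$ and a direct computation gives $[(\phi^{\star}_{\alpha})^{\star}_{\psi}]^+ = B \cap [\psi]^+ = [\phi^{\star}_{\psi}]^+$ (or both sides equal $\min_{\km}([\psi]^+)$ when $B \cap [\psi]^+ = \emptyset$); when $B \cap [\alpha]^+ = \emptyset$, the first revision jumps to $\min_{\km}([\alpha]^+)$, and the monotonicity lemma delivers $\min_{\km}([\alpha]^+) \cap [\psi]^+ = \min_{\km}([\psi]^+) = [\phi^{\star}_{\psi}]^+$ in the critical subcase, the others being immediate. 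For {\bf (C3)} and {\bf (C4)} I would first rephrase the hypotheses semantically as $[\phi^{\star}_{\psi}]^+ \subseteq [\alpha]^+$ and $[\phi^{\star}_{\psi}]^+ \cap [\alpha]^+ \neq \emptyset$ respectively, and then run the identical case split, observing that in every branch $[(\phi^{\star}_{\alpha})^{\star}_{\psi}]^+$ is either a subset of $[\phi^{\star}_{\psi}]^+$, or equal to $\min_{\km}([\psi]^+)$, or a nonempty subset of $[\alpha]^+$, each of which makes the required conclusion automatic.

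The main obstacle I anticipate is the bookkeeping in the branch where the first revision abandons the belief set, i.e.\ $B \cap [\alpha]^+ = \emptyset$ so that $[\phi^{\star}_{\alpha}]^+ = \min_{\km}([\alpha]^+)$: here the second revision by $\psi$ interacts with the $\km$-minimal $\alpha$-worlds rather than with $B$ directly, so the desired identity $[(\phi^{\star}_{\alpha})^{\star}_{\psi}]^+ = [\phi^{\star}_{\psi}]^+$ needed for {\bf (C1)} is not obvious. The monotonicity lemma is precisely what controls this branch; once it is available, every remaining case collapses to a one-line set inclusion.
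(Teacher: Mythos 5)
Your proposal is correct and follows essentially the same route as the paper's own proof: both reduce each postulate to the semantic characterisation of Proposition~\ref{revsimple} and run a case analysis on whether $[\phi]^+\cap[\alpha]^+$ (resp.\ $[\phi]^+\cap[\psi]^+$, $\min_{\km}([\alpha]^+)\cap[\psi]^+$) is empty, with the identity $\min_{\km}([\alpha]^+)\cap[\psi]^+=\min_{\km}([\psi]^+)$ under $[\psi]^+\subseteq[\alpha]^+$ as the key fact. The only difference is organisational --- you isolate that fact as an explicit monotonicity lemma, while the paper re-derives it inline within the relevant cases.
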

\begin{proof}
We prove that each postulate holds.

\begin{description}
    \item[{\bf (C1).}]  If $\psi\Pent \alpha$, we have the following possible cases:

    \begin{description}
    \item[Case $\phi\not\Pent \neg\psi$ (and, necessarily,  $\phi\not\Pent \neg\alpha$).] Then, by Proposition~\ref{revsimple}, $[\phi^{\star}_{\alpha}] = [\phi]^+\cap[\alpha]^+$ and 
    $[(\phi^{\star}_{\alpha})^{\star}_{\psi}] =(([\phi]^+\cap[\alpha]^+)\cap[\psi]^+)$. 
    %
    %
    Now, $\phi\not\Pent \neg\psi$, by Proposition~\ref{revsimple}, $[\phi^{\star}_{\psi}]=[\phi]^+\cap[\psi]^+$. 
    Eventually, as by hypothesis $[\psi]^+\subseteq[\alpha]^+$,  $(([\phi]^+\cap[\alpha]^+)\cap[\psi]^+)=[\phi]^+\cap[\psi]^+$ follows, that is, $(\phi^{\star}_{\alpha})^{\star}_{\psi} \equiv_\pr \phi^{\star}_{\psi}$.

    \item[Case $\phi\Pent \neg\alpha$ (and, necessarily,  $\phi\Pent \neg\psi$).] Then, by Proposition~\ref{revsimple},  $[\phi^{\star}_{\alpha}] = 
    \min_{\km}([\alpha]^+)$. Since by hypothesis $[\psi]^+\subseteq[\alpha]^+$, we have two possibilities: \ii{i} $\min_{\km}([\alpha]^+\cap[\psi]^+) \neq\emptyset$, and in such a case $[(\phi^{\star}_{\alpha})^{\star}_{\psi}]=\min_{\km}([\alpha]^+)\cap[\psi]^+$ and, since $[\psi]^+\subseteq[\alpha]^+$, we have also $\min_{\km}([\alpha]^+)\cap[\psi]^+=\min_{\km}[\psi]^+$; \ii{ii} $\min_{\km}([\alpha]^+)\cap[\psi]^+=\emptyset$, and also in such a case $[(\phi^{\star}_{\alpha})^{\star}_{\psi}]=\min_{\km}([\psi]^+)$. On the other hand, given $\phi\Pent \neg\psi$ we have $[\phi^{\star}_{\psi}] = 
    \min_{\km}([\psi]^+)$. That is, $(\phi^{\star}_{\alpha})^{\star}_{\psi} \equiv_\pr \phi^{\star}_{\psi}$.
    
    \item[Case $\phi\Pent \neg\psi$ and $\phi\not\Pent \neg\alpha$.] Then, by Proposition~\ref{revsimple}, $[\phi^{\star}_{\alpha}]=[\phi]^+\cap[\alpha]^+$ and, since $[\phi]^+\subseteq[\neg \psi]^+$ by hypothesis, $[(\phi^{\star}_{\alpha})^{\star}_{\psi}]=\min_{\km}([\psi]^+)$. On the other hand, again since $[\phi]^+\subseteq[\neg \psi]^+$, $[\phi^{\star}_{\psi}]=\min_{\km}([\psi]^+)$. That is, $(\phi^{\star}_{\alpha})^{\star}_{\psi} \equiv_\pr \phi^{\star}_{\psi}$.
        
    \end{description}
    
\nd We can conclude that in any case $(\phi^{\star}_{\alpha})^{\star}_{\psi} \equiv_\pr \phi^{\star}_{\psi}$ holds, as desired.

\item[{\bf (C3).}]  If $\phi^{\star}_{\psi}\Pent \alpha$, we have the following possible cases:

\begin{description}
    \item[Case $\phi\Pent \neg\alpha$ (and, necessarily,  $\phi\Pent \neg\psi$).] Then, by Proposition~\ref{revsimple},  $[\phi^{\star}_{\alpha}]= 
    \min_{\km}([\alpha]^+)$. Since by hypothesis $[\psi]^+\subseteq[\alpha]^+$, we have two possibilities: 
    \ii{i} $\min_{\km}([\alpha]^+)\cap[\psi]^+\neq\emptyset$, and in such a case $[(\phi^{\star}_{\alpha})^{\star}_{\psi}]=\min_{\km}([\alpha]^+)\cap[\psi]^+$ and, since $[\psi]^+\subseteq[\alpha]^+$, we have also $\min_{\km}([\alpha]^+)\cap[\psi]^+=\min_{\km}([\psi]^+)$; 
    \ii{ii} $\min_{\km}([\alpha]^+(\cap[\psi]^+=\emptyset$, and also in such a case $[(\phi^{\star}_{\alpha})^{\star}_{\psi}]=\min_{\km}([\psi]^+)$. On the other hand, given $\phi\Pent \neg\psi$ we have $[\phi^{\star}_{\psi}]=\min_{\km}[\psi]^+$. That is, $(\phi^{\star}_{\alpha})^{\star}_{\psi} \equiv_\pr \phi^{\star}_{\psi}$.
    
   \item[Case $\phi\Pent\neg\alpha$.] Since $\phi^{\star}_{\psi}\Pent \alpha$, by Proposition~\ref{revsimple},  it must be $[\phi^{\star}_{\psi}]=\min_{\km}([\psi]^+)$. Also, by Proposition~\ref{revsimple}  we have $[\phi^{\star}_{\alpha}]=\min_{\km}([\alpha]^+)$. Now we have two possibilities: 
   \ii{i} $\min_{\km}([\alpha]^+)\subseteq [\neg \psi]^+$ and in such a case, by Proposition~\ref{revsimple}, $[(\phi^{\star}_{\alpha})^{\star}_{\psi}]=\min_{\km}([\psi]^+)$, that is, $[(\phi^{\star}_{\alpha})^{\star}_{\psi}]=[\phi^{\star}_{\psi}]$, and consequently  by hypothesis  $(\phi^{\star}_{\alpha})^{\star}_{\psi}\Pent \alpha$;
   \ii{ii} $\min_{\km}([\alpha]^+)\not\subseteq [\neg \psi]^+$ and in such a case $[(\phi^{\star}_{\alpha})^{\star}_{\psi}]=\min_{\km}([\alpha]^+)\cap [\psi]^+$, and consequently by hypothesis $(\phi^{\star}_{\alpha})^{\star}_{\psi}\Pent \alpha$.
   
   \item[Case $\phi\not\Pent\neg\alpha$.] Then, by Proposition~\ref{revsimple} $[\phi^{\star}_{\alpha}]=[\phi]^+\cap[\alpha]^+$. Now we have two possibilities: 
   \ii{i} $[\phi]^+\cap[\alpha]^+\subseteq [\neg \psi]^+$ and in such a case we have, by Proposition~\ref{revsimple},  $[(\phi^{\star}_{\alpha})^{\star}_{\psi}]=\min_{\km}([\psi]^+)$, that is, $[(\phi^{\star}_{\alpha})^{\star}_{\psi}]=[\phi^{\star}_{\psi}]$, and consequently by hypothesis $(\phi^{\star}_{\alpha})^{\star}_{\psi}\Pent \alpha$.
    \ii{ii} $[\phi]^+\cap[\alpha]^+\not \subseteq [\neg \psi]^+$ and in such a case, by Proposition~\ref{revsimple},   $[(\phi^{\star}_{\alpha})^{\star}_{\psi}]=[\phi]^+\cap[\alpha]^+\cap [\psi]^+$, and consequently $(\phi^{\star}_{\alpha})^{\star}_{\psi}\Pent \alpha$.
   
    \end{description}
    
\nd We can conclude that in any case $(\phi^{\star}_{\alpha})^{\star}_{\psi}\Pent \alpha$, as desired.

 \item[{\bf (C4).}]   If $\phi^{\star}_{\psi}\not\Pent \neg\alpha$, we have the following possible cases:

 \begin{description}
     \item[Case $\phi\Pent \neg \alpha$.] In such a case, by Proposition~\ref{revsimple}, $[\phi^{\star}_{\alpha}]=\min_{\km}([\alpha]^+)$. Now we have two possibilities: 
     \ii{i} $\min_{\km}([\alpha]^+)\subseteq [\neg \psi]^+$. In such a case, by Proposition~\ref{revsimple}, $[(\phi^{\star}_{\alpha})^{\star}_{\psi}]=\min_{\km}([\psi]^+)$, that is, $[(\phi^{\star}_{\alpha})^{\star}_{\psi}]=[\phi^{\star}_{\psi}]$, and consequently by hypothesis $(\phi^{\star}_{\alpha})^{\star}_{\psi}\not\Pent \neg\alpha$.
       \ii{ii} $\min_{\km}([\alpha]^+)\not\subseteq [\neg \psi]^+$. In such a case, by Proposition~\ref{revsimple},  $[(\phi^{\star}_{\alpha})^{\star}_{\psi}]=\min_{\km}([\alpha]^+)\cap [\psi]^+$, and consequently $(\phi^{\star}_{\alpha})^{\star}_{\psi}\not\Pent \neg\alpha$.
    
    \item[Case $\phi\not\Pent\neg\alpha$.] Then, by Proposition~\ref{revsimple},  $[\phi^{\star}_{\alpha}]=[\phi]^+\cap[\alpha]^+$. Now we have two possibilities: 
    \ii{i} $[\phi]^+\cap[\alpha]^+\subseteq [\neg \psi]^+$. 
    In such a case we have, by Proposition~\ref{revsimple},  $[(\phi^{\star}_{\alpha})^{\star}_{\psi}]=\min_{\km}([\psi]^+)$, that is, $[(\phi^{\star}_{\alpha})^{\star}_{\psi}]=[\phi^{\star}_{\psi}]$, and consequently by hypothesis $(\phi^{\star}_{\alpha})^{\star}_{\psi}\not\Pent \neg\alpha$.
        \ii{ii} $[\phi]^+\cap[\alpha]^+\not \subseteq [\neg \psi]^+$. In such a case $[(\phi^{\star}_{\alpha})^{\star}_{\psi}]=[\phi]^+\cap[\alpha]^+\cap [\psi]^+$, and consequently $(\phi^{\star}_{\alpha})^{\star}_{\psi}\not\Pent \neg \alpha$.
    
\end{description} 
We can conclude that in any case $(\phi^{\star}_{\alpha})^{\star}_{\psi}\not\Pent \neg\alpha$, as desired.
\end{description}
\nd This concludes the proof.
\end{proof}

\end{document}

\newpage 
\section{Additional material not included so far} \label{app}

\subsection{Multi-agent contraction} \label{kmbrma}

\nd We now examine contraction in a context in which every piece of information, accepted or contracted, is associated with a probability distribution. That is, any information to be contracted or revised is denoted by a pair $\tuple{\psi,\pr_{\psi}}$, comprising a formula $\psi$ and a corresponding probability distribution $\pr_{\psi}$ over the worlds. We can interpret this setting as representing a multi-agent framework (see Figure~\ref{multiagentfig}), for example, with the probability distribution $\pr_{\phi}$ in the pair $\tuple{\phi,\pr_{\phi}}$  characterising the epistemic status of the agent $A_1$, while \eg~$\tuple{\alpha,\pr_{\alpha}}$ is the epistemic status of an agent $A_2$ that asks $A_1$ to contract $\alpha$ from $\phi$ \wrt~$A_1$'s epistemic state. 
\begin{figure}
\centering 
\begin{tabular}{c}
\includegraphics[scale=0.30]{multiagent.png} 
\end{tabular}
\caption{Contraction \wrt~multi-agent system: agent $A_1$'s belief state is $\tuple{\phi,\pr_{\phi}}$, agent $A_2$'s belief state is $\tuple{\alpha,\pr_{\alpha}}$ and $A_2$ asks $A_1$ to contract $\alpha$ from its beliefs.}
\label{multiagentfig}
\end{figure}
As a consequence, in this scenario agent $A_1$ should take into account not only the new piece of information $\alpha$ but also the associated probability distribution $\pr_\alpha$. In order to do so, a major point to be addressed by $A_1$ is the `merging' of the two involved probability distributions. Namely, $A_1$'s point of view $\pr_\phi$ and $A_2$'s point of view $\pr_\alpha$. There are various possible ways of merging two  probability distributions. Here we are going to consider \emph{conflation}, as a major feature of it is that it \emph{minimises} the information loss among two probability distributions~\cite{Hill11b}.

Formally, consider formulae $\phi$ and $\psi$ and assume that $\phi$ and $\psi$ have been observed by two independent experts. $\phi$ and $\psi$ are accompanied by probability distributions $\pr^1 = \pr_{\Sigma_\phi}$ and $\pr^2 = \pr_{\Sigma_\psi}$, respectively.  The problem we want to consider here, which is a very well-known one, is how to combine the two probability distributions $\pr^i$ to get a \emph{joint probability distribution} over the joint alphabet $\Sigma = \Sigma_\phi \cup  \Sigma_\psi$ (see, \eg~\cite{Hill11b,Clemen07,Genest86,Hill11,Shafer16,Winkler81}. One way to do so is to extend the probability distributions $\pr^i$ to $\Sigma$, according to Eq.~\ref{sigmapB}, and then apply some probability combination method. The \emph{conflation} method~\cite{Hill11b,Hill11} is such a probability combination method. Among others, it minimizes the maximum loss according to Shannon's information theory~\cite{Hankerson98} in consolidating several independent distributions into a single distribution.  Specifically, 
consider probability distributions $\pr^i$ over  alphabets $\Sigma_i$. We may assume \wolog~that they have been extended to the common alphabet $\Sigma = \bigcup_i \Sigma_i$. 
We say that the probability distributions $\pr^i$ are \emph{compatible} if there is $w' \in \W_{\Sigma}$ such that
\[
\Pi_i \pr^i_{\Sigma}(w') \neq 0 \ .
\]
Let  $w \in \W_{\Sigma}$, then the \emph{conflation} of compatible probability distributions $\pr^i$, denoted $\&(\pr^1, \ldots \pr^n)$, is the probability distribution $\pr_{\Sigma}$ defined as 
\begin{eqnarray} \label{conflation}
\pr_{\Sigma}(w) & = & \frac{\Pi_i \pr^i_{\Sigma}(w)}{\sum_{w' \in \W_{\Sigma}} \Pi_i \pr^i_{\Sigma}(w')} \ .
\end{eqnarray}
%
\nd We postulate that if the  $\pr^i$ are not compatible, then 
$\pr_{\Sigma}(w) = \&(\pr^1, \ldots \pr^n)(w) = 0$, for all $w \in \W_{\Sigma}$. Note that compatibility ensures that the denominator in Eq.~\ref{conflation} is non-zero. \emph{In the following, if not stated otherwise, when we refer to conflation, we always assume that involved probability distributions are compatible.}

\begin{remark}
Please note that conflation is commutative, associative  and iterative: \ie, resp.~
\begin{eqnarray*}
    \&(\pr^1, \pr^2)  & = &  \&(\pr^2, \pr^1) \\
    \&(\&(\pr^1, \pr^2), \pr^3)  & = & \&(\pr^1, \&(\pr^2, \pr^3)) \\
    \&(\pr^1, \pr^2, \pr^3) & = &  \&(\&(\pr^1, \pr^2), \pr^3)  =  \&(\&(\&(\pr^1), \pr^2),\pr^3)
\end{eqnarray*}

%
\end{remark}

\nd On the other hand, it is easy to check that the conflation is not idempotent, \ie~$\pr \neq \&(\pr, \pr)$, for some $\pr$.

\begin{example}\label{ex_same_non_identity}
    Let $\Sigma=\{p\}$, with $\W_\Sigma=\{w,w'\}$ and $[p]_{\Sigma}=\{w\}$. Let $\pr$ be a probability distribution defined over $\Sigma$ s.t. $\pr(p)=\pr(w)=0.2$. Then
    \begin{eqnarray*}
      \&(\pr,\pr)(w) & = & \frac{\pr(w)\pr(w)}{\pr(w)\pr(w)+\pr(w')\pr(w')} \\
      & = & \frac{0.2\cdot 0.2}{0.2\cdot 0.2+0.8\cdot 0.8} \\
      & = & 0.04/0.68 \simeq 0.06 \ .
    \end{eqnarray*}
    
\nd   Hence $\pr \neq\&(\pr,\pr)$.
\end{example}

\nd The following proposition illustrates the connection between conflation and the extension of a probability distribution.

\begin{proposition} \label{confextpr}
    The conflation of a probability distribution and an uniform distribution is the extension of the former. 
\end{proposition}
\begin{proof}
    Consider alphabets $\Sigma_1, \Sigma_2$ and $\Sigma= \Sigma_1 \cup \Sigma_2$, probability distributions 
      $\pr_{\Sigma_1}$$, $$\pr^u_{\Sigma_2}$, the extension $\pr_{\Sigma}$ of $\pr_{\Sigma_1}$ to $\Sigma$ and the conflation 
    $\bar{\pr}_{\Sigma} = \&(\pr_{\Sigma_1}, \pr^u_{\Sigma_2})$. We have to show that $\bar{\pr}_{\Sigma} = \pr_{\Sigma}$.
    So, for a world $w$ over $\Sigma$, let $p = \pr^u_{\Sigma}(w)$. 
    Now, the denominator in Eq.~\ref{conflation} is 
$N  =  \sum_{w' \in \W_{\Sigma}}  \pr_{\Sigma}(w') \cdot \pr^u_{\Sigma}(w') 
    \ = \ p \cdot \sum_{w' \in \W_{\Sigma}}  \pr_{\Sigma}(w') 
    \ = \ p \cdot 1  = p >0$.
Therefore, according to Eq.~\ref{conflation}, we have
$\bar{\pr}_{\Sigma}(w) = \frac{\pr_{\Sigma}(w) \cdot \pr^u_{\Sigma}(w)}{N} = 
\frac{\pr_{\Sigma}(w) \cdot p}{p}  = \pr_{\Sigma}(w)$,
which concludes. 
\end{proof}

\nd An immediate consequence of Proposition~\ref{confextpr} is:

\begin{corollary}\label{coruni}
    The uniform distribution is an \emph{idempotent element} of conflation. That is, for a given probability distribution $\pr$ and the uniform probability $\pr^u$, 
    $\pr = \&(\pr, \pr^u)$ holds.
\end{corollary}

\nd Therefore, from Corollary~\ref{coruni}, $\pr^u = \&(\pr^u, \pr^u)$ follows.



We may even be more precise than Corollary~\ref{coruni} by stating that the uniform distribution is the \emph{unique} idempotent element of conflation.

\begin{proposition} \label{uniqueness}
    The uniform distribution is the \emph{unique} idempotent element of conflation.
\end{proposition}
\begin{proof}
Let $\pr_1,\pr_2$ be two probability distributions, where $\pr_2$ is an identity element for the conflation with $\pr_1$, \ie~$\pr_1 = \&(\pr_1,\pr_2)$. Let us assume to the contrary that $\pr_2$ is not the uniform uniform distribution. 
Therefore, that there are some worlds $w_1,w_2$ s.t. $\pr_2(w_1)>\frac{1}{|\W|}$ and $\pr_2(w_2)<\frac{1}{|\W|}$. Now, by hypothesis we have that
\[
\&(\pr_1,\pr_2)(w_i)=\pr_1(w_i)\ . 
\]
\nd That is,
\[
\frac{\pr_1(w_i)\cdot\pr_2(w_i)}{k} = \pr_1(w_i)
\]
\nd where $k=\sum_{w\in\W}\pr_1(w)\pr_2(w)$. Consequently, 
\[
\pr_2(w_i)=k
\]
\nd has to hold.



\nd In particular, 
\[
\frac{\pr_1(w_2)\cdot\pr_2(w_2)}{\pr_2(w_1)}=\pr_1(w_2)
\]
\nd follows. Now, as $\pr_2(w_1)>\frac{1}{|\W|}$, we have $\pr_2(w_1)=\frac{n}{|\W|}$, for some $n>1$, while $\pr_2(w_2)<\frac{1}{|\W|}$ implies $\pr_2(w_2)=\frac{m}{|\W|}$, for some $m<1$. Therefore,
\[
\pr_1(w_2)\cdot\frac{m}{|\W|}\cdot\frac{|\W|}{n}=\pr_1(w_2) \ .
\]
\nd That is, 
\[
\pr_1(w_2)\cdot\frac{m}{n}=\pr_1(w_2)
\]
\nd holds. Since $\frac{m}{n}<1$, this cannot be the case. Therefore, $\pr_2$ cannot be different from the uniform distribution.
\end{proof}




\nd The following example also illustrates that the conflation of a non-uniform distribution $\pr$ with itself, \ie~$\&(\pr,\pr)$, may change the order among formulae \wrt~$\pr$ and, thus, \eg~the contraction \wrt~$\pr$ may not be the same as the contraction \wrt~$\&(\pr,\pr)$.

\begin{example} \label{counterex}
Let $\Sigma=\{p,q\}$, $\W=\{pq,p\overline{q},\overline{p}\overline{q}, \overline{p}q\}$ and $\pr$ is as follows: 
$\pr(pq)=0.49$, $\pr(\overline{p}q)=0.26$, and $\pr(\overline{p}\overline{q})=0.25$. Hence we have $0.51 = \pr(\neg p) > \pr(p) = 0.49$. However, $\&(\pr,\pr)(pq)=0.6486$, $\&(\pr,\pr)(\overline{p}q)=0.1826$, and $\&(\pr,\pr)(\overline{p}\overline{q})=0.1688$, that is, $0.6486= \&(\pr,\pr)(p)>\&(\pr,\pr)(\neg p) = 0.3514$.    
\end{example}

Now, for the simplicity of exposition, given two probability distributions $\pr^1,\pr^2$, we indicate by $\pr^{1,2}$ the conflation $\&(\pr^1,\pr^2)$. 

Overall, all the definitions, propositions and corollaries in Section \ref{kmbr} continue to be valid by simply referring  to the resulting probability distribution $\pr^{1,2}$. For instance, concerning contraction, (see Section~\ref{kmbragm}), given an initial epistemic state $\tuple{\phi,\pr^1}$, and having to contract $\tuple{\alpha,\pr^2}$, we perform the contraction \wrt~the conflation $\pr = \pr^{1,2}$ and the entailment relation  $\leq_{\pr}$. Specifically, the definitions of possible reminders in Eq.~\ref{possremind} and reminders in Eq.~\ref{remind} remain then unchanged \wrt~$\pr = \pr^{1,2}$.
%
%
%
%
%
Proposition \ref{compreminder} continues to hold, the Definition of KM-based contraction (Definition~\ref{def_KMcontraction}) remains unchanged \wrt~$\pr = \pr^{1,2}$, so as do the AGM axioms $(\contr 1)$ - $(\contr 7)$ and the Propositions~\ref{compreminderB} - \ref{th_repr_contr} and Corollary~\ref{compreminderBB}. 

The same argument applies to all other sections~\ref{sbs} - \ref{ibc}.

\subsection{Others}

To this end, let's consider the simplest case shown above Remark \ref{remprocess}: after the belief revision, either the probability distribution stays exactly the same ($\overline{\pr}=\pr$) or, in case the language changes, $\pr$ is updated into $\overline{\pr}$ through probability extension. Given $\star$, we call a probability distribution $\pr$ and the associated $\km$ $\star$-\emph{stable} iff, for any formulas $\phi,\alpha$:

\[\pr_{\phi^{\star}_{\alpha}}=\left\{
\begin{array}{ll}
  \pr_{\phi}   &  \text{if }\Sigma_{\alpha}\subseteq\Sigma_{\phi}\\
  \text{the extension of }\pr_{\phi}\text{ to }\Sigma_{\alpha}\cup\Sigma_{\phi}   & \text{otherwise}
\end{array}\right.\]

\begin{proposition}\label{prop_iterated}
Let $\star$ be a revision operator based on a $\star$-stable KM $\km$. Then $\star$ satisfies {\bf (C1),(C3)}, and {\bf (C4)}.
\end{proposition}

\subsection*{Others} \label{oth}

\begin{itemize}
\item Information Loss of a contraction: $L^\contr(\phi, \alpha) = \km(\phi) - \km(\phi^{\contr}_{\alpha}) \geq 0$. Conceptually, we would like to minimise the information loss of a contraction, which means to maximise the knowledge measure of  $\phi^{\contr}_{\alpha}$.  Clearly, taking all disjuncts in Eq.\ref{eq_contraction_2} is not optimal.
Information change of a belief revision operator: $G^\star(\phi,\alpha) = \km(\phi\star \alpha) - \km(\phi)$. Maximise gain.

\item  Entropy is the is the average level of information, surprise, or uncertainty inherent to the variable's possible outcomes. That is, given a discrete random variable $X$, which takes values in the alphabet $\mathcal{X}$ and is distributed according to $\pr\colon {\mathcal {X}} \to [0,1]$
the entropy of $X$ is 
\begin{eqnarray*}
H(X) & = &  = - \sum_{x\in \mathcal{X}} \pr(x) \cdot \log_2 \pr(x)
\end{eqnarray*}
where $\Sigma$  denotes the sum over the variable's possible values.

Now, the entropy of a contraction wrt $\phi$, is the entropy of the function $\phi^\contr_{X} = \phi \contr X$ and is the average information a contraction carries: ie., let $\mathcal{X} = 2^\W$ be the powerset of $\W$ then
\begin{eqnarray*}
H(\phi^\contr_{X}) & = & E[- \log_2 \pr(\phi^{\contr}_{X})] \\
& = & - \sum_{x \in \mathcal{X}} \pr(\phi^{\contr}_{x}) \cdot \log_2 \pr(\phi^{\contr}_{x}) \\
& = & - \sum_{x \subseteq \W} \pr(\phi^{\contr}_{F_{x}}) \cdot \log_2 \pr(\phi^{\contr}_{F_{x}}) \\
& = & E[\km(\phi^{\contr}_{X})] \\
& = & \sum_{x \subseteq \W} \pr(\phi^{\contr}_{F_{x}}) \cdot \km(\phi^{\contr}_{F_{x}}) \\
& = & \sum_{x \subseteq \W} 2^{-\km(\phi^{\contr}_{F_{x}})} \cdot \km(\phi^{\contr}_{F_{x}}) \\
\end{eqnarray*}

\nd $H(\phi^\contr_{X})$ is nothing else than the average knowledge measure of a contraction \wrt~a given KB. 
The higher the better?

If we would like to range also the knowledge base, we may consider e.g., the conditional entropy (or something else)
{\tiny
\begin{eqnarray*}
H(Y^\contr_{X} \mid Y) & = & - \sum_{(x,y)\in 2^W \times 2^W} \pr(x,y) \cdot \log_2 \frac{\pr(x,y)}{\pr(y)} \\
& = &  - \sum_{(x,y)\in 2^W \times 2^W} \pr(F_{y} \contr F_{x},F_{y}) \cdot \log_2 \frac{\pr(F_{y} \contr F_{x},F_{y})}{\pr(F_{y})} \\
& = & \sum_{(x,y)\in 2^W \times 2^W} \pr(F_{y} \contr F_{x},F_{y}) \cdot (\km(F_{y} \contr F_{x},F_{y}) - \km(F_{y})) \\
& = & \sum_{(x,y)\in 2^W \times 2^W} 2^{-\km(F_{y} \contr F_{x},F_{y})} \cdot (\km(F_{y} \contr F_{x},F_{y}) - \km(F_{y}))
\end{eqnarray*}
}
\nd Define $\pr(\phi,\psi) = \pr(\phi)\cdot \pr(\psi)$ and, thus, $\km(\phi,\psi) = \km(\phi) + \km(\psi)$? If yes then we get the formula $E[\km(Y \contr X)]$ below.

\nd We may say that $\contr_1$ is better than $\contr_2$ from an information-theoretic point of view iff the conditional entropy of the former is larger (?) than the one of the latter. 

We may also think to average $H(Y^\contr_{X} \mid Y)$ over $Y$:
{\tiny
\begin{eqnarray*}
E[\km(Y \contr X)] & = & \sum_{y \subseteq \W} \pr(F_y) \cdot \sum_{X \subseteq \W} 2^{-\km(F_y \contr F_{x})} \cdot \km(F_y \contr F_{x})) \\
& = & \sum_{y \subseteq \W} 2^{-\km(F_y)} \cdot \sum_{x \subseteq \W} 2^{-\km(F_y \contr F_{x})} \cdot \km(F_y \contr F_{x})) \\
& = & \sum_{x \subseteq \W} \sum_{y \subseteq \W} 2^{-(\km(F_y) + \km(F_y \contr F_{x}))} \cdot \km(F_y \contr F_{x})) \\
& = & \sum_{x \subseteq \W} \sum_{y \subseteq \W} \pr(F_y)\cdot \pr(F_y \contr F_{x}) \cdot \km(F_y \contr F_{x}))
\end{eqnarray*}
}


\item Add def. KM belief revision operator defined in terms of KM contractor. Add obvious prop that KM belief rev. operator is a belief rev operator. Note that the disjuncts are disjoint.

\item computational complexity? consider also case  we use pareto-optimals in $\km\min$

\item Da qui i risultati erano funzionali al teroema di rappresentazione che poi non \'e venuto. Nel senso che si può dimostrare che ogni AGM-contrazione corrisponde and una KM-contrazione per una KM che soddisfi (KM1)-(KM3), ma non necessariamente (KM4). 

Quindi, se eliminiamo (KM4) possiamo ottenere tranquillamente il teorema di rappresentazione. Il risultato di suondness (Proposizione 2) resta comunque valido: ogni KM-contrazione che soddisfa (KM1)-(KM4) genera una AGM-contrazione.

\end{itemize}

\end{document}